\newtheorem{theorem}{Theorem}
\newtheorem{lemma}[theorem]{Lemma}
\newtheorem{corollary}{Corollary}
\newtheorem{prop}{Proposition}
\title{ Measuring Uncertainty Calibration }
\author{\textbf{Kamil Ciosek} \quad
\textbf{Nicolò Felicioni} \quad
\textbf{Sina Ghiassian} \\
\textbf{Juan Elenter Litwin} \quad
\textbf{Francesco Tonolini} \quad
\textbf{David Gustafsson} \\
\textbf{Eva Garcia-Martin} \quad
\textbf{Carmen Barcena Gonzalez} \quad
\textbf{Rapha\"elle Bertrand-Lalo} \\
Spotify
}
\newcommand{\vy}{{y_T}}
\newcommand{\veta}{{\eta_T}}
\newcommand{\vepsilon}{{\epsilon_T}}
\newcommand{\vheta}{{{\hat{\eta}_T}}}
\DeclareMathOperator{\Cov}{Cov}
\DeclarePairedDelimiter{\abs}{\lvert}{\rvert}
\DeclarePairedDelimiter{\paren}{(}{)}
\DeclareMathOperator{\sech}{sech}
\DeclareMathOperator{\E}{\mathbb{E}}
\DeclareMathOperator{\Var}{\mathbb{V}}
\begin{document}

\maketitle

\begin{abstract}
We make two contributions to the problem of estimating the $L_1$ calibration error of a binary classifier from a finite dataset. First, we provide an upper bound on the calibration error for any classifier where the calibration function has bounded variation. Second, we provide a method of modifying any classifier so that its calibration error can be upper bounded efficiently without significantly impacting classifier performance and without any restrictive assumptions. All our results are non-asymptotic and distribution-free. We conclude by providing advice on how to measure calibration error in practice. Our methods yield practical procedures that can be run on real-world datasets with modest overhead. 
\end{abstract}

\section{Introduction}
Machine Learning models are increasingly used to drive decision making in many tasks of practical importance. This process is notoriously susceptible \citep{cohen2004properties} to how well model outputs match probabilities of events in the real world, a property known as calibration. An important pre-requisite for ensuring good model calibration is the ability to measure it. This paper focuses precisely on this problem.

The most common approach to calibration is to place model outputs into discrete buckets \citep{zadrozny2001learning, naeini2015obtaining, kumar2019verified}. While this makes the calibration error possible to estimate using elementary tools, it introduces a dilemma. If one treats the estimate of the calibration error obtained by bucketing as a proxy for the calibration error of the classifier before the bucketing, the method is notoriously unreliable, producing different answers depending on the bucketing scheme used \citep{arrieta2022metrics}. On the other hand, if one treats the bucketing as part of the classifier, classification performance is likely to suffer since the bucketing is unknown to the training process.\footnote{For example if the classifier is a neural network, one cannot back-propagate through the bucketed outputs.} 

Another approach is to frame the problem as a frequentist hypothesis test with the null hypothesis of zero calibration error \citep{arrieta2022metrics, tygert2023calibration}. This type of method is statistically powerful in detecting deviations from perfect calibration. However, because it focuses on a ‘zero-error’ null hypothesis, it is primarily designed to decide whether a model is (nearly) perfectly calibrated—not to quantitatively compare degrees of miscalibration between models.\footnote{One might of course compare test statistics or p-values informally.} Moreover, the proposed cumulative score is not intuitively interpretable and the formal derivation requires the asymptotic regime i.e. a large-enough sample size.\footnote{\citet{arrieta2022metrics} do provide an experimental evaluation in the finite-sample regime.}

\paragraph{Contributions} 
We make two contributions to the problem of estimating the $L_1$ calibration error of a binary classifier from finite data:
\begin{enumerate}
    \item \textbf{Certified bounds under bounded variation.}  
    We show that if the calibration function has bounded variation, then one can obtain a distribution-free, non-asymptotic \emph{upper bound} on calibration error using a variant of total variation denoising. This provides the first finite-sample guarantees under such a weak structural assumption.
    \item \textbf{Certified bounds via perturbation to enforce smoothness.}  
    When no assumption on bounded variation is acceptable, we propose a simple perturbation of classifier outputs that guarantees that the calibration function has bounded derivatives. This modification leaves classification performance essentially unchanged, yet enables a kernel-based estimator that yields tighter finite-sample bounds on calibration error. 
\end{enumerate}
Both approaches are non-asymptotic and fully distribution-free. We complement the theory with experiments on synthetic and real datasets, and conclude with actionable advice on how to measure calibration error in practice.

\section{Preliminaries}
\paragraph{Calibration Function} Consider a function $\eta: [0,1] \rightarrow [0,1]$. Denote by $p(s)$ the arbitrary distribution of design points (scores). Denote by $p(y|s)$ the Bernoulli distribution with mean $\eta(s)$. We have $\eta(s) = \mathbb{E}[y | s]$, and the function $\eta$ is called the \emph{calibration function}.

\paragraph{Datasets} Consider an i.i.d dataset of tuples $\{(s_i,y_i)\}_i \sim p(s) p(y|s)$. The dataset is split into two index sets generated independently: training $T$ and validation $V$, where we use the notations $Y_T = \{y_i : i \in T \}$, $Y_V = \{y_i : i \in V \}$, $S_T = \{s_i : i \in T \}$ and $S_V = \{s_i : i \in V \}$. Both these sets are generated independently from any data used to train the classifier.

\paragraph{Main Objective} The main objective of this paper is to bound the quantity\footnote{We discuss other functionals for measuring calibration and our reasons for picking CE in appendix \ref{sec-calibration-alternatives}.} 
\begin{gather}
\boxed{\text{CE} \triangleq \mathbb{E}_{s}\left[ |s - \eta(s)| \right],}
\label{eq-ce-definition}
\end{gather}
known \citep{donghwan23} as the \emph{$L_1$ expected calibration error}, using data from finite (training and validation) datasets. In the next section, we describe the proof approach and structural assumptions on $\eta$ that make this possible.

\section{Structure of Results}

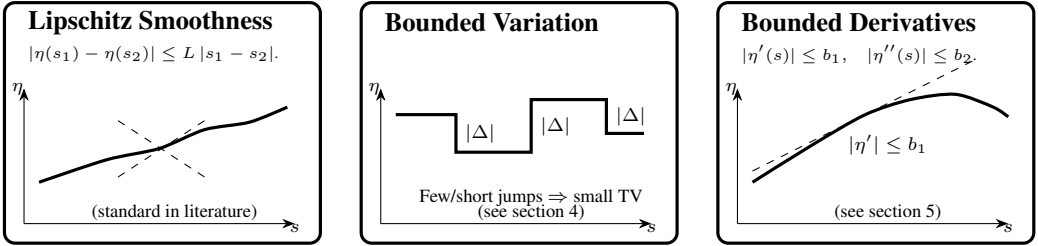
\begin{figure}[t]
    \centering
        \begin{tikzpicture}[
      font=\small,
      box/.style={rounded corners, draw=black, very thick, inner sep=10pt, minimum width=4.2cm, minimum height=3.2cm},
      title/.style={font=\bfseries},
      axis/.style={->,>=Stealth},
      label/.style={font=\scriptsize, inner sep=1pt},
      note/.style={font=\scriptsize},
      thinlbl/.style={font=\scriptsize, inner sep=0pt}
    ]
    
    \def\pad{8pt}              %
    \def\plotW{3.6cm}          %
    \def\plotH{1.8cm}          %
    
    \node[box] (lip) at (0,0) {};
    \node[title, anchor=west] at ($(lip.north west)+(6pt,-8pt)$) {Lipschitz Smoothness};
    \node[anchor=west, align=left, text width=4.1cm] at ($(lip.north west)+(6pt,-20pt)$) {
    \tiny \(\displaystyle  |\eta(s_1)-\eta(s_2)| \le L\,|s_1-s_2|\).};
    
    \begin{scope}[shift={($(lip.south west)+(\pad,\pad)$)}]
      \clip (-10pt,-10pt) rectangle (\plotW+10pt,\plotH+10pt);
    
      \draw[axis] (0,0) -- (\plotW,0) node[below, thinlbl] {$s$};
      \draw[axis] (0,0) -- (0,\plotH) node[left, thinlbl] {$\eta$};
    
      \draw[dashed] (1.25,0.65-0.03) -- (2.40,0.65+0.75-0.03);
      \draw[dashed] (1.25,0.65+0.75-0.03) -- (2.40,0.65-0.03);
    
      \draw[very thick]
        plot[smooth] coordinates {(0.2,0.55) (1.1,0.85) (1.8,1.0) (2.4,1.25) (3.0,1.35) (3.5,1.55)};
      \node[note] at (2.0,0.15) {(standard in literature) };
    \end{scope}
    
    \node[box, right=0.5cm of lip] (tv) {};
    \node[title, anchor=west] at ($(tv.north west)+(6pt,-8pt)$) {Bounded Variation};
    
    \begin{scope}[shift={($(tv.south west)+(\pad,\pad)$)}]
      \clip (-10pt,-10pt) rectangle (\plotW+10pt,\plotH+10pt);
    
      \draw[axis] (0,0) -- (\plotW,0) node[below, thinlbl] {$s$};
      \draw[axis] (0,0) -- (0,\plotH) node[left, thinlbl] {$\eta$};
    
      \draw[very thick]
        (0.2,1.45) -- (1.0,1.45)
        -- (1.0,0.95) -- (2.0,0.95)
        -- (2.0,1.65) -- (3.0,1.65)
        -- (3.0,1.2)  -- (3.5,1.2);
    
      \draw[-, thin] (1.0,1.45) -- (1.0,0.95) node[pos=0.5, right, note, fill=none] {$|\Delta|$};
      \draw[-, thin] (2.0,0.95) -- (2.0,1.65) node[pos=0.5, right, note, fill=none] {$|\Delta|$};
      \draw[-, thin] (3.0,1.65) -- (3.0,1.2)  node[pos=0.5, right, note, fill=none] {$|\Delta|$};
    
      \node[note] at (2.0,0.35) {Few/short jumps $\Rightarrow$ small TV };
      \node[note] at (2.0,0.15) {(see section \ref{sec-bv}) };

    \end{scope}
    
    \node[box, right=0.5cm of tv] (nw) {};
    \node[title, anchor=west] at ($(nw.north west)+(6pt,-8pt)$) {Bounded Derivatives};
    \node[anchor=west, align=left, text width=4.0cm] at ($(nw.north west)+(6pt,-20pt)$)
    {\tiny
    \(\displaystyle |\eta'(s)|\le b_1,\ \ |\eta''(s)|\le b_2\).};
    
    \begin{scope}[shift={($(nw.south west)+(\pad,\pad)$)}]
      \clip (-10pt,-10pt) rectangle (\plotW+10pt,\plotH+10pt);
    
      \draw[axis] (0,0) -- (\plotW,0) node[below, thinlbl] {$s$};
      \draw[axis] (0,0) -- (0,\plotH) node[left, thinlbl] {$\eta$};
    
      \draw[very thick, smooth]
        plot coordinates {(0.2,0.55) (1.0,1.05) (1.7,1.45) (2.3,1.65) (2.9,1.72) (3.4,1.55) (3.6,1.42)};
    
      \coordinate (p) at (1.7,1.45);
      \draw[dashed] ($(p)+(-1.5,-0.75)$) -- ($(p)+(1.5,0.75)$);
      \node[note] at (2.0,1.05) {$|\eta'|\le b_1$};
      \node[note] at (2.0,0.15) {(see section \ref{sec-smooth}) };
    \end{scope}
    
    \end{tikzpicture}
    \caption{Various assumptions on \(\eta(s)=\mathbb{E}[Y\mid S=s]\).}
    \label{fig-assumptions}
\end{figure}

\paragraph{The Need For Assumptions}
We begin by noting that estimating the quantity \ref{eq-ce-definition} is impossible without making structural assumptions on $\eta$. At a very minimum, for the expectation in equation~\ref{eq-ce-definition} to be well-defined, one needs measurability. However, measurability alone is not sufficient to ensure the ability to estimate $\eta$ from a finite dataset. In fact, \citet{donghwan23} have shown that even continuity (which implies measurability) is not sufficient, i.e. that there exist choices of $p$ and continuous $\eta$ such that any estimator of the quantity \ref{eq-ce-definition} gives an answer wrong by a constant even with infinite data. We therefore need assumptions of a different kind. In this paper, we focus on two alternatives for what they can be. 

\paragraph{Bounded Variation and Bounded Derivatives Assumptions} First, in section \ref{sec-bv}, we assume that $\eta$ has \emph{bounded variation}, which turns out to be enough to bound the calibration error in a theoretically principled way using a variant of bucketing (with a special process for constructing buckets).  However, because bounded variation is still a relatively weak assumption, it allows for many nearly pathological functions which the bound has to take into account, adversely impacting sample efficiency. This motivates us to look for stronger assumptions. Specifically, in section \ref{sec-smooth}, we choose to assume $\eta$ has two bounded derivatives. While this assumption ostensibly looks hard to justify, we do in fact show that we can always achieve it by applying a small perturbation to the classifier outputs. We demonstrate in section \ref{sec-exp-real} that this only affects classifier performance in a minimal way in practice. Another possible assumption common in literature is Lipschitz continuity (we compare to a Lipschitz bucketing scheme in section \ref{sec-exp-real}). All three assumptions are visualized in figure \ref{fig-assumptions}.

\paragraph{Lower vs Upper Bounds} As a side note we remark that, while the techniques we develop work for both upper and lower bounds (with high probability), we focus the presentation on upper bounds since we typically wish to certify that the calibration error is small, not that it is large. We also note that all our results are fully distribution-free i.e.\ the distribution of scores $p(s)$ can be discrete or continuous or a mixture, without any assumptions.

\paragraph{Main Proof Technique} Our proof technique consists in constructing a surrogate for $\eta$. We then bound the calibration error as a sum of the calibration error wrt.\ the surrogate and the error of the surrogate construction process. The construction of the surrogate is different depending on the assumption we are willing to make. Specifically, we use total variation denoising in case of the bounded variation assumption (section \ref{sec-bv}) and a kernel smoother in case of the bounded-derivatives assumption (section \ref{sec-smooth}).

\paragraph{Concentration}  In our proof, we also need to quantify concentration of the empirical mean of bounded random variables. We use Bernstein's inequality to do that. Because there are several versions in literature, we quote the one we used. This version matches the work of \citet{mnih2008empirical}. For a sum of $n$ i.id, bounded random variables with mean $\mu$ and support on $[0,1]$, we have $\mathbb{P}\left[
  \mu \le \bar X_n + {\text{BB}(n, \delta, \widehat{\sigma}_{X_i}^{2})}
\right] \ge 1-\delta$, where $\bar X_n := \frac{1}{n}\sum_{i=1}^{n} X_i$, $\widehat{\sigma}_{X_i}^{2} := \frac{1}{n}\sum_{i=1}^{n}\bigl(X_i-\bar X_n\bigr)^{2}$ and
\begin{gather*}
\text{BB}(n, \delta, \widehat{\sigma}_{X_i}^{2}) = \sqrt{\frac{2\,\widehat{\sigma}_{X_i}^{2}\,\ln\!\bigl(3/\delta\bigr)}{n}}
  + \frac{3\,\ln\!\bigl(3/\delta\bigr)}{n}.
\end{gather*}
Here, we used the symbol BB to denote the concentration term in the Bernstein bound. We could have equally used Hoeffding inequality here, but Bernstein is slightly sharper.

\section{Measuring Calibration Under Bounded Variation}
\label{sec-bv}
\paragraph{Bounded Variation} To formalize our assumption on $\eta$, we will need the concept of a Bounded Variation space \citep{devore1993constructive}. For a function $f$, we can define total variation on the interval $I$ as 
\begin{gather*}
    \text{TV}(f, I) := \sup \left\{ \sum_{i=1}^{n-1} \left| f(s_{i+1}) - f(s_i)\right| \right\},
\end{gather*}
where the supremum is taken over the set of all finite sequences $s_1, s_2, \dots, s_n$ contained in the interval $I$. We say that the function $f$ is of bounded variation, written $f \in \text{BV}([0,1])$ when $\text{TV}(f, [0,1]) < \infty$. Our theoretical work in this section is based on the assumption that the calibration function $\eta$ satisfies $\text{TV}(\eta, [0,1]) \leq V$ for some known constant $V$.

\paragraph{TV Denoising} Under the bounded variation assumption, we want to reconstruct $\eta$ from noisy observations. To do so, it is useful to consider the TV denoising estimate $\hat{\eta}_T$. It is obtained by solving the optimization problem
\begin{equation}
\label{eq-opt}
\vheta = \arg \min_{v \in [0,1]^{|T|}} \frac{1}{2|T|}\|\vy - v\|_2^2 + \lambda \|Dv\|_1
\end{equation}
where $\vy$ is a vector of labels in the training set and $\vheta$ is our approximation of $\eta$ on the training set. The matrix $D \in \mathbb{R}^{(|T|-1) \times |T|}$ is defined as:
\begin{equation}
(Dv)_i = v_{i+1} - v_i, \quad i = 1, 2, \ldots, |T|-1
\end{equation}
and $\lambda > 0$ is a regularization parameter. \citet{mammen1997locally} were the first to have shown results about reconstruction error of similar algorithms, i.e. the difference between $\eta_T$ and $\hat{\eta}_T$. For our purposes, we rely on the fact that, if we set $\lambda = \sqrt{\frac{1}{8|T|}\ln\frac{{4(|T|-1)}}{\delta_1}}$ in the optimization problem \eqref{eq-opt}, then with probability $1 - \delta_1$ we have
\begin{equation}
\label{eq-l1error}
\frac1{|T|} \| \veta - \vheta \|_1 \leq \text{TVB}(\delta_1).    
\end{equation}
Here, we denoted by $\veta$ the vector containing the values of $\eta$ on the training set. The term $\text{TVB}(\delta_1)$ diminishes if we increase the size of the training set. This result is proven (and a formula for $\text{TVB}(\delta_1)$ is provided) in appendix \ref{appdx-tv} as corollary \ref{corollary-l1}, using proof techniques developed by \citet{pmlr-v49-huetter16}. Equation \ref{eq-l1error} is useful because it measures how close the reconstruction $\hat{\eta}$ is to the true $\eta$ on the training set. However, in order to bound the calibration error, we need a similar guarantee on the population level, i.e. for scores sampled from $p(s)$. To do this, we first define $\hat{\eta}$ for arbitrary scores from its values on the training set
\[
\hat{\eta}(s) = \{ \hat{\eta}_T \}_{i_\star} \;\; \text{where} \;\; i_\star = \arg\max_{i} \{s_i: i \in T, s_i \leq s\}.
\]
We can then write a bound similar to \ref{eq-l1error}, but in expectation over $s$.  
\begin{gather}
\mathbb{E}_s \left[ | \eta(s) - \hat{\eta}(s)| \right] \leq 
\textnormal{TVB}(\delta_1) + \underbrace{(V + \hat{V}) \sqrt{\frac{\log(2/\delta_2)}{2|T|}} + \sqrt{\frac{1}{2|T|} \log\frac{2}{\delta_3}}}_{\textnormal{PTB}(\delta_2, \delta_3)},    
\label{eq-poptransfer}
\end{gather}
where we denoted part of the right-hand side of the bound with the notation $\textnormal{PTB}(\delta_2, \delta_3)$, which stands for population transfer bound. See Corollary \ref{corollary-population-tv-error} in Appendix \ref{appdx-tv} for the proof.

\paragraph{TV Smoothing as Bucketing} The function $\hat{\eta}$ obtained by TV denoising is piecewise constant. Therefore, a legitimate view of the smoothing process is to consider it as a very special example of a bucketing scheme, where the buckets correspond to the constant pieces of $\hat{\eta}$.

\paragraph{Bounding the Calibration Error} To upper-bound the calibration error, we rely on a simple but powerful idea. First, imagine that the function $\eta$ in the definition of the calibration error \eqref{eq-ce-definition} is known. In that case, we could rely on the Bernstein bound to compute the expectation, using the fact that the absolute value in the definition of the calibration error is bounded in $[0,1]$. However, of course in practice $\eta$ is not known directly and is observable only via noisy observations. This motivates us to proceed by first obtaining an approximation $\hat{\eta}$ to $\eta$ by solving the optimization problem \eqref{eq-opt} on the training set and then apply Bernstein on the validation set. Note that \eqref{eq-opt} (and hence $\hat{\eta}$) is computable from the dataset. Intuitively, we can then use the approximation $\hat{\eta}$ in place of $\eta$. Of course, this leads to additional approximation error, which has to be added to the upper bound. Fortunately, the approximation error can be bounded by using equation \eqref{eq-l1error}. We now proceed to make this formal.

\begin{prop}[CE Upper Bound under Bounded Variation]
Construct the surrogate calibration function $\hat{\eta}$ by solving the optimization problem \eqref{eq-opt}. With probability $1-\delta$, we have
\begin{align*}
\operatorname{CE} \leq \frac1{|T|} \sum_{i \in V} \left| s_i - \hat{\eta}(s_i) \right| + \operatorname{BB}(|V|, \delta_4, \hat{\sigma}^2_{ \{ | s_i - \hat{\eta}(s_i)| \}_{i \in V} }) + \operatorname{TVB}(\delta_1) + \textnormal{PTB}(\delta_2, \delta_3),
\end{align*}
where $\delta_1 + \delta_2 + \delta_3 + \delta_4 = \delta$ and $\hat{\sigma}^2_{ \{ | s_i - \hat{\eta}(s_i)| \}_{i \in V} })$ is the empirical variance of the term $| s_i - \hat{\eta}(s_i)|$ on the validation set.
\label{prop-1}
\end{prop}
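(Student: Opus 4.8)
My plan is to split the calibration error through the triangle inequality into (i) a discrepancy between the scores $s$ and the surrogate $\hat\eta(s)$, which is estimable on the validation set precisely because $\hat\eta$ is computed from the independent training set, and (ii) the population distance between the surrogate and the true calibration function, which is exactly what equation~\eqref{eq-poptransfer} controls. Pointwise we have $|s-\eta(s)| \le |s-\hat\eta(s)| + |\hat\eta(s)-\eta(s)|$, so integrating over $s \sim p(s)$ gives
\begin{gather*}
\operatorname{CE} \;\le\; \mathbb{E}_s\!\left[\,|s-\hat\eta(s)|\,\right] \;+\; \mathbb{E}_s\!\left[\,|\hat\eta(s)-\eta(s)|\,\right].
\end{gather*}
By~\eqref{eq-poptransfer} (Corollary~\ref{corollary-population-tv-error}), the second term is at most $\operatorname{TVB}(\delta_1)+\operatorname{PTB}(\delta_2,\delta_3)$ on an event $E_1$ of probability at least $1-\delta_1-\delta_2-\delta_3$; crucially, both $E_1$ and this bound depend only on the training set $T$, since $\hat\eta$ (and the total-variation constant $\hat V$ appearing in $\operatorname{PTB}$) are functions of $T$ alone.

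\textbf{Bernstein on the validation set.} For the first term I would condition on $T$. Since the validation indices $V$ are drawn independently of $T$, conditionally on $T$ the function $\hat\eta$ is fixed and the variables $Z_i \triangleq |s_i-\hat\eta(s_i)|$, $i\in V$, are i.i.d.\ with mean $\mathbb{E}_s[|s-\hat\eta(s)|]$ and, because $s_i,\hat\eta(s_i)\in[0,1]$, support contained in $[0,1]$. Applying the empirical Bernstein bound quoted in the preliminaries to these $|V|$ variables yields an event $E_2$ of conditional probability at least $1-\delta_4$ on which
\begin{gather*}
\mathbb{E}_s\!\left[\,|s-\hat\eta(s)|\,\right] \;\le\; \frac{1}{|V|}\sum_{i\in V}|s_i-\hat\eta(s_i)| \;+\; \operatorname{BB}\!\Big(|V|,\,\delta_4,\,\hat\sigma^2_{\{|s_i-\hat\eta(s_i)|\}_{i\in V}}\Big).
\end{gather*}
As this conditional bound is uniform over realizations of $T$, it follows that $\mathbb{P}[E_2]\ge 1-\delta_4$ unconditionally as well.

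\textbf{Union bound and conclusion.} On the event $E_1\cap E_2$, which by a union bound has probability at least $1-(\delta_1+\delta_2+\delta_3+\delta_4)=1-\delta$, I chain the three displayed inequalities to obtain exactly the claimed bound (with $n=|V|$ in the validation average, matching the normalization in the $\operatorname{BB}$ term).

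\textbf{Main obstacle.} The only genuinely delicate point is the conditioning argument in the second step: one must explicitly use that $\hat\eta$ is measurable with respect to $T$ and that $V$ is sampled independently of $T$, so that the validation-set terms are truly i.i.d.\ bounded random variables and the empirical Bernstein inequality is legitimately applicable (conditionally on $T$, hence unconditionally). Everything else — the triangle inequality, invoking~\eqref{eq-poptransfer}, verifying the $[0,1]$ support of $|s_i-\hat\eta(s_i)|$, and the four-way union bound splitting $\delta$ into $\delta_1,\dots,\delta_4$ — is routine.
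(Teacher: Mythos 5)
Your proposal is correct and follows essentially the same route as the paper's own proof: a triangle-inequality split of CE into the surrogate-vs-score term (bounded by empirical Bernstein on the validation set, using independence from the training set) and the surrogate-vs-truth term (bounded via equation~\eqref{eq-poptransfer}), finished with a union bound over $\delta_1,\dots,\delta_4$. Your explicit conditioning-on-$T$ argument is a more careful spelling-out of the step the paper states only briefly, but it is the same proof.
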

\begin{proof}
We write
\begin{gather}
    \mathbb{E}_{s}[|s - \eta(s)|] \leq \underbrace{\mathbb{E}_s[|\hat{\eta}(s) - s|]}_{\text{term I}} + \underbrace{\mathbb{E}_s[|\hat{\eta}(s) - \eta(s)|]}_{\text{term II}} 
\label{eq-proof-triangle}
\end{gather}   
and bound term I and II separately. We begin by bounding term I. Since the term under expectation is bounded in $[0,1]$ and the validation set is independent of the training set used to obtain $\hat{\eta}$, we use Bernstein inequality with probability $\delta_1$. 
We bound term II using equation \ref{eq-poptransfer}. The result follows by picking the delta terms so that $\delta_1 + \delta_2 + \delta_3 + \delta_4 = \delta$.
\end{proof}
Proposition \ref{prop-1} is important because it gives us a handle on bounding the calibration error of any classifier with a bounded variation calibration function in a form computable from observable quantities. Note that the term $\hat{V}$ (the total variation of the surrogate) is computable from data (since the TV solution is piecewise constant).

\paragraph{When is Bounded Variation Reasonable?}
Verifying that a function is of bounded variation is impossible to do from finite samples only, even if the samples had not been noisy. However, the result in this section is still useful. This is because, in scenarios of practical relevance, $\eta$ is not a completely arbitrary function but one that was obtained from training a classifier. Since the classifier is trying to distinguish positive form negative examples, it is not unreasonable to expect that $\eta$ should be monotone increasing, i.e. that higher scores correspond to a higher chance of seeing a positive label. It is well known that all monotone functions mapping to $[0,1]$ have total variation bounded by 1, i.e. we can use $V = 1$. 

\section{Measuring Calibration Under Bounded Derivatives}
\label{sec-smooth}
In certain scenarios the bounded variation assumption from the previous section is not suitable. This can be for two reasons--either we don't know the total variation bound $V$ or we want to achieve better sample efficiency. In this section, we address this need by providing a method of obtaining guarantees on calibration for a broad class of classifiers. The main idea is to start with a completely arbitrary classifier and modify it in a way which ensures the ability to bound calibration error but does not harm classification performance. 

\paragraph{Ensuring Smoothness By Perturbation} The modification amounts to perturbing the probability output of the classifier by a small amount, specified by the perturbation bandwidth. This can be done either only at inference time or at both inference time and training time, to ensure better performance. We experimentally evaluate the effect of the perturbation in section \ref{sec-exp-real} and conclude that there is almost no loss of performance relative to an unmodified classifier for realistic choices of the bandwidth. The calibration function of the perturbed classifier can be written as
\begin{equation}
\eta(s)
= \left({ \displaystyle \int_{0}^{1} \eta_{\text{orig}}(s_{\text{orig}})\,k(s \mid s_{\text{orig}})\,dp(s_{\text{orig}})}\right)
\left( { \displaystyle \int_{0}^{1} k(s \mid s_{\text{orig}})\,dp(s_{\text{orig}})}\right)^{-1} .
\end{equation}
See Appendix \ref{appdx-pert} for the derivation of this formula. This expression is a kernel-weighted average of the original calibration function $\eta_{\text{orig}}(s_{\text{orig}})$, smoothened with the kernel $k$. 

\paragraph{Kernel Choice}
Denote by $h$ the amount of the perturbation (the bandwidth). Formally, we define the score $s$ output by the perturbed classifier as a sample from the probability distribution with the PDF defined by the equation
\begin{gather}
k(s \mid s_{\text{orig}}) = \frac{1}{Z(s_\text{orig},h)} \; \text{sech}\left(\frac{s_\text{orig}-s}{h}\right),    
\label{eq-kernel-perturb}
\end{gather}
where $\text{sech}$ is the hyperbolic secant function and the normalizer $Z(s_\text{orig},h)$ is chosen so that the PDF integrates to one over $[0,1]$. By construction, the perturbed score is in the interval $[0,1]$. It turns out that using a perturbation of the form given in equation \ref{eq-kernel-perturb} has significant benefits over using a more conventional choice like the truncated Gaussian. We defer a discussion of our motivation for choosing the perturbation to appendix \ref{appdx-why-not-gaussian}. 

\paragraph{Bounded Derivatives From Perturbation} Crucially, the following Lemma, proved in appendix \ref{appdx-pert}, shows that the calibration function of the modified classifier has two bounded derivatives, regardless of the properties of the calibration function before the perturbation.
\begin{lemma}[Bounded Derivatives From Perturbation]
    For any classifier that perturbs its output scores by replacing an output $s_\text{orig}$ with a perturbed output $s$ sampled according the PDF in equation \ref{eq-kernel-perturb}, the calibration function $\eta$ is twice differentiable. Moreover its first derivative is uniformly bounded by $\frac{1}{2h}$ and its second derivative is uniformly bounded by $\frac{3}{2} \frac{1}{h^2}$. 
    \label{lemma-smooth-pert}
\end{lemma}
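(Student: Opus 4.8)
The plan is to write the calibration function of the perturbed classifier as $\eta(s)=N(s)/D(s)$, where, with $t$ denoting the original score, $N(s)=\int_0^1\eta_{\text{orig}}(t)\,k(s\mid t)\,dp(t)$ and $D(s)=\int_0^1 k(s\mid t)\,dp(t)$ (this is the kernel-weighted-average form of $\eta$ stated in the text and derived in Appendix~\ref{appdx-pert}), and then to differentiate this ratio twice. First I would dispatch the analytic technicalities. For fixed $h>0$ the normalizer $Z(t,h)=\int_0^1\sech((t-s)/h)\,ds$ depends only on $t$, not on the argument $s$ of $k(s\mid t)$, and is continuous and strictly positive on the compact set $t\in[0,1]$, hence bounded away from $0$; since $\sech$ is $C^\infty$ with all derivatives bounded on compacts, the integrands of $N$ and $D$ are smooth in $s$ with $s$-derivatives dominated by an integrable function of $t$, so differentiating under the integral sign is justified and $N,D\in C^\infty$. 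Because $\sech>0$ we have $D(s)>0$ for every $s$, so $\eta=N/D$ is $C^\infty$, in particular twice differentiable.

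Next I would introduce the weighted-average operator $A_s[\psi]:=D(s)^{-1}\int_0^1\psi(t)\,k(s\mid t)\,dp(t)$, i.e.\ expectation under the tilted probability measure $q_s(dt)\propto k(s\mid t)\,dp(t)$, so that $\eta(s)=A_s[\eta_{\text{orig}}]$. The only explicit computation needed is the logarithmic derivative $g(s,t):=\partial_s\log k(s\mid t)=\tfrac1h\tanh\left(\tfrac{t-s}{h}\right)$, satisfying $\abs{g}\le\tfrac1h$, together with $\partial_s g(s,t)=-\tfrac1{h^2}\sech^2\left(\tfrac{t-s}{h}\right)$, which lies in $[-\tfrac1{h^2},0]$, and $g^2$, which lies in $[0,\tfrac1{h^2}]$. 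Since $\partial_s k=g\,k$, the quotient rule gives $\eta'(s)=A_s[\eta_{\text{orig}}\,g]-A_s[\eta_{\text{orig}}]\,A_s[g]=\Cov_{q_s}(\eta_{\text{orig}},g)$, and applying the product and quotient rules once more—using the general formula $\tfrac{d}{ds}A_s[\Psi]=A_s[\partial_s\Psi]+A_s[\Psi g]-A_s[\Psi]\,A_s[g]$—yields, after collecting terms,
\[
\eta''(s)=\Cov_{q_s}(\eta_{\text{orig}},\partial_s g)+\Cov_{q_s}(\eta_{\text{orig}},g^2)-2\,A_s[g]\,\Cov_{q_s}(\eta_{\text{orig}},g).
\]

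To turn these identities into the claimed bounds I would use the elementary covariance inequality: for random variables $X\in[a,b]$, $Y\in[c,d]$, Cauchy--Schwarz together with Popoviciu's variance inequality gives $\abs{\Cov(X,Y)}\le\sqrt{\operatorname{Var}(X)\operatorname{Var}(Y)}\le\tfrac14(b-a)(d-c)$. With $\eta_{\text{orig}}\in[0,1]$ (range $\le1$) and $g$ of range $\le\tfrac2h$, this gives $\abs{\eta'(s)}=\abs{\Cov_{q_s}(\eta_{\text{orig}},g)}\le\tfrac14\cdot1\cdot\tfrac2h=\tfrac1{2h}$. For the second derivative, $\partial_s g$ and $g^2$ each have range at most $\tfrac1{h^2}$, so each of the first two covariances is at most $\tfrac1{4h^2}$, while $\abs{2\,A_s[g]\,\Cov_{q_s}(\eta_{\text{orig}},g)}\le2\cdot\tfrac1h\cdot\tfrac1{2h}=\tfrac1{h^2}$; summing, $\abs{\eta''(s)}\le\tfrac1{4h^2}+\tfrac1{4h^2}+\tfrac1{h^2}=\tfrac3{2h^2}$, as claimed.

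The step I expect to be the main obstacle is the second differentiation: expanding $\eta''=(u-vw)'$ with $u=A_s[\eta_{\text{orig}}g]$, $v=A_s[\eta_{\text{orig}}]$, $w=A_s[g]$ produces several covariance-type terms that must be collected correctly into the three-term form above, and one must carefully separate the contributions coming from differentiating the integrand (the $\partial_s\Psi$ piece) from those coming from differentiating the tilting measure (the $A_s[\Psi g]-A_s[\Psi]\,A_s[g]$ piece). Everything else—the smoothness and positivity facts licensing the interchange of differentiation and integration, and the covariance inequality—is routine, so I would establish those once at the outset and then treat the remainder as formal calculus with the operator $A_s[\cdot]$.
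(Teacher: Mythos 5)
Your proposal is correct and follows essentially the same route as the paper's Appendix \ref{appdx-pert}: write $\eta=N/D$, justify differentiation under the integral via the uniform lower bound on $Z$, express $\eta'$ and $\eta''$ as covariances under the kernel-tilted measure, and bound them with Cauchy--Schwarz plus Popoviciu. The only difference is cosmetic: you work with the score $g=\partial_s\log k=\tfrac1h\tanh\!\left(\tfrac{t-s}{h}\right)$ and split the second-derivative term into $\partial_s g$ and $g^2$, whereas the paper writes everything in terms of $T=\tanh\!\left(\tfrac{S-s}{h}\right)$ and $2T^2-1$ (retaining the slightly sharper $\tanh(1/h)$ factors), but both yield exactly the claimed $\tfrac{1}{2h}$ and $\tfrac{3}{2h^2}$ bounds.
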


\paragraph{Surrogate of Calibration Function via Nadaraya-Watson Smoothing}
Lemma \ref{lemma-smooth-pert} gives us the derivative bounds that allow us to approximate $\eta$ in a principled way, i.e. bound the smoothing error of a kernel smoother applied to the dataset.  We now define $\hat{\eta}$ as 
\[
\hat{\eta}(s') = \sum_{i \in T} w_{i}(s') y_i, \quad  w_{i}(s') = \frac{k'(s'| s_i)}{\sum_{j \in T} k'(s'|s_j)}.
\]
Then we have:
\begin{gather}
\mathbb{E}_{Y_T}\left[ |\hat{\eta}(s') - \eta(s') \middle|  S_T \right] \leq g_T(s').
\label{eq-smoothness-bound}
\end{gather}
where $g_T(s')$ is fully computable from data and diminishes for $s'$ in the support of $p(s)$. See Appendix \ref{appdx-smooth} for proof and the precise formula for $g_T(s')$. Unlike the choice of the perturbation kernel $k$, the choice of the smoothing kernel $k'$ used to construct the surrogate is an implementation detail. We use Epanechnikov weights with nearest-neighbor fallback and boundary renormalization, tempered by an exponent $\tau=1.2$ before re-normalizing; since only the “weights sum to one” property matters. One could also explore alternatives to using a kernel smoother. For example, we can see that Lemma \ref{lemma-smooth-pert} implies that $\eta$ is Lipschitz with constant $\frac{1}{2h}$ and use that to bound the error of a bucketing scheme. We do in fact compare to this approach as part of our experiments. 

\paragraph{Bounding The Calibration Error} We now use equation \ref{eq-smoothness-bound} to bound the calibration error. Again, we stress that all terms in the bound are possible to estimate from data.
\begin{prop}[CE Upper Bound under Bounded Derivatives]
Assuming $\eta$ is twice differentiable with bounds on the first and second derivative, we have 
\begin{align*}
&\operatorname{CE} \leq \frac{1}{|V|} \sum_{i \in V} |\hat{\eta}(s_i) - s_i| + \frac{1}{|V|} \sum_{i \in V} g_T(s_i) 
\;\; + \\ 
& \quad\quad\quad\quad\quad\quad\quad\quad\quad \operatorname{BB}(|V|, \delta_1, \hat{\sigma}^2_{ \{ | s_i - \hat{\eta}(s_i)| \}_{i \in V} }) + R \operatorname{BB}(|V|, \delta_2, \hat{\sigma}^2_{ \{ \frac1R g_T(s_i) \}_{i \in V} })
\end{align*}
with probability $1-\delta$ (picked so that $\delta_1 + \delta_2 = \delta$), with the randomness taken over the validation set, conditional on the training set, for a constant $R$ that depends\footnote{For the truncated Epanechnikov kernel, we set $R = b_1 h + \tfrac{1}{2} b_2 h^2 + \tfrac{1}{2}$ where $b_1$ and $b_2$ are uniform bounds on the first and second derivative of $\eta$.} on the kernel $k'$.
\label{prop-calibration-error-smooth}
\end{prop}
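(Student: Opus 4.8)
The plan is to mirror the proof of Proposition~\ref{prop-1}. First I would use the triangle inequality to decompose the target as
\begin{gather*}
\mathbb{E}_{s}[|s - \eta(s)|] \;\le\; \underbrace{\mathbb{E}_{s}[|s - \hat{\eta}(s)|]}_{\text{term I}} \;+\; \underbrace{\mathbb{E}_{s}[|\hat{\eta}(s) - \eta(s)|]}_{\text{term II}},
\end{gather*}
and bound the two terms separately, splitting the failure budget as $\delta_1 + \delta_2 = \delta$.

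Term I is handled exactly as in the bounded-variation case. The surrogate $\hat{\eta}$ is a function of the training set alone, hence independent of the validation set, and $|s - \hat{\eta}(s)| \in [0,1]$. Applying the Bernstein bound from the Preliminaries to the i.i.d.\ sample $\{|s_i - \hat{\eta}(s_i)|\}_{i \in V}$, conditionally on the training set, gives with probability $1-\delta_1$
\begin{gather*}
\text{term I} \;\le\; \frac{1}{|V|}\sum_{i \in V}|s_i - \hat{\eta}(s_i)| \;+\; \operatorname{BB}\!\big(|V|,\,\delta_1,\,\hat{\sigma}^2_{\{|s_i - \hat{\eta}(s_i)|\}_{i \in V}}\big).
\end{gather*}

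For term II I would first replace the unobservable pointwise error by the computable envelope $g_T$ via the smoothing guarantee \eqref{eq-smoothness-bound}: since $\mathbb{E}_{Y_T}[|\hat{\eta}(s') - \eta(s')| \mid S_T] \le g_T(s')$ for every $s'$, integrating over a fresh $s' \sim p(s)$ (which is independent of $Y_T$) yields $\mathbb{E}_{Y_T}[\text{term II} \mid S_T] \le \mathbb{E}_{s}[g_T(s)]$. It then remains to control the population average $\mathbb{E}_{s}[g_T(s)]$ by its validation-set estimate. Because $g_T$ depends only on the training set and, by the derivative bounds of Lemma~\ref{lemma-smooth-pert}, is uniformly upper bounded by the constant $R$ from the footnote, the rescaled variables $\{g_T(s_i)/R\}_{i \in V}$ lie in $[0,1]$ and are i.i.d.\ conditionally on the training set, so a second application of the Bernstein bound gives with probability $1-\delta_2$
\begin{gather*}
\mathbb{E}_{s}[g_T(s)] \;\le\; \frac{1}{|V|}\sum_{i \in V} g_T(s_i) \;+\; R\,\operatorname{BB}\!\big(|V|,\,\delta_2,\,\hat{\sigma}^2_{\{\tfrac{1}{R} g_T(s_i)\}_{i \in V}}\big).
\end{gather*}
Chaining the displays and taking a union bound over the two failure events of total probability $\delta_1 + \delta_2 = \delta$ then produces the claimed inequality.

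The main obstacle is the term II step, where one must reconcile the \emph{expectation}-over-$Y_T$ form of \eqref{eq-smoothness-bound} with the fact that $\operatorname{CE}$ is deterministic while we want a genuine high-probability certificate: since $\hat{\eta}$ depends on the training labels, $|\hat{\eta}(s') - \eta(s')| \le g_T(s')$ fails for a fixed realization of $Y_T$. I would resolve this by conditioning on the training scores $S_T$ throughout and handling $Y_T$ either through the high-probability (rather than in-expectation) version of the smoothing bound derived in Appendix~\ref{appdx-smooth}, charging its failure probability to the union bound, or by noting that the validation concentration event for $\{g_T(s_i)\}$ is measurable in $(S_T, S_V)$ so the $Y_T$-expectation can be pushed through. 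A secondary check is that $R$ really is a uniform upper bound on $g_T$ over $[0,1]$ — this should follow from combining the $\tfrac{1}{2}$ bound on a Bernoulli standard deviation with the first- and second-derivative bounds $\tfrac{1}{2h}$ and $\tfrac{3}{2h^2}$ of Lemma~\ref{lemma-smooth-pert}, matching the value $R = b_1 h + \tfrac{1}{2} b_2 h^2 + \tfrac{1}{2}$ quoted in the footnote.
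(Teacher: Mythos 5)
Your proposal matches the paper's proof essentially step for step: the same triangle-inequality decomposition, a Bernstein application to $\{|s_i-\hat{\eta}(s_i)|\}_{i\in V}$ with budget $\delta_1$, the smoothing envelope $g_T$ from Lemma~\ref{lemma-smoothing-error} with the rescaling by $R$ and a second Bernstein application with budget $\delta_2$, followed by a union bound. The subtlety you flag about the in-expectation-over-$Y_T$ form of \eqref{eq-smoothness-bound} is precisely what the paper addresses by replacing term~II with its $Y_T$-average $h(s)$ and conditioning only on $S_T$ for that term — i.e., your second proposed resolution is the paper's own.
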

\begin{proof}
We begin by introducing conditioning on the training set. Because the quantity $|s - \eta(s)|$ doesn't depend on the training set, we can write.
\[
\mathbb{E}_{s}\left[ |s - \eta(s)| \right] = \mathbb{E}_{s}\left[ |s - \eta(s)| \middle | S_T,Y_T \right]
\]
Introduce the notation
\[
h(s') = \mathbb{E}_{Y_T}\left[ |\hat{\eta}(s') - \eta(s') \middle|  S_T \right] = \mathbb{E}_{Y'_T}\left[ |\hat{\eta}(s') - \eta(s') \middle|  S_T \right],
\]
where the last equality holds because we can always rename a bound variable. We can now apply the triangle inequality to get
\begin{align}
\mathbb{E}_{s}\left[ |s - \eta(s)| \middle | S_T,Y_T \right] &\leq 
\mathbb{E}_{s}\left[ |s - \hat{\eta}(s)| \middle| S_T, Y_T \right] + \mathbb{E}_{s}\left[ h(s) | S_T, Y_T\right] \notag \\ &= \mathbb{E}_{s}\left[ |s - \hat{\eta}(s)| \middle| S_T, Y_T \right] + \mathbb{E}_{s}\left[ h(s) | S_T\right],
\label{eq-triangle}    
\end{align}
where the last equality is because $h(s)$ doesn't depend on $Y_T$.
We can now apply the Bernstein inequality twice, once for each term on the right hand side of equation \eqref{eq-triangle}. Note that both terms under the expectation are bounded in $[0,1]$. Note also that the validation set is drawn i.i.d. 

For the first term, invoking Bernstein (with half the probability budget) conditionally on $S_T,Y_T$. This gives us
\begin{gather}
\mathbb{E}_{s}\left[ |s - \hat{\eta}(s)| \middle| S_T, Y_T\right] \leq \frac{1}{|V|} \sum_{i \in V} |\hat{\eta}(s_i) - s_i| + \operatorname{BB}(|V|, \delta_1, \hat{\sigma}^2_{ \{ | s_i - \hat{\eta}(s_i)| \}_{i \in V} }).    
\label{bnd-t1}
\end{gather}

For the second term in \eqref{eq-triangle}, we proceed analogously as before, this time conditioning on $S_T$ and invoking Lemma \ref{lemma-smoothing-error}. In order to apply the Bernstein inequality, we introduce a constant $R$ that ensures that $\frac1R g_T(s') \in [0,1]$ for all $s'$. We get 
\begin{gather}
\mathbb{E}_{s,Y_T}\left[ h(s) \middle| S_T \right] \leq \frac{1}{|V|} \sum_{i \in V} g_T(s_i) + R \operatorname{BB}(|V|, \delta_2, \hat{\sigma}^2_{ \{ \frac1R g_T(s_i) \}_{i \in V} }) .
\label{bnd-t2}
\end{gather}
We get the result by plugging terms \eqref{bnd-t1} and \eqref{bnd-t2} into \eqref{eq-triangle} and using the union bound.
\end{proof}

\paragraph{Implementation Detail (Cross-Fitting).}
The theory in the paper is written for a fixed split of the data indices into training and validation. In practice, we use $K$-fold cross-fitting: randomly partition the samples into disjoint folds, and for each fold fit a surrogate eta on the complement while evaluating only on the validation folds, then we aggregate across folds. This preserves the fixed-split assumption (every validation point is scored by a model that did not train on it) while reducing variance and making full use of the data.\footnote{We apply such cross-fitting to all three methods we evaluate.}

\section{Related Work}
\paragraph{Bucketing} Variants of bucketing were first introduced in meteorology \citep{murphy1977can} and have been used to approximate the calibration error ever since the ML community started paying attention to calibration \citep{zadrozny2001learning, niculescu2005predicting, naeini2015obtaining, leathart2017probability, guo2017calibration, vaicenavicius19a, kumar2019verified}. \citet{nixon2019measuring} has criticized vanilla bucketing and proposed alternative binning schemes. 

\paragraph{Lipschitz Assumptions} \citet{vaicenavicius19a, dimitriadis2023honest, futami2024information} used the Lipschitz assumption to get a handle on variants of the calibration error. However, the assumption was not justified. Our paper is more general in two ways: bounded variation (as in section \ref{sec-bv}) is a far weaker assumption then Lipschitz and our perturbation scheme in section \ref{sec-smooth} guarantees bounded derivatives (and hence the Lipschitz property) rather than just assuming it.  \citet{zhang2020mix} assumes H\"older continuity, again without justifying it from first principles.

\paragraph{Kolmogoroff-Smirnoff and Kuiper} \citet{arrieta2022metrics, tygert2023calibration} propose a scheme for measuring calibration based on cumulative sum metrics. By relating these metrics to random walks, it is possible to derive highly efficient frequentist tests where the null hypothesis represents perfect calibration. Our work is different from these approaches in several ways. First, our measured quantity in equation \ref{eq-ce-definition} is more interpretable. Second, unlike them, we do not rely on asymptotics of any kind so our result stays theoretically valid for any sample size. Third, our technique can be used to compare two poorly calibrated models with each other while they only support comparisons against a baseline of perfect calibration. Overall, unlike KS-based metrics, which detect mis-calibration but do not yield interpretable bounds, our approach directly certifies an upper bound on CE.

\paragraph{Impossibility Results} \citet{gupta2020distribution} studied under what conditions calibration guarantees are theoretically possible, proving formally that one needs assumptions to get a handle on calibration. However, they emphasized binning. On the other hand, our work uses either bounded derivatives via perturbation or a bounded variation assumption, emphasizing reliable, certified measurement of calibration. Another type of impossibility result was provided by \citet{arrieta2022metrics}, who have argued that binning requires an infinite number of samples per bin to accomplish both asymptotic consistency and asymptotic power. However, these results were obtained without structural assumptions on the calibration function $\eta$. In addition, \citet{donghwan23} has shown that assuming continuity of $\eta$ is not enough, i.e. that there are continuous calibration functions which make calibration error impossible to estimate from finite samples. We circumvent these problems when we introduce our assumptions of bounded variation and continuity with bounded derivatives. 

\paragraph{Empirical Studies} \citet{dieye:hal-05142745} have conducted an empirical study of several calibration metrics and concluded that the log-loss (or the Brier score) are better proxies for measuring calibration than a bucketing-based ECE estimate. The methods evaluated by \citet{dieye:hal-05142745} were all heuristics (they didn't have non-asymptotic guarantees).  \citet{maalej2025evaluating} used ECE-type binning in an empirical study of various calibration techniques. Our work is different in that we provide certified upper bounds.

\paragraph{Separation Between Train and Validation} Our bounds crucially rely on the separation between the training set (used to learn a surrogate of the calibration function) and the validation set (used to measure the calibration error wrt. the surrogate). On the other hand, \citet{kangsepp2025usefulness} interpret the bin-based ECE as a method that performs fitting on the test set. While they provide interesting insights on the ECE, unlike our work, they do not provide certified bounds.   

\section{Experiments}
\label{sec-exp-real}
\paragraph{Overview of Experiments} We perform experiments in two stages. First, we establish that we can afford to perturb classifier outputs by as much as $2^{-6}$ for realistic datasets, without significantly impacting AUROC. Second, we use lemma \ref{lemma-smooth-pert} to certify this guarantees bounded derivatives, allowing us to obtain the calibration error up to about $0.02$ using about $10^7$ samples. While this may seem like a lot of data, we note that estimating the calibration error is an inherently hard problem, which scales poorly with the size of the dataset. In fact, to the best of our knowledge, our estimator is the best currently available approach with certified guarantees for the setting we consider. 

\begin{figure}[tb]
    \centering
    \includegraphics[width=0.32\textwidth]{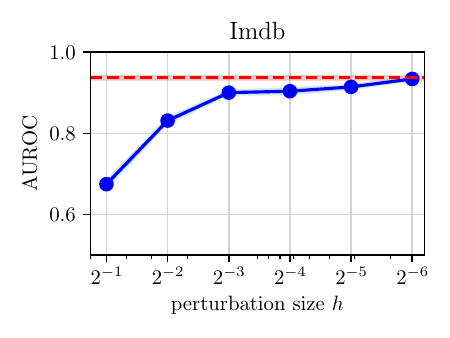}
    \includegraphics[width=0.32\textwidth]{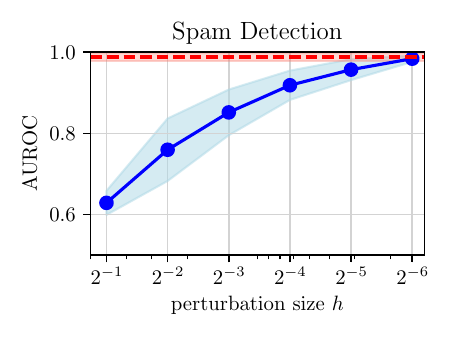} 
    \includegraphics[width=0.32\textwidth]{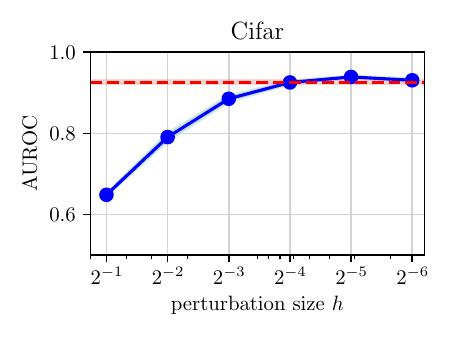} \\
    \includegraphics[width=0.45\textwidth]{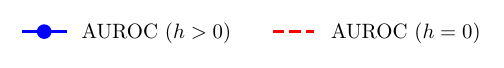}
    \caption{AUROC under Perturbation}
    \label{fig:auroc}
\end{figure}

\paragraph{Perturbation vs AUROC} In figure \ref{fig:auroc}, we measure the effect of perturbing the output probabilities of a classifier on its classification performance, measured by AUROC. We examine three classification tasks: IMDB, Spam Detection, where the classifier is a fine-tuned version of BERT as well as CIFAR where we use a vision transformer. We note that perturbing by $h=2^{-6}$ costs us almost nothing in terms of AUROC for all three datasets. Note that, to obtain figure \ref{fig:auroc}, the neural network loss was modified to account for the fact that we are applying perturbation at inference time. Details of this are explained in appendix \ref{appdx-pert-impl}. The modification is cheap, meaning that the cost of training the network with the modified loss is virtually the same as using the standard cross-entropy loss. The importance of this experiment lies in the fact that it gives a sound methodology for choosing $h$ (we can choose the largest value that does not cause degradation in AUROC).  

\begin{figure}[t]
    \centering
    \includegraphics[width=0.70\textwidth]{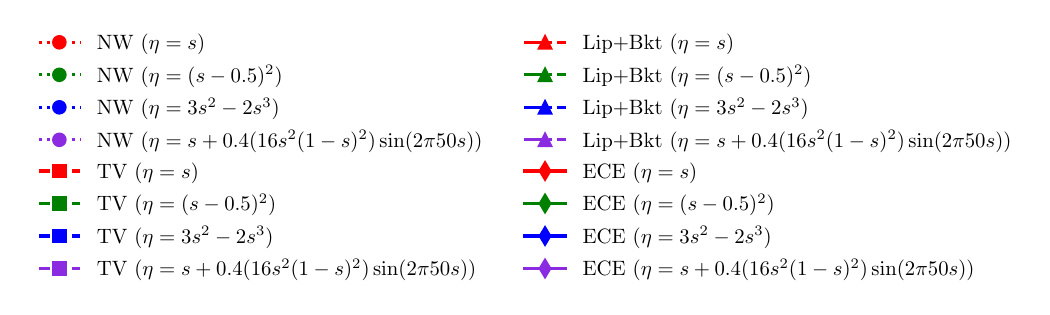}
    \vspace{-1em}
    \par\smallskip 
    \includegraphics[width=0.45\textwidth]{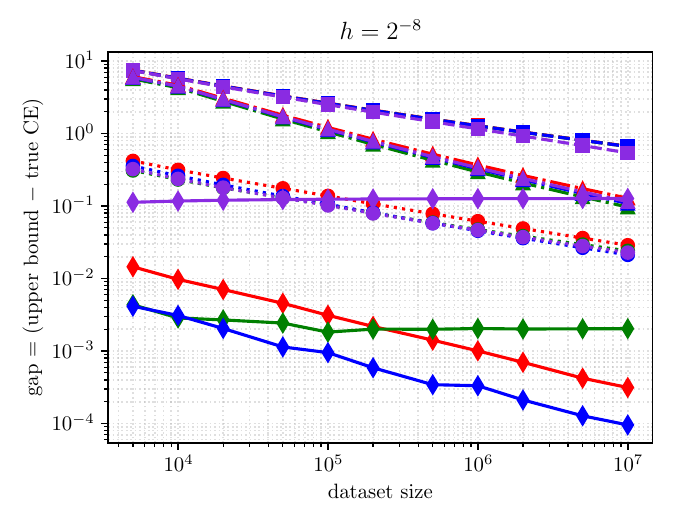}
    \hfill
    \includegraphics[width=0.45\textwidth]{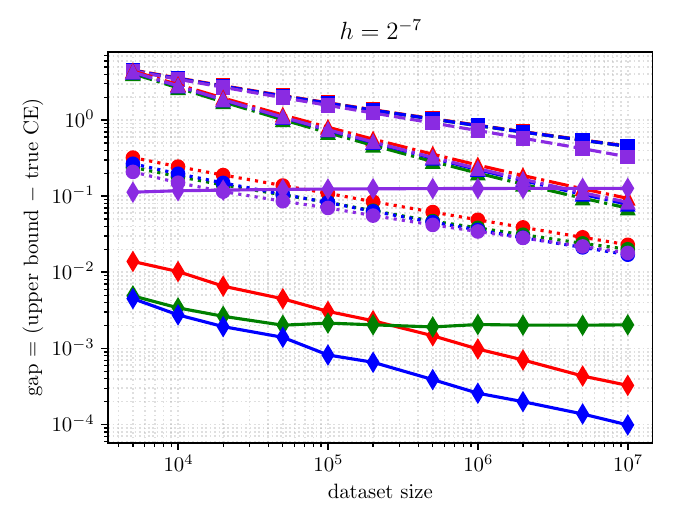}
    \vspace{-1em}
    \caption{Number of Samples Needed to Achieve Gap (Synthetic Data).}
    \label{fig:plot-se-synth}
\end{figure}

\paragraph{Upper Bound Quality vs Sample Size} The best way to evaluate an estimator is by comparing to the ground truth, which is unavailable for real-world datasets. We therefore introduced four synthetic examples of the calibration functions (see heading of figure \ref{fig:plot-se-synth} for definitions). The figure shows the sample efficiency of bounding calibration error, defined as the gap between the probabilistic upper bound and the ground truth of the calibration error. Since we used synthetic datasets for this experiment, the ground truth is known and we can evaluate the gap exactly. We evaluate four principled methods: our kernel estimator (denoted as NW in the plot), our TV denoising estimator (denoted as TV) and Lipschitz bucketing (denoted as Lip+Bkt in the plot) as well as the ECE heuristic. We observe that all three principled methods are consistent (i.e. the error decreases with increasing dataset size), with NW having the best performance. Interestingly, \emph{the ECE heuristic is extremely competitive for the first three choices of synthetic experiments, but fails completely for the fourth}, in the sense that the error stays large and does not even decrease with increasing dataset size. This behavior is typical of heuristics -- they sometimes work well but cannot be counted on for reliable estimation. We believe this underscores the importance of certified bounds.    

\paragraph{Empirical Rates For Principled Estimators} We will now zoom in onto the quantitative behavior of estimators in figure \ref{fig:plot-se-synth}. We can see that the log-log curves are straight lines. We can interpret their slopes as empirical rates. We computed those slopes, obtaining the results shown in Table \ref{tab-rates}.
\begin{table}[h]
\centering
\begin{tabular}{c!{\color{lightgray}\vrule}c!{\color{lightgray}\vrule}c! {\color{lightgray}\vrule}c}
    \bf Method & \bf Empirical Rate & \bf Theoretical Rate & \bf Assumption\\
    \arrayrulecolor{lightgray}\hline
    NW & $[-0.406, -0.213]$ & $-1/3$ & bounded derivatives \\
    TV & $[-0.423, -0.164]$ & $-1/4$ & bounded variation\\
    Lip+Bkt &  $[-0.574, -0.346]$ & $-1/3$ & Lipschitz smooth\\
    \arrayrulecolor{black}
\end{tabular}
\caption{Empirical and theoretical rates for various methods.}
\label{tab-rates}
\end{table}
 We can see that the rates obtained in practice very nearly match the theory. We note that, while our NW approach achieves the same rate as Lipschitz bucketing, our constants are better, meaning we produce significantly tighter bounds. Note that TV is still useful in setting where we only want to assume bounded variation. 

\begin{figure}[t]
    \centering
    \par\vspace{-0.4cm}
    \includegraphics[width=0.45\textwidth]{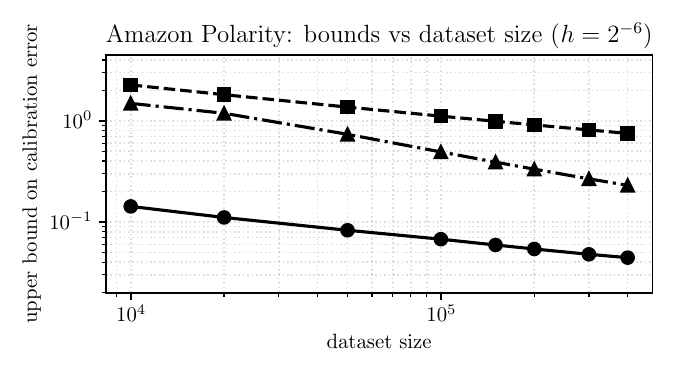}
    \includegraphics[width=0.45\textwidth]{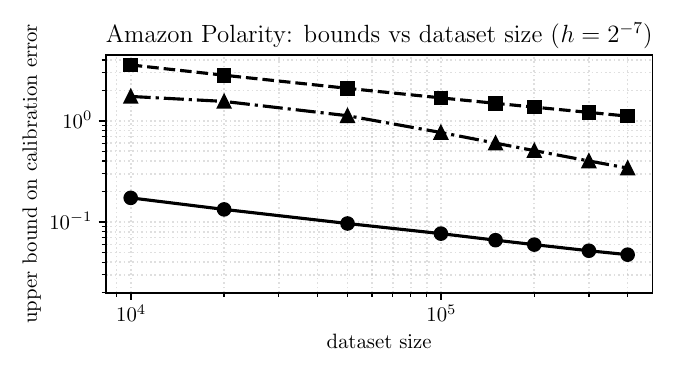} \\
    \vspace{-0.5em}
    \par\vspace{-0.4cm}
    \includegraphics[width=0.45\textwidth]{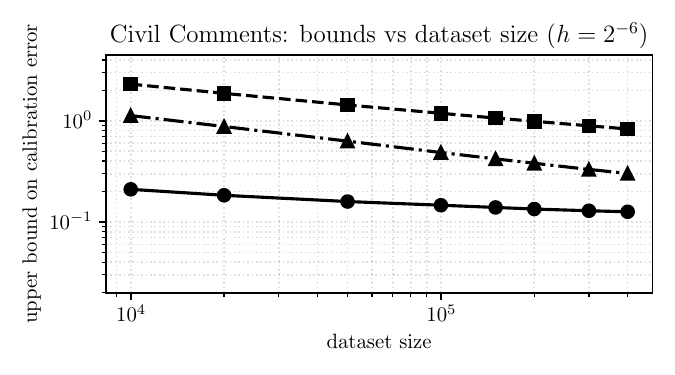}
    \includegraphics[width=0.45\textwidth]{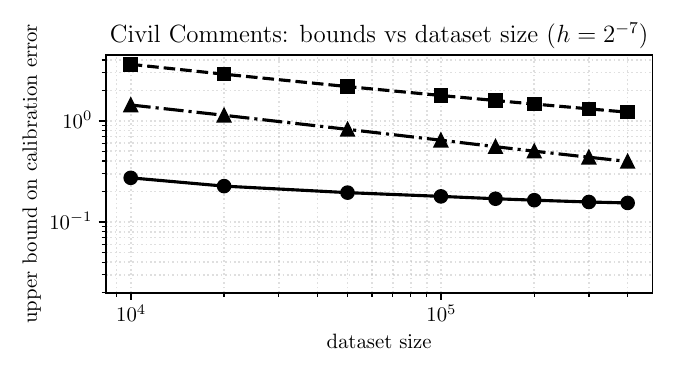} \\
    \vspace{-0.5em}
    \par\vspace{-0.4cm}
    \includegraphics[width=0.45\textwidth]{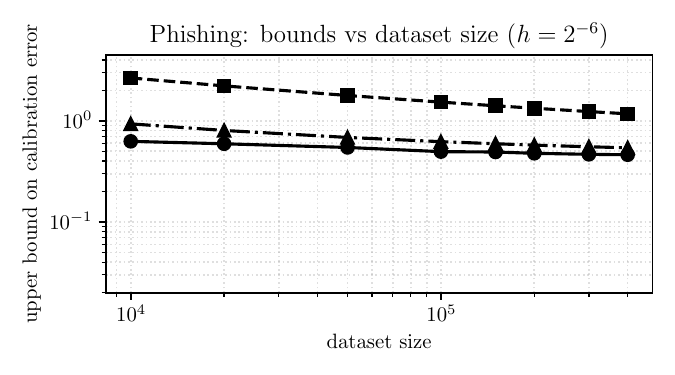}
    \includegraphics[width=0.45\textwidth]{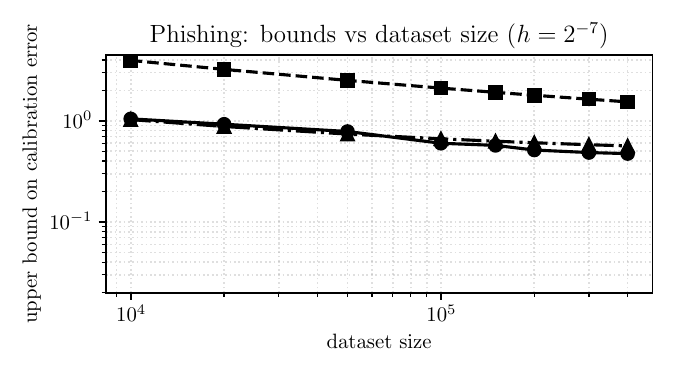} \\
    \vspace{-0.5em}
    \par\vspace{-0.4cm}
    \includegraphics[width=0.45\textwidth]{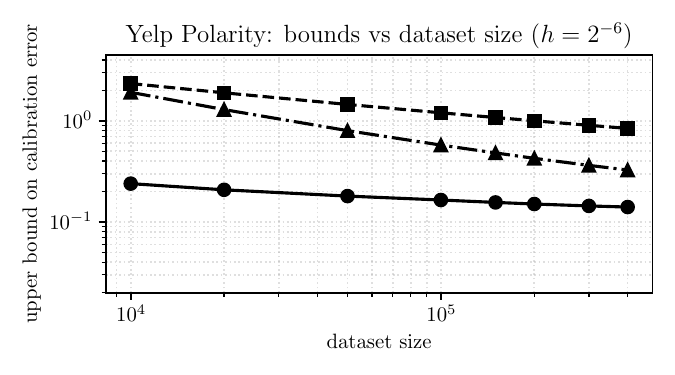}
    \includegraphics[width=0.45\textwidth]{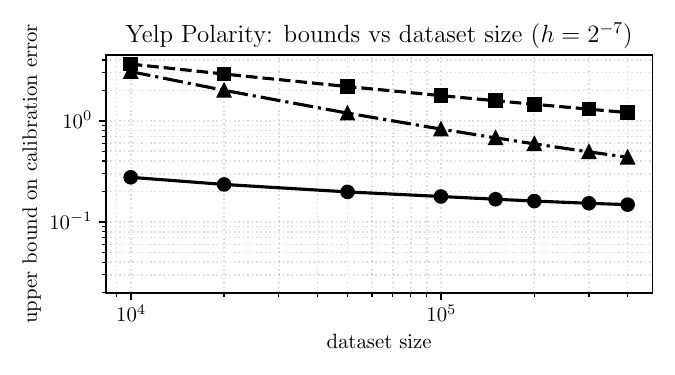} \\
    \vspace{-1.5em}
    \includegraphics[width=0.4\textwidth]{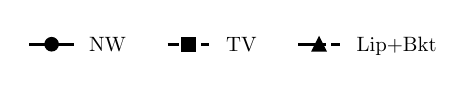}
    \vspace{-1em}
    \caption{Upper Bounds vs Dataset Size (Real Data).}
    \label{fig:plot-real-data}
\end{figure}

\paragraph{Real Data Experiments} We also wanted to establish how the three bounding techniques work on real data. To do this, we four classification datasets: Amazon Polarity, Civil Comments, Phishing, Yelp Polarity. Since the true calibration error is unknown, unlike the earlier figure, we plot the upper bounds on the calibration error directly (see figure \ref{fig:plot-real-data}). It can be seen that NW smoothing again gives the tightest bounds. 

\paragraph{Note On Repeats and Statistical Significance} We repeated the all experiments on these plots 64 times and report mean performance. Figure \ref{fig:plot-se-synth} does not contain confidence bars because they are too small to be seen. This makes our results statistically significant.     

\paragraph{Computational Efficiency} All tested algorithms have at most log-linear time complexity in practice. For kernel smoothing, our implementation uses a sliding window\footnote{The sliding window method is based on the insight that faraway points have near-zero kernel similarity.}, giving linear time complexity. For TV denoising, we used the ProxTV library \citep{barbero-icml, barbero-jmlr}, where the runtime of the 1d denoising variant we use has log-linear complexity. Lipschitz bucketing is linear even with a naive implementation. In practice, it takes about 4 minutes on a single VM to run 64 repeats of all methods for all dataset sizes up to $10^7$, generating a plot such as figure \ref{fig:plot-se-synth}.

\section{Large Language Model Usage} 
Large Language Models (ChatGPT 4, ChatGPT 5 and Gemini) were used extensively at all stages in the writing of the paper. Both the theoretical results and the code were verified by the authors. We describe the details of LLM usage in Appendix \ref{appdx-llm}.

\section{Practical Advice}
\begin{tcolorbox}
The preferred technique is to apply a small perturbation and use Proposition \ref{prop-calibration-error-smooth} (works whether or not training is aware of the perturbation). If perturbations are impossible, assume bounded variation and use Proposition \ref{prop-1} (less sample-efficient). Without either assumption, the problem is intractable in practice.  
\end{tcolorbox}

\section{Conclusions}
We have proposed two new methods for upper bounding the calibration error of a binary classifier and described their tradeoffs. We empirically demonstrated it is possible to certify an upper bound on the calibration error on a real task of practical importance. We concluded by giving practical advice on how to upper bound calibration error in practice. 

\section{Limitations}
First, while we focused on binary classifiers, we conjecture that our perturbation technique extends naturally to multiclass calibration (future work). Second, while the sample complexity of our NW method is the best we are aware of, it is still the case that one needs about $10^7$ samples to get a certified bound for the L1-calibration error to about $10^{-2}$.

\section*{Reproducibility Statement}
We provided full proofs for all our claims. Source code is available at \href{https://github.com/spotify-research/calibration}{https://github.com/spotify-research/calibration}.

\section*{Ethics Statement}
Our theoretical results make abstract statements about the quality of estimators. For the experiments, we used readily available public datasets. Neither gives rise to ethics concerns.  

\bibliography{refs}
\bibliographystyle{iclr2026_conference}

\appendix

\section{Alternative Measures of Calibration Error}
\label{sec-calibration-alternatives}

\paragraph{Expected $L_q$ norms} The $L_q$ calibration error\footnote{We use the notation $L_q$ as opposed to the more common $L_p$ because we denoted a probability distribution with the symbol $p$.} provides a generalization of the $L_1$ calibration error by measuring deviations between predicted scores and true conditional probabilities using an $L_q$ norm. Formally, it is defined as $\mathbb{E}_s[|\eta(s) - s|^q]^{1/q}$, where $\eta(s)$ denotes the calibration function. By varying $q$, one can adjust the sensitivity to large deviations: higher values of $q$ emphasize larger miscalibration errors, while lower values treat all deviations more uniformly. Among these, the $L_1$ version remains the most interpretable, as it directly quantifies the average absolute discrepancy between predicted probabilities and true event frequencies.

\paragraph{Expected KL} The expected Kullback–Leibler (KL) divergence, which appears as the calibration term in the Murphy decomposition of the cross entropy scoring rules, measures miscalibration in terms of information loss. It can be viewed as an average over scores of $\text{KL}(\text{Bernoulli}(\eta(s)) \; || \; \text{Bernoulli}(s))$, capturing how much the predictive distribution diverges from the true conditional distribution. This quantity is related to the $\mathrm{L}_1$ calibration error through its connection to total variation distance, since total variation provides a first-order (linear) approximation to the KL divergence for small deviations. As with other formulations, the $\mathrm{L}_1$ version remains the most interpretable, providing a simple and intuitive measure of average miscalibration. 

\paragraph{Further Justificaition for $L_1$-CE} The $L_1$ calibration error can be justified as a principled measure of calibration through its close connection to the total variation (TV) distance between predictive and true conditional distributions. Specifically, for each score value $s$, the absolute deviation $|\eta(s) - s|$ equals twice the total variation distance between $\mathrm{Bernoulli}(\eta(s))$ and $\mathrm{Bernoulli}(s)$. Since total variation admits the dual characterization $\mathrm{TV}(Q_1, Q_2) = \sup_{|f|\le1} |\mathbb{E}_{Q_1}[f] - \mathbb{E}_{Q_2}[f]|$, it bounds the worst-case difference in expectations of any bounded decision function $f$. Consequently, the $L_1$ calibration error measures the expected maximal deviation in decision-relevant quantities that can arise from using the predictive probabilities $s$ instead of the true conditional probabilities $\eta(s)$. In this sense, it has a direct decision-theoretic interpretation: it quantifies how miscalibration can distort the expected utility of any bounded decision rule.

\section{Total Variation Denoising}
\label{appdx-tv}
\subsection{Problem Formulation}
Given a set of training points $s_1, s_2, \dots, s_{|T|}$, we introduce the vector notations
\[
\{\veta\}_i = \eta(s_i), \;\; \{\vheta\}_i = \hat{\eta}(s_i).
\]
We consider a variant of the classical TV denoising problem where we observe binary training data $\vy \in \{0,1\}^{|T|}$ generated according to:
\begin{equation}
\vy_i \sim \text{Bernoulli} (\veta_{i} ) , \quad i = 1, 2, \ldots, |T|.
\end{equation}
We can always decompose $\vy$ as:
\begin{equation}
\label{eq-noise}
\vy = \veta + \vepsilon
\end{equation}
where $\vepsilon \in \mathbb{R}^n$ is the noise term representing the deviation of our observations from the expected values.

The TV denoising estimate $\vheta$ is obtained by solving:
\begin{equation}
\vheta = \arg \min_{\veta \in [0,1]^{|T|}} \frac{1}{2|T|}\|\vy - \veta\|_2^2 + \lambda \|D\veta\|_1
\label{eq-tv-problem}
\end{equation}
where $D \in \mathbb{R}^{(|T|-1) \times |T|}$ is the finite difference operator defined as:
\begin{equation}
(D\veta)_i = \veta_{i+1} - \veta_i, \quad i = 1, 2, \ldots, |T|-1
\end{equation}
Since the operator $D$ is linear, it can be represented as a matrix. We abuse notation slightly, also calling this matrix $D$. The variable $\lambda > 0$ in equation \ref{eq-tv-problem} is a regularization parameter. We will later derive a way to set $\lambda$ optimally as a function of observable quantities.

We note that by assumption, the total variation of the true parameter vector is bounded: $\|D\veta\|_1 \leq V$ for some known constant $V > 0$.

\subsection{Auxiliary Lemmas}
We first need to establish two useful Lemmas.
\begin{lemma}
Each row of the matrix $D^{+\top}$ has squared 2-norm at most $\frac14 |T|$. Here, $D^{+\top}$ is the transpose of the Moore-Penrose pseudo-inverse of the matrix $D$. 
\label{lem-row-bound}
\end{lemma}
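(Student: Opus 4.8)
The plan is to exploit the variational characterization of the Moore--Penrose pseudo-inverse. Write $n = |T|$ for brevity. The rows of $D^{+\top}$ are exactly the columns of $D^{+}$, so it suffices to show that every column of $D^{+}$ has squared Euclidean norm at most $n/4$. Since $D \in \mathbb{R}^{(n-1)\times n}$ has full row rank (its rows $e_{i+1}-e_i$ are linearly independent), the range of $D$ is all of $\mathbb{R}^{n-1}$, so for any $z$ the vector $D^{+}z$ is the minimum-$\ell_2$-norm solution of $Dv = z$. In particular, the $j$-th column of $D^{+}$ is $v^\star := \arg\min\{\|v\|_2 : Dv = e_j\}$, where $e_j$ is the $j$-th standard basis vector of $\mathbb{R}^{n-1}$.

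Next I would solve this constrained problem explicitly. The constraint $Dv = e_j$ says $v_{i+1} - v_i = 0$ for $i \ne j$ and $v_{j+1} - v_j = 1$; hence $v$ is constant, say equal to $a$, on the first $j$ coordinates and constant, equal to $a+1$, on the last $n-j$ coordinates. Minimising $\|v\|_2^2 = j a^2 + (n-j)(a+1)^2$ over the single free scalar $a$ is a one-variable quadratic with positive leading coefficient, so the optimum is $a = -(n-j)/n$, giving $v^\star$ equal to $-(n-j)/n$ on the first $j$ entries and $j/n$ on the remaining $n-j$ entries.

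Finally, a direct computation gives
\[
\|v^\star\|_2^2 = j\cdot\frac{(n-j)^2}{n^2} + (n-j)\cdot\frac{j^2}{n^2} = \frac{j(n-j)\bigl[(n-j)+j\bigr]}{n^2} = \frac{j(n-j)}{n},
\]
and by the AM--GM inequality $j(n-j) \le n^2/4$, so $\|v^\star\|_2^2 \le n/4 = |T|/4$. Since $j$ was arbitrary this bounds every column of $D^{+}$, hence every row of $D^{+\top}$, which is the claim.

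I do not anticipate a serious obstacle; the only point that needs care is the reduction to the minimum-norm solution, which is valid precisely because $D$ has full row rank (so no consistency issue arises and $D^{+}$ acts as a right inverse composed with projection onto $(\ker D)^\perp$). An alternative, slightly more computational route is to use $D^{+\top} = (DD^\top)^{-1}D$ together with the known closed form of the inverse of the tridiagonal matrix $DD^\top$ (which has $2$ on the diagonal and $-1$ on the first off-diagonals); but the variational argument above avoids inverting anything.
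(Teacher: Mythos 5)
Your proof is correct, and it takes a genuinely different route from the paper's. The paper works algebraically: it uses full row rank to write $D^{+\top}=(DD^\top)^{-1}D$, plugs in the explicit closed form for the inverse of the tridiagonal matrix $DD^\top$ (citing a known formula), multiplies out to get the entries $\{D^{+\top}\}_{jk}=j/|T|$ for $j<k$ and $(j-|T|)/|T|$ for $j\ge k$, and then sums squares over a row. You instead use the variational characterization of the pseudo-inverse: since $D$ has full row rank, the $j$-th column of $D^{+}$ (equivalently the $j$-th row of $D^{+\top}$) is the minimum-norm solution of $Dv=e_j$, which you solve in closed form as a one-variable quadratic, recovering exactly the same vector (value $(j-n)/n$ on the first $j$ entries and $j/n$ on the rest) without ever inverting $DD^\top$ or invoking the external tridiagonal-inverse formula. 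Your route is more self-contained and arguably cleaner, and it yields the exact squared row norm $j(n-j)/n$, which both makes the $|T|/4$ bound transparent (via AM--GM) and shows it is essentially attained only for $j\approx n/2$; the paper's route buys a reusable explicit formula for $(DD^\top)^{-1}$ but at the cost of a citation and some entry-wise bookkeeping (indeed its displayed row-norm computation contains a small typo in the second sum, writing $(j-|T|)/|T|$ where $j/|T|$ is meant). The one step you rightly flag — that the min-norm characterization applies — is fine because $Dv=e_j$ is consistent for full-row-rank $D$ and $D^{+}e_j$ lies in $(\ker D)^\perp$.
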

\begin{proof}
Because $D$ is of full row rank, we have $D^{+\top} = (D^\top (DD^\top)^{-1})^\top = (DD^\top)^{-1} D$. Let us first examine the elements of the matrix $(DD^\top)^{-1}$. The matrix $DD^\top$ is tri-diagonal and its inverse can be computed using the formula derived by \citet{da2001explicit}. Using the formula and simplifying gives
\[
\left\{ (DD^\top)^{-1} \right\} _{ij} = \frac{\min(i,j) (|T|-\max(i,j))}{|T|}.
\]
Multiplying by $D$ and again simplifying terms, we get
\begin{gather*}
\left\{ (D)^{+\top} \right\} _{jk} = \left\{ (DD^\top)^{-1} D\right\} _{jk} =  \begin{cases} \frac{j}{|T|}  & \text{for} \; j < k \\ \frac{j-|T|}{|T|} & \text{for} \; j \geq k \end{cases}. 
\end{gather*}
The squared-2 norm of row $j$ can be written as:
\begin{align*}
    \sum_{k=1}^{j} \left( \frac{j - |T|}{|T|} \right) ^2 + \sum_{k=j+1}^{|T|} \left( \frac{j}{|T|} \right) ^2 \leq \frac{|T|}{4},
\end{align*}
where the last inequality is obtained by simplifying and then considering the worst case $j$. 
\end{proof}

\begin{lemma}
Let noise $\epsilon$ be defined as in equation \eqref{eq-noise}. Then we simultaneously have 
\[
\| \Pi \epsilon \|_2 \leq t_1, \;\; \| D^{+\top} \epsilon \|_\infty \leq t_2
\]
with probability $1 - \delta$, where $t_1$ and $t_2$ are defined as in Lemma \ref{lemma-tv-denoise-l2}. The matrix $\Pi$ is defined as $\Pi = \frac1{|T|} 1 1^\top$.
\label{lem-noise-uniform-bound}
\end{lemma}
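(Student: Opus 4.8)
The plan is to prove the two bounds separately and then combine them by a union bound. Everything is done conditionally on the scores $S_T$, since $S_T$ fixes both the ordering of the indices and hence the difference operator $D$; because the tail bounds we derive will be uniform in $S_T$, they also hold unconditionally. The starting observation is structural: writing $\epsilon_i = y_i - \eta(s_i)$ with $y_i \sim \mathrm{Bernoulli}(\eta(s_i))$, the coordinates $\epsilon_1,\dots,\epsilon_{|T|}$ are independent, mean zero, and each has range $1$ (as $y_i \in \{0,1\}$). This is exactly the regime where Hoeffding-type concentration applies, and no lower bound on the per-coordinate variance is needed.

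For the first bound, note that $\Pi\epsilon = \tfrac{1}{|T|} 1 1^\top \epsilon$ is the constant vector all of whose entries equal the empirical mean $\bar\epsilon := \tfrac{1}{|T|}\sum_i \epsilon_i$, so $\|\Pi\epsilon\|_2 = \sqrt{|T|}\,|\bar\epsilon|$. Applying the two-sided Hoeffding inequality to $\bar\epsilon$ (a mean of $|T|$ independent zero-mean variables of range $1$) with failure probability $\delta/2$ gives a high-probability bound on $|\bar\epsilon|$, and hence on $\|\Pi\epsilon\|_2$; this quantity is $t_1$.

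For the second bound, fix a row $j \in \{1,\dots,|T|-1\}$ of $D^{+\top}$. Then $(D^{+\top}\epsilon)_j = \sum_k (D^{+\top})_{jk}\,\epsilon_k$ is a weighted sum of independent zero-mean variables of range $1$, with weight vector equal to row $j$ of $D^{+\top}$, which by Lemma \ref{lem-row-bound} has squared $2$-norm at most $|T|/4$. The weighted Hoeffding inequality then yields $\mathbb{P}\left[\,|(D^{+\top}\epsilon)_j| \ge u\,\right] \le 2\exp\!\left(-8u^2/|T|\right)$, and a union bound over the $|T|-1$ rows with total budget $\delta/2$ produces the uniform bound $\|D^{+\top}\epsilon\|_\infty \le t_2$. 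A final union bound over these two events gives probability at least $1-\delta$, as claimed, with $t_1,t_2$ the quantities defined in Lemma \ref{lemma-tv-denoise-l2}.

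The calculation is essentially mechanical; the only points requiring care are the bookkeeping of the conditioning on $S_T$ and the splitting of the probability budget between the two norm bounds. The one genuine modeling choice is whether to use Hoeffding or Bernstein: Bernstein would sharpen $t_1,t_2$ when many $\eta(s_i)$ lie near $0$ or $1$, but Hoeffding suffices, keeps $t_1,t_2$ in closed form, and matches the downstream use in Corollary \ref{corollary-l1}.
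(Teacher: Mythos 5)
Your proof is correct and follows essentially the same route as the paper: the identity $\|\Pi\epsilon\|_2=\sqrt{|T|}\,|\bar\epsilon|$, the row-norm bound from Lemma \ref{lem-row-bound} combined with a (weighted) Hoeffding/sub-Gaussian tail giving $2\exp(-8u^2/|T|)$ per coordinate, a union bound over the $|T|-1$ rows, and a final $\delta/2$--$\delta/2$ split reproducing exactly the paper's $t_1=\sqrt{\tfrac12\ln\tfrac4\delta}$ and $t_2=\sqrt{\tfrac{|T|}{8}\ln\tfrac{4(|T|-1)}{\delta}}$. The only difference is presentational: you invoke Hoeffding for range-$1$ variables directly, while the paper phrases the same step as sub-Gaussianity with parameter $\tfrac14$, which yields identical constants.
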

\begin{proof}
We begin with the bound on $\| \Pi \epsilon \|_2$. First we use the fact that $\Pi = \frac1{|T|} 1 1^\top$ to write
\[
\| \Pi \epsilon \|_2 = \left\| \frac1{|T|} \left(\sum\epsilon_j\right) 1 \right\|_2 = \frac1{|T|} \left|\sum\epsilon_j\right| \left\| 1 \right\|_2 = \frac1{\sqrt{|T|}} \left|\sum\epsilon_j\right|.   
\]
Now, $\epsilon_j$ are centered Bernoulli random variables, hence they are sub-Gaussian with parameter $\frac14$. Hence $\sum\epsilon_j$ is sub-Gaussian with parameter $\frac{|T|}{4}$ . This allows us to write.
\[
P(\left|\sum\epsilon_j\right| \geq t) \leq 2 \exp\left\{-\frac{t^2}{2 \frac{|T|}4}\right\} = 2 \exp\left\{-\frac{2t^2}{|T|}\right\}.  
\]
Setting the right-hand side to equal $\delta'$, we get $t = \sqrt{\frac{|T|}2 \ln\frac2{\delta'}}$. Hence., with probability $1-\delta'$, we have 
\begin{gather}
    \| \Pi \epsilon \|_2 \leq \sqrt{\frac{1}2 \ln\frac2{\delta'}}.
    \label{bnd-2norm}
\end{gather}
We now move to bound the term $\| D^{+\top} \epsilon \|_\infty$. Invoking Lemma \ref{lem-row-bound}, we know that every row of $D^{+\top}$ has squared 2-norm upper bounded by $\frac14 |T|$. Also, like before, each element of $\epsilon$ is sub-Gaussian with parameter $\frac14$. Combining these insights, and using the fact that noise terms are independent, the element $j$ of the vector $D^{+\top} \epsilon$ is sub-Gaussian with parameter $ \frac{|T|}{4} \frac14 = \frac{|T|}{16}$. This allows us to write. 
\[
P(\left|  \left\{ D^{+\top} \epsilon  \right\}_j \right | \geq t) \leq 2 \exp\left\{-\frac{t^2}{2 \frac{|T|}{16}}\right\} = 2 \exp\left\{-\frac{8t^2}{|T|}\right\}.  
\]
By the union bound, we have
\[
P( \left\| D^{+\top} \epsilon  \right\|_\infty \geq t) = P( \max_j \left|  \left\{ D^{+\top} \epsilon  \right\}_j \right | \geq t) \leq 2 (|T|-1) \exp\left\{-\frac{8t^2}{|T|}\right\}.  
\]
Setting the right-hand side to $\delta''$, we obtain $t = \sqrt{\frac{|T|}{8} \ln \frac{2(|T|-1)}{\delta''}}$. This means that with probability $1-\delta''$, we have
\begin{gather}
\| D^{+\top} \epsilon \|_\infty  \leq \sqrt{\frac{|T|}{8} \ln \frac{2(|T|-1)}{\delta''}}
\label{bnd-infnorm}   
\end{gather}

The result follows by using \eqref{bnd-2norm} and \eqref{bnd-infnorm} together, setting $\delta' = \delta'' = \delta/2$ and invoking the union bound.
\end{proof}

\subsection{Proof of Main Result on TV Denoising}
\begin{lemma}
Define $e = \| \veta - \vheta \|_2$. If we set $\lambda = \frac{t_2}{|T|}$ in the optimization problem \eqref{eq-opt}, then with probability $1 - \delta$ we have
\[
\frac1{|T|} e^2 \leq  \frac1{|T|} \left[ 2 t_1^2 + 2 t_1 \sqrt{t_1^2 + 4 t_2 V} + 4 t_2 V \right],
\]
where $t_1 = \sqrt{\frac12 \ln{\frac4\delta}}$ and $t_2 = \sqrt{\frac{|T|}{8}\ln\frac{{4(|T|-1)}}{\delta}}$.
\label{lemma-tv-denoise-l2}
\end{lemma}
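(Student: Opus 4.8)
The plan is to run the standard ``basic inequality'' argument for penalized least-squares estimators, specialized to the total-variation penalty in the style of \citet{pmlr-v49-huetter16}. Since $\hat{\eta}_T$ minimizes the objective in \eqref{eq-tv-problem} over $[0,1]^{|T|}$, and since the true vector $\eta_T$ is feasible (because $\eta$ maps $[0,1]$ into $[0,1]$), plugging $\eta_T$ in as a competitor gives
\[
\frac{1}{2|T|}\|y_T - \hat{\eta}_T\|_2^2 + \lambda\|D\hat{\eta}_T\|_1 \le \frac{1}{2|T|}\|y_T - \eta_T\|_2^2 + \lambda\|D\eta_T\|_1 .
\]
Substituting $y_T = \eta_T + \epsilon_T$ from \eqref{eq-noise}, writing $\Delta = \hat{\eta}_T - \eta_T$ (so $e = \|\Delta\|_2$), expanding the squares and cancelling the common $\|\epsilon_T\|_2^2$ term, this rearranges to
\[
\frac{1}{2|T|}e^2 \le \frac{1}{|T|}\langle \epsilon_T, \Delta\rangle + \lambda\bigl(\|D\eta_T\|_1 - \|D\hat{\eta}_T\|_1\bigr).
\]

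The heart of the proof is controlling the noise inner product $\langle\epsilon_T,\Delta\rangle$, and here I would decompose $\Delta$ with respect to the null space of $D$, which is exactly $\mathrm{span}(\mathbf 1)$. Writing $\Pi = \frac{1}{|T|}\mathbf 1\mathbf 1^\top$ for the orthogonal projector onto that null space, split $\langle\epsilon_T,\Delta\rangle = \langle\epsilon_T, \Pi\Delta\rangle + \langle\epsilon_T,(I-\Pi)\Delta\rangle$. For the first term, symmetry of $\Pi$ and Cauchy--Schwarz give $\langle\epsilon_T,\Pi\Delta\rangle = \langle\Pi\epsilon_T,\Delta\rangle \le \|\Pi\epsilon_T\|_2\, e$. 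For the second term, $(I-\Pi)\Delta$ lies in $\operatorname{row}(D)$, hence equals $D^+D\Delta$ (using $D\Pi = 0$), so that $\langle\epsilon_T,(I-\Pi)\Delta\rangle = \langle D^{+\top}\epsilon_T, D\Delta\rangle \le \|D^{+\top}\epsilon_T\|_\infty\,\|D\Delta\|_1$ by H\"older's inequality. Invoking Lemma \ref{lem-noise-uniform-bound}, on an event of probability $1-\delta$ we may simultaneously replace $\|\Pi\epsilon_T\|_2$ by $t_1$ and $\|D^{+\top}\epsilon_T\|_\infty$ by $t_2$.

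The remainder is algebraic. With the choice $\lambda = t_2/|T|$, the $t_2$-weighted contributions merge, and the estimate becomes
\[
\frac{1}{2|T|}e^2 \le \frac{1}{|T|}\Bigl(t_1 e + t_2\bigl(\|D\Delta\|_1 + \|D\eta_T\|_1 - \|D\hat{\eta}_T\|_1\bigr)\Bigr).
\]
The triangle inequality $\|D\Delta\|_1 \le \|D\hat{\eta}_T\|_1 + \|D\eta_T\|_1$ collapses the bracket to $2\|D\eta_T\|_1 \le 2V$ by the bounded-variation assumption. Multiplying through by $2|T|$ yields the scalar quadratic inequality $e^2 \le 2t_1 e + 4t_2 V$; solving $e^2 - 2t_1 e - 4 t_2 V \le 0$ for the positive root gives $e \le t_1 + \sqrt{t_1^2 + 4t_2 V}$, and squaring and dividing by $|T|$ produces exactly the claimed bound. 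The one point I expect to need care is the linear-algebra bookkeeping around the pseudo-inverse: verifying that $\Pi$ is precisely the orthogonal projector onto $\ker D$, so that $(I-\Pi)\Delta$ sits in $\operatorname{row}(D)$ and $D^+D$ acts there as the identity; once that and Lemma \ref{lem-noise-uniform-bound} are granted, everything else is routine manipulation.
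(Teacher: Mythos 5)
Your proposal is correct and follows essentially the same route as the paper's proof: the basic inequality from optimality, the decomposition of the noise inner product via the projector $\Pi$ onto $\ker D$ and the identity $D^\top D^{+\top} = I-\Pi$, the H\"older/Cauchy--Schwarz bounds combined with Lemma \ref{lem-noise-uniform-bound}, the choice $\lambda = t_2/|T|$ with the triangle inequality collapsing the penalty terms to $2V$, and the solution of the resulting scalar quadratic in $e$. No gaps; the linear-algebra point you flag is exactly what the paper verifies.
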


The proof follows the technique proposed by \citet{pmlr-v49-huetter16}. We make all constants explicit.

\label{sec-main-proof-tv}
\begin{proof}
Using equation \eqref{eq-opt} and optimality, we get.
\[
\frac{1}{2n}\|\vy - \vheta\|_2^2 + \lambda \|D \vheta \|_1 \leq \frac{1}{2|T|}\|\vy - \veta\|_2^2 + \lambda \|D \veta \|_1  
\]
Using the fact that $\vy = \veta + \vepsilon$, we get
\[
\frac{1}{2|T|}\|\vheta - \veta - \vepsilon\|_2^2 + \lambda \|D \vheta \|_1 \leq \frac{1}{2|T|}\|\vepsilon\|_2^2 + \lambda \|D \veta \|_1.
\]
Expanding the squared term on the left hand side, we get
\[
\frac{1}{2|T|}\|\vheta - \veta\|_2^2 - \frac1{|T|} (\vheta - \veta)^\top \vepsilon + \frac1{2|T|} \|\vepsilon\|_2^2 + \lambda \|D \vheta \|_1 \leq \frac{1}{2|T|}\|\vepsilon\|_2^2 + \lambda \|D \veta \|_1.
\]
Canceling the term $\frac1{2|T|} \|\vepsilon\|_2^2$, we get 
\[
\frac{1}{2|T|}\|\vheta - \veta\|_2^2 - \frac1{|T|} (\vheta - \veta)^\top \vepsilon +  \lambda \|D \vheta \|_1 \leq \lambda \|D \veta \|_1.
\]
Rearranging terms, we get
\begin{gather}
\frac{1}{|T|}\|\vheta -  \veta\|_2^2 \leq \frac2{|T|} (\vheta - \veta)^\top \epsilon -  2\lambda \|D \vheta \|_1 + 2\lambda \|D \veta \|_1.    
\label{eq-intermediate}
\end{gather}
We now focus on the term $(\vheta - \veta)^\top \vepsilon$. Denote by $\Pi$ the projection on the null-space of $D$ (the space spanned by the vector of all ones). Noting that $D^\top D^{+\top} = (I - \Pi)$, we write 
\begin{align*}
    (\vheta - \veta)^\top \epsilon &= (\vheta - \veta)^\top (I - \Pi) \epsilon + (\vheta - \veta)^\top \Pi \vepsilon \\
    &= (\vheta - \veta)^\top D^\top D^{+\top} \vepsilon + (\vheta - \veta)^\top \Pi \vepsilon \\
    &\leq \|D (\vheta - \veta) \|_1 \|D^{+\top} \vepsilon\|_\infty + \| (\vheta - \veta) \|_2 \| \Pi \vepsilon \|_2. 
\end{align*}
Here, the last inequality follows by using H\"older inequality twice. Invoking Lemma \ref{lem-noise-uniform-bound}, we get
\[
(\vheta - \veta)^\top \vepsilon \leq \|D (\vheta - \veta) \|_1 t_2 + \| (\vheta - \veta) \|_2 t_1  
\]
Plugging back into equation \eqref{eq-intermediate}, we get
\[
\frac{1}{|T|}\|\vheta -  \veta\|_2^2 \leq 
\frac2{|T|}  \|D (\vheta - \veta) \|_1 t_2 + \frac2{|T|} \| (\vheta - \veta) \|_2 t_1  
-  2\lambda \|D \vheta \|_1 + 2\lambda \|D \veta \|_1.    
\]
Rearranging terms, we get
\[
\frac{1}{|T|}\|\vheta -  \veta\|_2^2 -  \frac2{|T|} \| (\vheta - \veta) \|_2 t_1 \leq 
\frac2{|T|}  \|D (\vheta - \veta) \|_1 t_2   
-  2\lambda \|D \vheta \|_1 + 2\lambda \|D \veta \|_1.    
\]
Using the fact that $\|D (\vheta - \veta) \|_1 \leq \|D \vheta \|_1  + \|D  \veta \|_1$, we get
\[
\frac{1}{|T|}\|\vheta -  \veta\|_2^2 -  \frac2{|T|} \| (\vheta - \veta) \|_2 t_1 \leq 
\frac2{|T|}  (\|D \vheta \|_1  + \|D  \veta \|_1) t_2   
-  2\lambda \|D \vheta \|_1 + 2\lambda \|D \veta \|_1.    
\]
Setting $\lambda = \frac{t_2}{|T|}$ and canceling terms, we get
\[
\frac{1}{|T|}\|\vheta -  \veta\|_2^2 -  \frac2{|T|} \| (\vheta - \veta) \|_2 t_1 \leq 
\frac2{|T|}  \|D  \veta \|_1 t_2   
+ 2 \frac{t_2}{|T|} \|D \veta \|_1 = \frac{4t_2}{|T|} \|D  \veta \|_1.    
\]
Using the assumption of bounded total variation, we get $\|D  \veta \|_1 \leq V$, so that 
\[
\frac{1}{|T|}\|\vheta -  \veta\|_2^2 -  \frac2{|T|} \| (\vheta - \veta) \|_2 t_1 \leq 
 \frac{4t_2}{|T|} V.    
\]
Denoting $e = \|\vheta -  \veta\|_2$, we have
\[
e^2 - 2 e t_1 - 4t_2 V \leq 0. 
\]
This implies
\[
e \leq t_1 + \sqrt{t_1^2 + 4 t_2 V}.
\]
Squaring and dividing by $|T|$, we get
\begin{align*}
\frac1{|T|} e^2 &\leq \frac1{|T|} \left[ t_1^2 + 2 t_1 \sqrt{t_1^2 + 4 t_2 V}  + t_1^2 + 4 t_2 V \right] \\ &= \frac1{|T|} \left[ 2 t_1^2 + 2 t_1 \sqrt{t_1^2 + 4 t_2 V} + 4 t_2 V \right].    
\end{align*}
\end{proof}
\subsection{Moving to the L1 norm}
Using Lemma \ref{lemma-tv-denoise-l2} and the Cauchy-Schwarz inequality, we can make the following corollary.
\begin{corollary}
If we set $\lambda = \frac{t_2}{|T|}$ in the optimization problem \eqref{eq-opt}, then with probability $1 - \delta$ we have
\[
\frac1{|T|} \| \veta - \vheta \|_1 \leq \underbrace{\sqrt{\frac1{|T|} \left[ 2 t_1^2 + 2 t_1 \sqrt{t_1^2 + 4 t_2 V} + 4 t_2 V \right]}}_{\textnormal{TVB}(\delta)},
\]
where $t_1 = \sqrt{\frac12 \ln{\frac4\delta}}$, $t_2 = \sqrt{\frac{|T|}{8}\ln\frac{{4(|T|-1)}}{\delta}}$ and we have denoted the right-hand-side with $\textnormal{TVB}(\delta)$.
\label{corollary-l1}
\end{corollary}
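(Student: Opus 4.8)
The plan is to obtain Corollary \ref{corollary-l1} as a one-line consequence of the $\ell_2$ reconstruction bound already established in Lemma \ref{lemma-tv-denoise-l2}, by a single application of the norm comparison inequality on the very same high-probability event. No new randomness, no new choice of $\lambda$, and no union bound are introduced; everything is inherited verbatim from Lemma \ref{lemma-tv-denoise-l2}.

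First I would invoke the standard fact that for any vector $x \in \mathbb{R}^m$ one has $\|x\|_1 \le \sqrt{m}\,\|x\|_2$, which is exactly Cauchy--Schwarz applied to $|x|$ and the all-ones vector. Taking $m = |T|$ and $x = \veta - \vheta$ yields $\|\veta - \vheta\|_1 \le \sqrt{|T|}\,\|\veta - \vheta\|_2 = \sqrt{|T|}\,e$, where $e = \|\veta - \vheta\|_2$ is precisely the quantity controlled by Lemma \ref{lemma-tv-denoise-l2}.

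Next I would divide through by $|T|$ to get $\frac{1}{|T|}\|\veta - \vheta\|_1 \le \frac{1}{\sqrt{|T|}}\,e = \sqrt{\tfrac{1}{|T|}\,e^2}$. On the event of probability $1-\delta$ furnished by Lemma \ref{lemma-tv-denoise-l2} (with $\lambda = t_2/|T|$), we have $\frac{1}{|T|}e^2 \le \frac{1}{|T|}\bigl[2t_1^2 + 2t_1\sqrt{t_1^2 + 4t_2 V} + 4t_2 V\bigr]$, and since the right-hand side is nonnegative (as $t_1, t_2, V \ge 0$) the square root is well-defined and monotone, so applying it to both sides gives $\frac{1}{|T|}\|\veta - \vheta\|_1 \le \sqrt{\tfrac{1}{|T|}\bigl[2t_1^2 + 2t_1\sqrt{t_1^2 + 4t_2 V} + 4t_2 V\bigr]}$, which is the claimed bound $\textnormal{TVB}(\delta)$.

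The main point to be careful about is bookkeeping rather than any genuine obstacle: one must ensure that the high-probability event, the confidence level $1-\delta$, and the prescribed value of $\lambda$ all carry over unchanged from Lemma \ref{lemma-tv-denoise-l2}, since the corollary only transforms a deterministic inequality that holds on that event. The analytical content was entirely absorbed into proving Lemma \ref{lemma-tv-denoise-l2}, so this step is essentially free.
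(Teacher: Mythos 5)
Your proposal is correct and is exactly the paper's argument: the paper derives the corollary from Lemma \ref{lemma-tv-denoise-l2} via the Cauchy--Schwarz norm comparison $\|x\|_1 \le \sqrt{|T|}\,\|x\|_2$, on the same high-probability event with the same choice $\lambda = t_2/|T|$. No differences worth noting.
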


\subsection{Bounded Variation Population Transfer}
One limitation of Corollary \ref{corollary-l1} is that it only holds on the training set. We now use bounded variation to relate this to a population-based expectation. We start by proving the following Lemma.

\begin{lemma}
Consider a function $g: [0,1] \rightarrow [0,1]$ of bounded variation $V_g$. Consider a sample $s_1,\dots,s_n$ sampled from a distribution with CDF $F$ and a empirical CDF $F_n$, we have the following with probability $1-\delta$. 
\[
\left| \mathbb{E}_s[g(s)]  - \frac1n \sum_{i=1}^n g(s_i) \right| \leq V_g \|F - F_n\|_\infty + \sqrt{\frac{1}{2n} \log\frac{2}{\delta}}
\]
\label{lemma-population-transfer}
\end{lemma}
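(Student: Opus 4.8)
The plan is to notice that both sides of the inequality are integrals of $g$ against a cumulative distribution function --- $\mathbb{E}_s[g(s)] = \int_{[0,1]} g\,dF$ and $\tfrac1n\sum_{i=1}^n g(s_i) = \int_{[0,1]} g\,dF_n$ --- so that the quantity to control is the Stieltjes integral $\int_{[0,1]} g\,d(F-F_n)$, and then to push the ``derivative'' off $g$ and onto $F-F_n$, which is uniformly small over $[0,1]$.

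First I would reduce to the monotone case by the Jordan decomposition: write $g = c + g^{\uparrow} - g^{\downarrow}$ with $g^{\uparrow},g^{\downarrow}$ non-decreasing, right-continuous, and normalized so $g^{\uparrow}(0)=g^{\downarrow}(0)=0$; then $g^{\uparrow}(1)+g^{\downarrow}(1) = \mathrm{TV}(g,[0,1]) \le V_g$, and the additive constant $c$ drops out since $F$ and $F_n$ carry equal total mass on $[0,1]$. For a single non-decreasing right-continuous $G$ with $G(0)=0$, I would use $G(s) = \int_{[0,1]} \mathbf{1}\{t\le s\}\,dG(t)$ and Fubini: $\mathbb{E}_s[G(s)] = \int_{[0,1]} \bigl(1-F(t^-)\bigr)\,dG(t)$ and $\tfrac1n\sum_i G(s_i) = \int_{[0,1]} \bigl(1-F_n(t^-)\bigr)\,dG(t)$, so their difference is $\int_{[0,1]} \bigl(F_n(t^-)-F(t^-)\bigr)\,dG(t)$, which has absolute value at most $\bigl(\sup_t|F_n(t^-)-F(t^-)|\bigr)\int_{[0,1]} dG \le \|F-F_n\|_\infty\,G(1)$. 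Adding this over $G=g^{\uparrow}$ and $G=g^{\downarrow}$ yields the deterministic estimate $\bigl|\mathbb{E}_s[g(s)]-\tfrac1n\sum_i g(s_i)\bigr| \le \|F-F_n\|_\infty\bigl(g^{\uparrow}(1)+g^{\downarrow}(1)\bigr) \le V_g\,\|F-F_n\|_\infty$. The stated high-probability bound then follows a fortiori, since it merely adds the nonnegative slack $\sqrt{\tfrac1{2n}\log\tfrac2\delta}$; that slack is kept in the statement only so the lemma lines up with the later step in which $\|F-F_n\|_\infty$ is itself bounded by the DKW inequality $\|F-F_n\|_\infty \le \sqrt{\tfrac1{2n}\log\tfrac2\delta}$ (valid with probability $1-\delta$), as in equation~\ref{eq-poptransfer}.

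The main obstacle is not analytic --- the core is the single H\"older-type inequality $\bigl|\int_{[0,1]} (F_n-F)(t^-)\,dG(t)\bigr| \le \|F-F_n\|_\infty\,G(1)$ --- but the measure-theoretic bookkeeping needed to make the identities rigorous without any continuity assumption on $g$ or on $p(s)$. One should work with the right-continuous modification of $g$ and check that passing to it leaves both $\mathbb{E}_s[g(s)]$ and $\tfrac1n\sum_i g(s_i)$ essentially unchanged even when $p(s)$ has atoms at jump points of $g$; one should be consistent about one-sided limits ($F(t^-)$ versus $F(t)$, $\mathbf{1}\{t\le s\}$ versus $\mathbf{1}\{t<s\}$), none of which affects the conclusion because every relevant supremum is dominated by $\|F-F_n\|_\infty$; and, if one prefers the integration-by-parts phrasing $\int_{[0,1]} g\,d(F-F_n) = \bigl[\,g\cdot(F-F_n)\,\bigr]_0^1 - \int_{[0,1]} (F-F_n)(t^-)\,dg(t)$, one must dispose of the cross term $\sum_t \Delta g(t)\,\Delta(F-F_n)(t)$ and confirm the boundary term vanishes ($F(1)=F_n(1)=1$ at the top, and the normalization kills the bottom). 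All of these are routine once flagged.
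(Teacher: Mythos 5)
Your proposal is correct in substance but takes a genuinely different route from the paper. The paper integrates by parts directly: it writes $\int g\,d(F-F_n) = -\int (F-F_n)\,dg \pm g(0)\,(F-F_n)(0)$ (the boundary term at $1$ vanishes because $F(1)=F_n(1)=1$), bounds $\left|\int (F-F_n)\,dg\right| \le V_g\|F-F_n\|_\infty$ via the variation of $g$, and controls the leftover boundary term $|(F-F_n)(0)|$ by a Hoeffding bound---that Hoeffding step is precisely where the additive $\sqrt{\tfrac{1}{2n}\log\tfrac{2}{\delta}}$ term and the ``with probability $1-\delta$'' qualifier in the statement come from. You instead eliminate all boundary terms by normalizing the Jordan decomposition ($g = c + g^{\uparrow} - g^{\downarrow}$ with $g^{\uparrow}(0)=g^{\downarrow}(0)=0$, the constant cancelling since $F$ and $F_n$ carry equal total mass) and push the integral onto $dG$ by Fubini, arriving at the purely deterministic inequality $\bigl|\mathbb{E}_s[g(s)] - \tfrac1n\sum_i g(s_i)\bigr| \le V_g\,\|F-F_n\|_\infty$, from which the stated lemma follows a fortiori because the extra term is nonnegative slack. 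Your version is in fact slightly stronger (no probability budget is spent in this lemma), and it sidesteps the cross-term bookkeeping of Stieltjes integration by parts when $g$ and $F_n$ share discontinuities, which the paper's proof glosses over.

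One caveat: your suggestion to pass to the right-continuous modification of $g$, on the grounds that this leaves $\mathbb{E}_s[g(s)]$ and $\tfrac1n\sum_i g(s_i)$ essentially unchanged even when $p(s)$ has atoms at jump points of $g$, is not correct as stated---for example, with $g = \mathbf{1}\{s = a\}$ and an atom of $p$ at $a$, the modification shifts $\mathbb{E}_s[g(s)]$ by the full atom mass. The repair is to leave $g$ untouched: for each monotone piece $G$ use the layer-cake representation $G(s) = G(0) + \int_{G(0)}^{G(1)} \mathbf{1}\{G(s) > u\}\,du$, observe that each superlevel set $\{s: G(s)>u\}$ is an interval of the form $(t_u,1]$ or $[t_u,1]$, whose $F$- and $F_n$-probabilities differ by at most $\|F-F_n\|_\infty$ (using $\sup_t |F_n(t^-)-F(t^-)| \le \|F_n-F\|_\infty$ for the closed case), and integrate over $u$ to get $(G(1)-G(0))\,\|F-F_n\|_\infty$ for an arbitrary monotone $G$; summing the two Jordan pieces gives $V_g\|F-F_n\|_\infty$. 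With that substitution your argument is complete and fully general.
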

\begin{proof}
Consider the integral
\[
\int g dF - \int g dF_n = \int g d(\underbrace{F-F_n}_H),
\]
where we introduced the notation $H = F-F_n$. Integrating by parts, we obtain
\[
\int g dH = -\int H dg + \underbrace{g(1) H(1)}_0 - g(0) H(0),  
\]
where the last two terms vanish because $H(1) = 0$. This gives
\[
\int g d (F - F_n) = - \int (F - F_n) dg + g(0) H(0)
\]
Taking absolute values, we get
\[
\left| \int g d (F - F_n) \right| + \left| g(0) H(0) \right| \geq \left| \int (F - F_n) dg \right| .
\]
Using the fact that $g(0) \in [0,1]$, we get
\[
\left| \int g d (F - F_n) \right| + \left|H(0) \right| \geq \left| \int (F - F_n) dg \right| .
\]
By the Hoeffding bound, we have with probability $1-\delta$ that
\[
\left|H(0) \right| \leq \sqrt{\frac{1}{2n} \log\frac{2}{\delta}}
\]

Using the total variation assumption we have,
\[
\left| \int g d (F - F_n) \right|  \leq V_g \|F - F_n\|_\infty
\]
Combining the last three equaitons, we get the desired result.
\end{proof}

Note that the term $\sqrt{\frac{1}{2n} \log\frac{2}{\delta}}$ in lemma \ref{lemma-population-transfer} is only needed if $F$ assigns nonzero probability mass to the zero score.

We now combine Lemma \ref{lemma-population-transfer} with corollary \ref{corollary-l1}, obtaining a bound useful for our problem.

\begin{corollary} With probability $1-\delta$, where $\delta_1 + \delta_2 + \delta_3 = \delta$
\[
\mathbb{E}_s \left[ | \eta(s) - \hat{\eta}(s)| \right] \leq 
\underbrace{\frac1{|T|} \sum_{i \in T} |\eta(s_i) - \hat{\eta}(s_i)|}_{\leq \textnormal{TVB}(\delta_1) \; \textnormal{by Corollary \ref{corollary-l1}} } + (V + \hat{V}) \sqrt{\frac{\log(2/\delta_2)}{2|T|}} + \sqrt{\frac{1}{2n} \log\frac{2}{\delta_3}}.
\]
\label{corollary-population-tv-error}
\end{corollary}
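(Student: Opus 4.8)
The plan is to reduce the statement to an application of Lemma~\ref{lemma-population-transfer} with a carefully chosen test function. Set $g(s) := |\eta(s) - \hat{\eta}(s)|$ and note that $g$ maps $[0,1]$ into $[0,1]$. Applying Lemma~\ref{lemma-population-transfer} to $g$, with the sample taken to be the training points $\{s_i : i \in T\}$ (so $n = |T|$, matching the appendix's convention $n = |T|$), yields, with probability $1-\delta_3$,
\[
\left| \mathbb{E}_s[g(s)] - \frac{1}{|T|}\sum_{i\in T} g(s_i) \right| \le V_g\,\|F - F_{|T|}\|_\infty + \sqrt{\tfrac{1}{2|T|}\log\tfrac{2}{\delta_3}}.
\]
It then remains to (i) control $V_g = \mathrm{TV}(g,[0,1])$, (ii) control $\|F - F_{|T|}\|_\infty$, and (iii) control the empirical average $\frac{1}{|T|}\sum_{i\in T} g(s_i)$, after which a union bound closes the argument.

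For (i), I would use that $x \mapsto |x|$ is $1$-Lipschitz, so that composing it with $\eta - \hat\eta$ cannot increase total variation, together with subadditivity of total variation:
\[
V_g = \mathrm{TV}(|\eta-\hat\eta|,[0,1]) \le \mathrm{TV}(\eta-\hat\eta,[0,1]) \le \mathrm{TV}(\eta,[0,1]) + \mathrm{TV}(\hat\eta,[0,1]) \le V + \hat V,
\]
where the last inequality uses the standing assumption $\mathrm{TV}(\eta,[0,1])\le V$ and the fact that $\hat V$ is (by definition) the total variation of the piecewise-constant surrogate $\hat\eta$, which equals $\|D\vheta\|_1$ and is observable. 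For (ii), I would invoke the Dvoretzky--Kiefer--Wolfowitz--Massart inequality: with probability $1-\delta_2$, $\|F - F_{|T|}\|_\infty \le \sqrt{\tfrac{1}{2|T|}\log\tfrac{2}{\delta_2}}$. For (iii), observe $\frac{1}{|T|}\sum_{i\in T}g(s_i) = \frac{1}{|T|}\|\veta - \vheta\|_1$, which by Corollary~\ref{corollary-l1} is at most $\mathrm{TVB}(\delta_1)$ with probability $1-\delta_1$. Chaining these and taking a union bound over the three events (total failure probability at most $\delta_1+\delta_2+\delta_3 = \delta$) gives precisely the claimed inequality.

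The only point requiring genuine care is that the randomness of $g$ — through $\hat\eta$, which is itself a function of the training sample — does not invalidate applying Lemma~\ref{lemma-population-transfer} on that same sample. This is fine because the probabilistic ingredients in that lemma's proof (a Hoeffding bound on the single-point deviation $F(0)-F_{|T|}(0)$ and the sup-norm control of $\|F-F_{|T|}\|_\infty$) concern the empirical CDF alone and never reference $g$; all dependence on $g$ is confined to the deterministic integration-by-parts identity and the deterministic estimate $\bigl|\int (F-F_{|T|})\,dg\bigr| \le V_g\|F-F_{|T|}\|_\infty$, and there $V_g$ may be replaced by the observed bound $V+\hat V$ for every realization of the training data. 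Once this is noted, the remaining steps are routine bookkeeping.
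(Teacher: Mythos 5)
Your proposal is correct and follows essentially the same route as the paper: apply Lemma~\ref{lemma-population-transfer} to $g(s)=|\eta(s)-\hat{\eta}(s)|$, bound $\|F-F_{|T|}\|_\infty$ via DKW, use $V_g\le V+\hat V$, invoke Corollary~\ref{corollary-l1} for the empirical average, and union bound over the three events. Your added justifications (Lipschitz composition plus subadditivity for $V_g$, and the remark that the randomness of $g$ through $\hat\eta$ does not affect the lemma since its probabilistic ingredients concern only the empirical CDF) are correct refinements of points the paper leaves implicit.
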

\begin{proof}
   Apply Lemma \ref{lemma-population-transfer} to the function $g(s) = | \eta(s) - \hat{\eta}(s)|$. Apply the DKW theorem to bound the term $\|F - F_n\|_\infty$ with probability $1-\delta_2$. Apply the property that $V_g \leq V + \hat{V}$ by construction (where $V$ is the total variation of $\eta$ and $\hat{V}$ is the total variation of $\hat{\eta}$). 
\end{proof}

\section{Finite Derivative Results}
\label{appdx-smooth}

\subsection{Bounding the Smoothing Error under Continuity}
\begin{lemma}[Smoothing Error]
 Assume $\eta$ is twice differentiable with the first and second derivatives uniformly bounded by $b_1$ and $b_2$ respectively. Consider a fixed design (set of scores $s_1, \dots, s_{|T|}$).  Denote by $k_{s_i}$ a nonnegative kernel. Denote by $\hat{\eta}$ the Nadaraya-Watson smoothed approximation to $\eta$ obtained from a dataset of $|T|$ samples, defined as 
\[
\hat{\eta}(s') = \sum_{i \in T} w_{i}(s') y_i, \quad  w_{i}(s') = \frac{k_{s_i}(s')}{\sum_{j \in T} k_{s_j}(s')}.
\]
Then we have:
\[
\mathbb{E}_{Y_T}\left[ |\hat{\eta}(s') - \eta(s') \middle|  S_T \right] \leq g_T(s'),
\]
where
\[
g_T(s') = b_1  \sum_{i \in T} w_i(s') \left| s' - s_i \right| + b_2 \frac12 \sum_{i \in T} w_i(s') (s' - s_i)^2 + \frac12 \sqrt{\sum_{i \in T} w^2_i(s')}.
\]
\label{lemma-smoothing-error}
\end{lemma}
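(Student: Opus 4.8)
The plan is to use the standard bias--variance decomposition of the Nadaraya--Watson estimator, carried out conditionally on the design $S_T$. Since the weights $w_i(s')$ are deterministic functions of $S_T$ and the labels $y_i$ are, conditionally on $S_T$, independent $\mathrm{Bernoulli}(\eta(s_i))$, I would first write
\[
\hat{\eta}(s') - \eta(s')
= \underbrace{\Bigl(\sum_{i\in T} w_i(s')\,\eta(s_i) - \eta(s')\Bigr)}_{\text{bias}}
+ \underbrace{\sum_{i\in T} w_i(s')\,\bigl(y_i - \eta(s_i)\bigr)}_{\text{stochastic term}},
\]
using the fact that $\sum_{i\in T} w_i(s') = 1$ (which holds by definition of the weights) to pull $\eta(s')$ inside the sum. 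Taking $\mathbb{E}_{Y_T}[\,\cdot \mid S_T]$ of the absolute value and applying the triangle inequality reduces the task to bounding the absolute bias (a deterministic quantity given $S_T$) plus the conditional expected absolute value of the stochastic term.

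For the bias, I would Taylor-expand each $\eta(s_i)$ around $s'$ to second order with Lagrange remainder, $\eta(s_i) = \eta(s') + \eta'(s')(s_i - s') + \tfrac12\eta''(\xi_i)(s_i-s')^2$ for some $\xi_i$ between $s'$ and $s_i$ (hence $\xi_i\in[0,1]$, so the uniform derivative bounds apply). Multiplying by $w_i(s')$ and summing, the constant term telescopes against $\eta(s')$, leaving $\eta'(s')\sum_i w_i(s')(s_i-s') + \tfrac12\sum_i w_i(s')\eta''(\xi_i)(s_i-s')^2$. Taking absolute values, using $|\eta'|\le b_1$ and $|\eta''|\le b_2$, and finally bounding $\bigl|\sum_i w_i(s')(s_i-s')\bigr|$ by $\sum_i w_i(s')|s'-s_i|$ recovers exactly the first two terms of $g_T(s')$.

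For the stochastic term, conditionally on $S_T$ it is a weighted sum of independent, mean-zero random variables $y_i-\eta(s_i)$, each with variance $\eta(s_i)(1-\eta(s_i))\le \tfrac14$. Hence its conditional variance equals $\sum_i w_i^2(s')\,\eta(s_i)(1-\eta(s_i)) \le \tfrac14\sum_i w_i^2(s')$, and by Jensen's inequality (equivalently, Cauchy--Schwarz) the conditional expected absolute value is at most the conditional standard deviation, i.e.\ at most $\tfrac12\sqrt{\sum_i w_i^2(s')}$, which is the third term of $g_T(s')$. Adding the bias and stochastic bounds gives the claim.

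I do not expect a genuine obstacle here: this is a routine bias--variance calculation. The only points that need care are bookkeeping ones --- keeping every expectation conditional on $S_T$ so the weights behave as constants, checking that the Taylor remainder point $\xi_i$ lies in $[0,1]$ so the uniform derivative bounds are legitimate, and observing that the mild looseness introduced by passing from $\bigl|\sum_i w_i(s')(s_i-s')\bigr|$ to $\sum_i w_i(s')|s'-s_i|$ is precisely what makes the bound match the stated form of $g_T$ while keeping every term computable from data.
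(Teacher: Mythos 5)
Your proof is correct and follows essentially the same route as the paper's: the same bias--noise decomposition, a second-order Taylor expansion with uniform derivative bounds for the bias, and Jensen's inequality with the Bernoulli variance bound $1/4$ for the stochastic term. The only (immaterial) difference is that you expand $\eta(s_i)$ around $s'$ while the paper expands $\eta(s')$ around each $s_i$; both give the same bound since the derivative bounds are uniform.
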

\begin{proof}
We begin by decomposing the term we want to bound.
\begin{gather}
\hat{\eta}(s') - \eta(s') = \underbrace{\left(\sum_{i \in T} w_i(s') \eta(s_i)\right) - \eta(s')}_{\text{bias}}   + \underbrace{\sum_{i \in T} w_i(s') (y_i - \eta(s_i))}_{\text{noise}}  
\label{eq-decompose}
\end{gather}
We will bound the bias and noise term separately. We start with the bias term. Taking the Taylor expansion of $\eta$ around training point $s_i$, evaluated at $s'$, we get
\[
\eta(s') = \eta(s_i) + \eta'(s_i) (s' - s_i) + \frac12 \eta''\xi_i(s') (s' - s_i)^2.
\]
for some $\xi_i(s')$. Subtracting $\eta(s_i)$ and summing across the training points, we get
\[
\sum_{i \in T} w_i(s') ( \eta(s') - \eta(s_i)) = \sum_{i \in T} w_i(s')  \eta'(s_i) (s' - s_i) + \frac12 \sum_{i \in T} w_i(s') \eta''(\xi_i(s')) (s' - s_i)^2.  
\]
Bounding, we get
\[
\left| \sum_{i \in T} w_i(s') ( \eta(s') - \eta(s_i))\right| \leq b_1  \sum_{i \in T} w_i(s') \left| s' - s_i \right| + b_2 \frac12 \sum_{i \in T} w_i(s') (s' - s_i)^2. 
\]
We now bound the noise term. Applying Jensen's inequality we have
\[
\mathbb{E}_{Y_T} \left[  | \sum_{i \in T} w_i(s') (y_i - \eta(s_i))  | \middle |  S_T \right ] \leq
\sqrt{\mathbb{E}_{Y_T} \left[ \left(\sum_{i \in T} w_i(s') (y_i - \eta(s_i)) \right)^2 \middle | S_T \right]}.
\]
The term  $y_i - \eta(s_i)$ is zero mean and sub-Gaussian with variance at most 1/4. Hence we have
\[
\sqrt{\mathbb{E}_{Y_T} \left[ \left(\sum_{i \in T} w_i(s') (y_i - \eta(s_i)) \right)^2 \middle | S_T \right]} \leq \frac12 \sqrt{\sum_{i \in T} w^2_i(s')}.
\]
Putting both terms from \eqref{eq-decompose} together, we get the result.
\end{proof}

\section{Score Perturbation}
\label{appdx-pert}
\subsection{Bounding the Derivatives of the Calibration Function of the Modified Classifier}

We have a dataset of tuples $(s_{\text{orig}}, y)$ where the components are generated as follows.
$$s_{\text{orig}} \sim p(s_{\text{orig}})$$
$$y \sim \text{Bernoulli}(\eta_{\text{orig}}(s_{\text{orig}}))$$
Here, $p(s_{\text{orig}})$ is an arbitrary probability measure for the scores $s_{\text{orig}} \in [0, 1]$ (it can be discrete, continuous, or a mixture). The function $\eta_{\text{orig}}(s_{\text{orig}}) = \mathbb{P}(Y=1|S=s_{\text{orig}})$ is the calibration function of the original classifier (before perturbation).

Now, we create a new dataset of tuples $(s_{\text{orig}}, s, y)$ where the components are generated by perturbing the original scores:
$$s_{\text{orig}} \sim p(s_{\text{orig}})$$
$$s \sim k(s \mid s_{\text{orig}})$$
$$y \sim \text{Bernoulli}(\eta_{\text{orig}}(s_{\text{orig}}))$$
Here, for each original score $s_{\text{orig}}$, $k(s \mid s_{\text{orig}})$ is a probability density function (PDF) on $[0, 1]$, which acts as a smoothing kernel.
The joint distribution is given by
$$p(s_{\text{orig}}, s, y=1) = p(s_{\text{orig}})\underbrace{k(s \mid s_{\text{orig}})}_{p(s|s_{\text{orig}})}\underbrace{\eta_{\text{orig}}(s_{\text{orig}})}_{p(y=1|s_{\text{orig}})}$$
$$p(s_{\text{orig}}, s, y=0) = p(s_{\text{orig}})\underbrace{k(s \mid s_{\text{orig}})}_{p(s|s_{\text{orig}})}\underbrace{(1-\eta_{\text{orig}}(s_{\text{orig}}))}_{p(y=0|s_{\text{orig}})}$$
We are interested in the calibration function of this new perturbed classifier, which is the conditional probability $p(y=1|s)$. To find it, we first compute the marginals:
$$p(y=1, s) = \int_{0}^{1} k(s \mid s_{\text{orig}})\eta_{\text{orig}}(s_{\text{orig}})\,dp(s_{\text{orig}}) \ ,$$
$$p(y=0, s) = \int_{0}^{1} k(s \mid s_{\text{orig}})(1-\eta_{\text{orig}}(s_{\text{orig}}))\,dp(s_{\text{orig}}) \ ,$$
\begin{equation*}
    \begin{aligned}
        p(s) = p(y=1, s) + p(y=0, s) & = \int_{0}^{1} k(s \mid s_{\text{orig}})(\eta_{\text{orig}}(s_{\text{orig}}) + 1-\eta_{\text{orig}}(s_{\text{orig}}))\,dp(s_{\text{orig}}) \\
        & = \int_{0}^{1} k(s \mid s_{\text{orig}})\,dp(s_{\text{orig}}) \ .
    \end{aligned}
\end{equation*}

This allows us to compute the calibration function of the perturbed classifier, which we denote by $\eta(s)$:
\begin{equation}\label{eq:eta-tilde-generic}
\eta(s)
= \frac{p(y=1, s)}{p(s)} = \frac{ \displaystyle \int_{0}^{1} \eta_{\text{orig}}(s_{\text{orig}})\,k(s \mid s_{\text{orig}})\,dp(s_{\text{orig}})}
{ \displaystyle \int_{0}^{1} k(s \mid s_{\text{orig}})\,dp(s_{\text{orig}})}
\ , \ s \in[0,1] .
\end{equation}
This expression is a kernel-weighted average of the original calibration function $\eta_{\text{orig}}(s_{\text{orig}})$.

\subsubsection{Hyperbolic secant kernel}
We use the hyperbolic secant function as our kernel. For a fixed bandwidth $h>0$ and for each $s_{\text{orig}}\in[0,1]$, we define the \emph{sech} kernel $k(s \mid s_{\text{orig}})$:
\begin{align}
Z(s_\text{orig},h)
&\;=\; \int_0^1 \sech\paren*{\frac{s - s_\text{orig}}{h}} \,ds, \label{eq:Z-def}\\
k(s \mid s_{\text{orig}})
&\;=\;\frac{1}{Z(s_\text{orig},h)}\,\sech\left( \frac{s - s_\text{orig}}{h} \right) = \frac{1}{Z(s_\text{orig},h)}\, \frac{2}{e^{\frac{s - s_\text{orig}}{h}} + e^{-\frac{s - s_\text{orig}}{h}}} . \label{eq:sech-kernel}
\end{align}

First, we find a closed form equation for $Z(s_\text{orig}, h)$ and a lower bound of this quantity, which we will use in later proofs. 

\begin{lemma}[Closed form and lower bound for $Z(s_\text{orig},h)$]\label{lem:Z-closed-form}
For any $h>0$ and $s_\text{orig}\in[0,1]$, the normalization constant in \eqref{eq:Z-def} admits the closed form
\begin{equation}\label{eq:Z-closed}
Z(s_\text{orig},h)
= h\Bigg[
\arctan\!\left(\sinh \paren*{\frac{1-s_\text{orig}}{h}}\right)
\;+\;
\arctan\!\left( \sinh \paren*{\frac{s_\text{orig}}{h}}\right)
\Bigg].
\end{equation}
Moreover, $Z(s_\text{orig},h)$ is symmetric about $s_\text{orig}=\tfrac12$, strictly maximized at $s_\text{orig}=\tfrac12$, and attains its minimum at the endpoints $s_\text{orig}\in\{0,1\}$. In particular,
\begin{equation}\label{eq:Z-lower}
\min_{s_\text{orig}\in[0,1]} Z(s_\text{orig},h)
= Z(0,h)=Z(1,h)
= h\,\arctan\!\Big(\sinh\!\left(\frac{1}{h}\right)\Big).
\end{equation}
\end{lemma}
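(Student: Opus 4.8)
The plan is to establish the closed form first, then use monotonicity and convexity of the relevant scalar functions to pin down the extremizer, and finally read off the lower bound at the endpoints.

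\textbf{Step 1: Closed form.} I would compute the integral in \eqref{eq:Z-def} by the substitution $u = (s - s_\text{orig})/h$, so that $Z(s_\text{orig},h) = h \int_{-s_\text{orig}/h}^{(1-s_\text{orig})/h} \sech(u)\, du$. The key antiderivative is $\int \sech(u)\, du = \arctan(\sinh(u)) + C$ (equivalently $2\arctan(\tanh(u/2))$ or the Gudermannian function); this is a standard identity one can verify by differentiating, since $\frac{d}{du}\arctan(\sinh u) = \frac{\cosh u}{1+\sinh^2 u} = \frac{\cosh u}{\cosh^2 u} = \sech u$. Evaluating at the two limits and using that $\sinh$ is odd (so $\arctan(\sinh(-s_\text{orig}/h)) = -\arctan(\sinh(s_\text{orig}/h))$) gives exactly \eqref{eq:Z-closed}.

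\textbf{Step 2: Symmetry and location of the extremum.} Write $a = s_\text{orig}$ and define $\phi(a) = \arctan(\sinh((1-a)/h)) + \arctan(\sinh(a/h))$, so $Z = h\phi(a)$. Symmetry about $a = 1/2$ is immediate from the substitution $a \mapsto 1-a$, which swaps the two terms. For the extremum, I would differentiate: $\phi'(a) = \frac{1}{h}\bigl[\sech(a/h) - \sech((1-a)/h)\bigr]$, using the derivative computation from Step 1 together with the chain rule. Since $\sech$ is even, strictly decreasing on $[0,\infty)$, and $|a/h| < |(1-a)/h|$ exactly when $a < 1/2$, we get $\phi'(a) > 0$ for $a < 1/2$ and $\phi'(a) < 0$ for $a > 1/2$. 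Hence $\phi$ (and therefore $Z$) is strictly increasing then strictly decreasing, so it is strictly maximized at $a = 1/2$ and attains its minimum at an endpoint. By symmetry $Z(0,h) = Z(1,h)$, and plugging $a=0$ into \eqref{eq:Z-closed} — noting $\arctan(\sinh 0) = 0$ — yields $Z(0,h) = h\arctan(\sinh(1/h))$, which is \eqref{eq:Z-lower}.

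\textbf{Main obstacle.} There is no serious obstacle here; the only mildly delicate point is recalling (or re-deriving) the antiderivative $\int \sech = \arctan \circ \sinh$ in the exact form that makes the endpoint cancellation clean, and being careful with the sign of the lower integration limit $-s_\text{orig}/h$ when invoking oddness of $\sinh$. Everything else is monotonicity bookkeeping. One could alternatively argue Step 2 without the closed form — directly differentiating $Z(s_\text{orig},h) = \int_0^1 \sech((s-s_\text{orig})/h)\,ds$ under the integral sign, or observing that the integrand as a function of $s_\text{orig}$ is a shifted bump that "falls off the edge" of $[0,1]$ least when centered — but going through the closed form is cleaner and is needed for \eqref{eq:Z-closed} anyway.
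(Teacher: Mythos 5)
Your proposal is correct and follows essentially the same route as the paper's proof: the substitution $u=(s-s_\text{orig})/h$ with the antiderivative $\arctan(\sinh u)$ and oddness for the closed form, then differentiating in $s_\text{orig}$ and using that $\sech$ is strictly decreasing on $[0,\infty)$ to locate the maximum at $\tfrac12$ and the minimum at the endpoints. The minor extras (verifying the antiderivative by differentiation, noting symmetry via $a\mapsto 1-a$) do not change the argument.
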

\begin{proof}
With the substitution $u=(s-s_\text{orig})/h$ (so $\,ds=h\,du$), the limits $s\in[0,1]$ become
$u\in[-s_\text{orig}/h,\,(1-s_\text{orig})/h]$, hence
\[
Z(s_\text{orig},h)=\int_0^1 \sech\!\Big(\frac{s-s_\text{orig}}{h}\Big)\,ds
= h\int_{-s_\text{orig}/h}^{(1-s_\text{orig})/h}\sech(u)\,du.
\]
Using $\frac{d}{du}\,\arctan(\sinh u)=\sech u$, we obtain
\begin{equation*}
    \begin{aligned}
        Z(s_\text{orig},h) & =h\Big[\arctan\!\big(\sinh u\big)\Big]_{u=-s_\text{orig}/h}^{u=(1-s_\text{orig})/h} \\
        & = h\Big(\arctan\!\Big(\sinh\!\paren*{\frac{1-s_\text{orig}}{h}}\Big)+\arctan\!\Big(\sinh\!\paren*{\frac{s_\text{orig}}{h}}\Big)\Big),
    \end{aligned}
\end{equation*}
which is \eqref{eq:Z-closed} (we used the oddness of $\sinh$ and $\arctan$).

\smallskip
Now, we will find a lower bound for $Z$.
First, differentiate \eqref{eq:Z-closed} with respect to $s_\text{orig}$:
\[
\frac{\partial}{\partial s_\text{orig}}Z(s_\text{orig},h)
= h\!\left(-\frac{1}{h}\sech\!\paren*{\frac{1-s_\text{orig}}{h}}+\frac{1}{h}\sech\!\paren*{\frac{s_\text{orig}}{h}}\right)
= \sech\!\paren*{\frac{ s_\text{orig}}{h}}-\sech\!\paren*{\frac{1-s_\text{orig}}{h}}
\]
Since $x\mapsto\sech x$ is strictly decreasing on $[0,\infty)$, we have
$\partial_{s_\text{orig}} Z(s_\text{orig},h)>0$ for $s_\text{orig}<\tfrac12$, $\partial_{s_\text{orig}} Z(s_\text{orig},h)=0$ at $s_\text{orig}=\tfrac12$, and
$\partial_{s_\text{orig}} Z(s_\text{orig},h)<0$ for $s_\text{orig}>\tfrac12$. Hence $Z$ is strictly increasing on $[0,\tfrac12]$ and strictly decreasing on $[\tfrac12,1]$, so its maximum is at $s_\text{orig}=\tfrac12$ and its minimum is attained at the endpoints $s_\text{orig}\in\{0,1\}$. Evaluating \eqref{eq:Z-closed} at $s_\text{orig}=0$ or $s_\text{orig}=1$ yields
\[
Z(0,h)=Z(1,h)=h\,\arctan\!\Big(\sinh\!\left( \frac{1}{h} \right)\Big),
\]
which is strictly positive for every $h>0$, proving the uniform lower bound \eqref{eq:Z-lower}.
\end{proof}

Now, we show that $k(s \mid s_{\text{orig}})$ defined in Eq. \eqref{eq:sech-kernel} is a valid probability density function over $[0, 1]$ in the following lemma. 

\begin{lemma}
For any given $s_\text{orig}\in[0, 1]$ and $h>0$, the function $k(s \mid s_{\text{orig}})$ defined in Eq. \eqref{eq:sech-kernel} is a valid probability density function over $[0, 1]$.
\end{lemma}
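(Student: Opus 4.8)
The plan is to verify the two defining properties of a probability density function on $[0,1]$: nonnegativity and that the integral over $[0,1]$ equals one. Both follow almost immediately from the work already done in Lemma~\ref{lem:Z-closed-form}, so this is a short bookkeeping proof rather than anything substantive.

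First I would establish nonnegativity. For any $u \in \mathbb{R}$ we have $\sech(u) = 2/(e^u + e^{-u}) > 0$, since the denominator is a sum of two positive exponentials. In particular $\sech\bigl((s - s_\text{orig})/h\bigr) > 0$ for every $s \in [0,1]$. By Lemma~\ref{lem:Z-closed-form}, the normalizer $Z(s_\text{orig},h)$ is strictly positive for every $h>0$ and every $s_\text{orig}\in[0,1]$ (it is bounded below by $h\,\arctan(\sinh(1/h)) > 0$). Hence $k(s \mid s_\text{orig}) = \sech\bigl((s-s_\text{orig})/h\bigr)/Z(s_\text{orig},h)$ is a ratio of positive quantities, so $k(s\mid s_\text{orig}) \ge 0$ on $[0,1]$.

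Next I would check that $k$ integrates to one. This is exactly how $Z$ was defined: $Z(s_\text{orig},h) = \int_0^1 \sech\bigl((s-s_\text{orig})/h\bigr)\,ds$ by equation~\eqref{eq:Z-def}. Therefore
\[
\int_0^1 k(s \mid s_\text{orig})\,ds = \frac{1}{Z(s_\text{orig},h)} \int_0^1 \sech\!\left(\frac{s - s_\text{orig}}{h}\right) ds = \frac{Z(s_\text{orig},h)}{Z(s_\text{orig},h)} = 1,
\]
where pulling the constant $1/Z(s_\text{orig},h)$ out of the integral is justified because it does not depend on $s$. Combining nonnegativity and unit mass, $k(\,\cdot\mid s_\text{orig})$ is a valid PDF on $[0,1]$.

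There is essentially no obstacle here; the only thing to be careful about is invoking the strict positivity of $Z$ from Lemma~\ref{lem:Z-closed-form} so that the division is well-defined, and noting that $s_\text{orig}$ and $h$ are fixed throughout so the integration variable is only $s$. Everything else is a one-line substitution of the definition of $Z$.
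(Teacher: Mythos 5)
Your proof is correct and follows essentially the same route as the paper: positivity of $\sech$ combined with the strict positivity of $Z(s_\text{orig},h)$ from Lemma~\ref{lem:Z-closed-form} gives nonnegativity, and the unit integral is immediate from the definition of $Z$ in \eqref{eq:Z-def}. No gaps.
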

\begin{proof}
We must verify two conditions: non-negativity and that the integral over $[0,1]$ is unity.

\paragraph{(1) Non-negativity.}Recall that for any $t\in\mathbb{R}$, $\sech (t) \in (0, 1]$ is continuous and strictly positive.
Therefore the map $s\mapsto \sech\!\big((s-s_\text{orig})/h\big)$ is continuous and strictly positive on $[0,1]$.
Regarding the normalization constant, we showed in lemma \ref{lem:Z-closed-form} that $Z(s_\text{orig}, h) > 0$.
Since the numerator in $k(s \mid s_{\text{orig}})$ and the denominator \(Z(s_\text{orig},h)\) are both positive, we have
\[
k(s \mid s_{\text{orig}}) \;=\; \frac{1}{Z(s_\text{orig},h)}\,\sech\!\Big(\frac{s-s_\text{orig}}{h}\Big)\;\ge\;0
\quad\text{for all } s\in[0,1].
\]

\paragraph{(2) Unit integral.}
Using the definition of $Z(s_\text{orig},h)$,
\[
\int_{0}^{1} k(s \mid s_{\text{orig}})\,ds
= \frac{1}{Z(s_\text{orig},h)}\int_{0}^{1} \sech\!\Big(\frac{s-s_\text{orig}}{h}\Big)\,ds
= \frac{Z(s_\text{orig},h)}{Z(s_\text{orig},h)}  \;=\; 1.
\]

\smallskip
Since $k(\cdot \mid s_\text{orig})\ge 0$ on $[0,1]$ and integrates to $1$, it is a valid probability density function on $[0,1]$.

\end{proof}

Substituting this specific kernel into our general formula \eqref{eq:eta-tilde-generic} gives the explicit \emph{sech-perturbed calibration}:
\begin{equation}\label{eq:eta-tilde-sech}
\eta(s)
= \frac{ \displaystyle \int_{0}^{1} \eta_\text{orig}(s_\text{orig})\,\,\frac{1}{Z(s_\text{orig}, h)}\sech\!\paren*{\frac{s-s_\text{orig}}{h}}\,dp(s_\text{orig})}
{ \displaystyle \int_{0}^{1} \,\frac{1}{Z(s_\text{orig}, h)} \sech\!\paren*{\frac{s-s_\text{orig}}{h}}\,dp(s_\text{orig})}
\ , \ s\in[0,1].
\end{equation}

\subsubsection{A uniform bound for the first derivative}
To analyze the smoothness of $\eta(s)$, we compute its first derivative. Let's define the numerator and denominator of \eqref{eq:eta-tilde-sech} as separate functions:
\[
N(s)\;=\;\int_{[0,1]} \eta_\text{orig}(s_\text{orig})\,\frac{1}{Z(s_\text{orig}, h)}\sech\paren*{\frac{s-s_\text{orig}}{h}}\,dp(s_\text{orig}),
\]
\[
D(s)\;=\;\int_{[0,1]} \frac{1}{Z(s_\text{orig}, h)}\sech\paren*{\frac{s-s_\text{orig}}{h}}\,dp(s_\text{orig}),
\]
so that $\eta(s)=N(s)/D(s)$.
To ensure $\eta(s)$ is well-defined, we must show $D(s) \neq 0$. We show this in the following lemma. 

\begin{lemma}[Positivity of the denominator]\label{prop:D-positive}
For the sech kernel, the denominator $D(s)$ is strictly positive for all $s\in[0,1]$ and any probability measure $p$ on $[0,1]$.
\end{lemma}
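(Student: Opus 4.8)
The plan is to exploit that the integrand defining $D(s)$ is a product of two strictly positive quantities, and then to upgrade strict positivity of the integrand into strict positivity of the integral by producing a \emph{uniform} positive lower bound that does not depend on the (possibly singular) measure $p$.

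First I would record the two positivity facts. The hyperbolic secant satisfies $\sech(t)\in(0,1]$ for every $t\in\mathbb{R}$, so the map $s\mapsto\sech\paren*{\tfrac{s-s_\text{orig}}{h}}$ is continuous and strictly positive on $[0,1]$ for each fixed $s_\text{orig}$. And Lemma \ref{lem:Z-closed-form} gives $Z(s_\text{orig},h)>0$ for all $s_\text{orig}\in[0,1]$ (indeed $Z(s_\text{orig},h)\ge h\,\arctan(\sinh(1/h))>0$). Consequently the integrand $\tfrac{1}{Z(s_\text{orig},h)}\sech\paren*{\tfrac{s-s_\text{orig}}{h}}$ is continuous and strictly positive in $s_\text{orig}$ on the compact interval $[0,1]$, hence bounded and integrable against any probability measure.

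Next comes the only subtlety: since $p$ may be a point mass — e.g.\ concentrated at an endpoint — I cannot merely invoke ``a strictly positive function integrates to something strictly positive'' without a quantitative bound. I would therefore bound the integrand below uniformly over $[0,1]\times[0,1]$. For $s,s_\text{orig}\in[0,1]$ we have $|s-s_\text{orig}|/h\le 1/h$, and because $\sech$ is even and strictly decreasing on $[0,\infty)$, $\sech\paren*{\tfrac{s-s_\text{orig}}{h}}\ge\sech\paren*{\tfrac1h}$. For the denominator, using $\arctan<\tfrac{\pi}{2}$ in the closed form \eqref{eq:Z-closed} gives $Z(s_\text{orig},h)<\pi h$ (the sharper bound via the maximum at $s_\text{orig}=\tfrac12$ from Lemma \ref{lem:Z-closed-form} would work equally well). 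Hence the integrand is at least $\sech(1/h)/(\pi h)>0$ for all $s,s_\text{orig}\in[0,1]$.

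Finally I would integrate this constant lower bound against $p$: since $p$ is a probability measure on $[0,1]$,
\[
D(s)\;\ge\;\frac{\sech(1/h)}{\pi h}\int_{[0,1]}dp(s_\text{orig})\;=\;\frac{\sech(1/h)}{\pi h}\;>\;0
\]
for every $s\in[0,1]$, which is the claim. There is essentially no hard step here; the main thing to be careful about is not conflating ``positive integrand'' with ``positive integral'' when $p$ is singular, and the uniform lower bound resolves that cleanly.
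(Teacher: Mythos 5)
Your proof is correct and takes essentially the same route as the paper: both rest on the strict positivity of $\sech$ and of $Z(s_\text{orig},h)$ (Lemma \ref{lem:Z-closed-form}), and then conclude that the integral against a probability measure is strictly positive. The only difference is that the paper directly invokes the fact that a strictly positive (measurable) integrand integrates to a strictly positive value against any probability measure --- which is valid even for singular $p$, e.g.\ point masses --- so your explicit uniform lower bound $\sech(1/h)/(\pi h)$ is a harmless quantitative extra rather than a needed repair.
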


\begin{proof}
Fix $s\in[0,1]$ and $h>0$. For every $s_\text{orig}\in[0,1]$ we have
$\sech\!\big((s-s_\text{orig})/h\big)>0$ and
$Z(s_\text{orig},h) >0$ (lemma \ref{lem:Z-closed-form}),
hence
\[
f(s_\text{orig})\coloneqq \frac{1}{Z(s_\text{orig},h)}\,\sech\!\Big(\frac{s-s_\text{orig}}{h}\Big)>0
\quad\text{for all }s_\text{orig}.
\]
Therefore $D(s)=\int_{[0,1]} f(s_\text{orig})\,dp(s_\text{orig})>0$, since the integral of a strictly
positive function with respect to a probability measure is strictly positive.
\end{proof}

Here, we provide the derivative of the sech kernel.
\begin{lemma}[Derivative of the sech kernel]\label{lem:k-derivative}
For every $s_\text{orig}\in[0,1]$, the partial derivative of $\sech\paren*{\frac{s-s_\text{orig}}{h}}$ with respect to $s$ is

\begin{equation}\label{eq:k-derivative}
\frac{\partial}{\partial s}\sech\paren*{\frac{s-s_\text{orig}}{h}}
\;=\; -\frac{1}{h} \sech\paren*{\frac{s-s_\text{orig}}{h}} \tanh\paren*{\frac{s-s_\text{orig}}{h}}, \quad \text{for } s \in [0,1].
\end{equation}
\end{lemma}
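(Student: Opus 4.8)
The plan is to reduce the claim to the elementary single-variable identity $\frac{d}{dt}\sech(t) = -\sech(t)\tanh(t)$ and then apply the chain rule. First I would recall that $\sech(t) = 1/\cosh(t)$ and differentiate directly: since $\frac{d}{dt}\cosh(t) = \sinh(t)$, the quotient (or reciprocal) rule gives $\frac{d}{dt}\sech(t) = -\sinh(t)/\cosh^2(t)$, which I would rewrite as $-\bigl(1/\cosh(t)\bigr)\bigl(\sinh(t)/\cosh(t)\bigr) = -\sech(t)\tanh(t)$. This is the only computation of any substance.

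Next I would fix $s_\text{orig} \in [0,1]$ and introduce $u(s) = (s - s_\text{orig})/h$, observing that $u'(s) = 1/h$ for every fixed $s_\text{orig}$ and every $h > 0$. Viewing $s \mapsto \sech(u(s))$ as a composition and applying the chain rule together with the base identity gives $\frac{\partial}{\partial s}\sech\bigl((s-s_\text{orig})/h\bigr) = -\sech(u(s))\,\tanh(u(s))\cdot u'(s)$; substituting back $u(s) = (s-s_\text{orig})/h$ and $u'(s) = 1/h$ yields exactly equation \eqref{eq:k-derivative}.

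Since $\cosh$ is strictly positive on all of $\mathbb{R}$, both $\sech$ and $\tanh$ are smooth everywhere, so no restriction on $s$ or on $h > 0$ is needed and the differentiation is valid for every $s \in [0,1]$ (indeed for all real $s$). There is no genuine obstacle here; the only thing to be careful about is carrying through the factor $1/h$ that the chain rule contributes, which is easy to drop by accident.
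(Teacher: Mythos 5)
Your proposal is correct and follows essentially the same route as the paper's proof: differentiate $\sech(t)=1/\cosh(t)$ to get $-\sech(t)\tanh(t)$, then apply the chain rule with $u=(s-s_\text{orig})/h$ and $u'(s)=1/h$. No gaps or differences worth noting.
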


\begin{proof}
Let $u := (s-s_\text{orig})/h$. Since $\sech u = 1/\cosh u$ and $(\cosh u)'=\sinh u$,
\[
\frac{d}{du}\sech u
= -\frac{\sinh u}{\cosh^2 u}
= -\sech u\,\tanh u.
\]
By the chain rule with $\partial u/\partial s = 1/h$,
\[
\frac{\partial}{\partial s}\sech\!\paren*{\frac{s-s_\text{orig}}{h}}
= -\frac{1}{h}\,\sech\!\paren*{\frac{s-s_\text{orig}}{h}}\,\tanh\!\paren*{\frac{s-s_\text{orig}}{h}},
\]
which proves \eqref{eq:k-derivative} for all $s\in[0,1]$.
\qedhere
\end{proof}

To compute the derivatives of $N(s)$ and $D(s)$, we need to differentiate under the integral sign. The following lemma provides the justification.

\begin{lemma}[Differentiation under the integral sign]\label{lem:diff-integral}
The derivatives of $N(s)$ and $D(s)$ can be computed as:
\begin{align}
N'(s) &= \int_{[0,1]} \eta_\text{orig}(s_\text{orig})\,\frac{1}{Z(s_\text{orig}, h)}\frac{\partial}{\partial s}\sech\paren*{\frac{s-s_\text{orig}}{h}}\,dp(s_\text{orig}),\label{eq:Nprime}\\
D'(s) &= \int_{[0,1]} \frac{1}{Z(s_\text{orig}, h)}\frac{\partial}{\partial s}\sech\paren*{\frac{s-s_\text{orig}}{h}}\,dp(s_\text{orig}).\label{eq:Dprime}
\end{align}
\end{lemma}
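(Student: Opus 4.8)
The plan is to invoke the standard theorem on differentiation under the integral sign (a consequence of the dominated convergence theorem): if $f(s,s_\text{orig})$ is measurable in $s_\text{orig}$ for each $s$, differentiable in $s$ for each $s_\text{orig}$, and there is a $p$-integrable dominating function $g$ with $\abs{\partial_s f(s,s_\text{orig})}\le g(s_\text{orig})$ for all $s$ in a neighborhood of the point of interest, then $\tfrac{d}{ds}\int f\,dp = \int \partial_s f\,dp$ there. Since $p$ is a probability measure on $[0,1]$, every constant function is $p$-integrable, so it suffices to exhibit a \emph{uniform} bound on the $s$-partial derivative of the two integrands, valid for all $s,s_\text{orig}\in[0,1]$; a uniform bound also disposes of the endpoints $s\in\{0,1\}$, where only one-sided derivatives are at issue.

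First I would collect the ingredients already established in the excerpt. Lemma~\ref{lem:Z-closed-form} gives the uniform lower bound $Z(s_\text{orig},h)\ge Z(0,h)=h\,\arctan(\sinh(1/h))=:c_h>0$, independent of $s_\text{orig}$. Lemma~\ref{lem:k-derivative} gives $\tfrac{\partial}{\partial s}\sech((s-s_\text{orig})/h) = -\tfrac1h \sech((s-s_\text{orig})/h)\tanh((s-s_\text{orig})/h)$, whose absolute value is at most $1/h$ because $\abs{\sech}\le 1$ and $\abs{\tanh}\le 1$ everywhere. Measurability of the integrands in $s_\text{orig}$ is immediate: $s_\text{orig}\mapsto Z(s_\text{orig},h)$ is continuous (hence the reciprocal is continuous, as $Z>0$), $s_\text{orig}\mapsto \sech((s-s_\text{orig})/h)$ is continuous, and $\eta_\text{orig}$ is measurable, being a conditional expectation.

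Next I would assemble the dominating functions. For $D$, the $s$-partial derivative of the integrand is $\tfrac{1}{Z(s_\text{orig},h)}\tfrac{\partial}{\partial s}\sech((s-s_\text{orig})/h)$, bounded in absolute value by $\tfrac{1}{c_h}\cdot\tfrac1h = \tfrac{1}{h^2\arctan(\sinh(1/h))}$, a finite constant independent of both $s$ and $s_\text{orig}$. For $N$, the integrand carries the extra factor $\eta_\text{orig}(s_\text{orig})\in[0,1]$, so the same constant dominates its $s$-partial derivative. Applying the differentiation-under-the-integral theorem with these constant dominating functions then yields \eqref{eq:Nprime} and \eqref{eq:Dprime} at every $s\in[0,1]$.

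There is no genuine obstacle here; the only point requiring care is that the dominating bound be uniform over an $s$-neighborhood of each point, and in fact it is uniform over all of $[0,1]$ — which is exactly what combining the lower bound $Z\ge c_h$ with the $1/h$ bound on the kernel derivative delivers. If a fully self-contained argument is preferred, one can instead bound the difference quotients $\tfrac{1}{\Delta}\big(\sech((s+\Delta-s_\text{orig})/h)-\sech((s-s_\text{orig})/h)\big)$ by the same constant $1/(h^2)$ via the mean value theorem and pass to the limit $\Delta\to0$ inside the integral by dominated convergence; but invoking the standard Leibniz rule is the cleaner route.
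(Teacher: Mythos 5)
Your argument is correct and matches the paper's proof: both bound the $s$-partial derivative of the integrand by $\tfrac{1}{h\,Z(s_\text{orig},h)}$ (using $\abs{\sech\cdot\tanh}\le 1$), invoke the uniform lower bound $Z(s_\text{orig},h)\ge h\arctan(\sinh(1/h))>0$ from Lemma~\ref{lem:Z-closed-form} to obtain a bounded (hence $p$-integrable) dominating function, and then apply dominated convergence to differentiate under the integral for both $N$ and $D$. The extra details you supply (measurability, one-sided derivatives at the endpoints, the difference-quotient alternative) are fine but do not change the route.
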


\begin{proof}
Let us start by focusing on $N(s)$. First, we need to find $\frac{\partial}{\partial s}N(s)$. Let's represent $N(s)$ as $N(s) = \int_{[0, 1]} f(s_\text{orig}, s) dp(s_\text{orig})$, where
\[
f(s_\text{orig}, s) = \eta_\text{orig}(s_\text{orig})\,\frac{1}{Z(s_\text{orig}, h)} \sech\paren*{\frac{s-s_\text{orig}}{h}} \ .
\]

Now recall by lemma \ref{lem:k-derivative} (using the oddness of tanh), 
\[
\frac{\partial}{\partial s}\sech\paren*{\frac{s-s_\text{orig}}{h}}
\;=\; \frac{1}{h} \sech\paren*{\frac{s-s_\text{orig}}{h}} \tanh\paren*{\frac{s_\text{orig}-s}{h}}, \quad \text{for } s \in [0,1].
\]
Therefore, we notice that  for every $s$ we have 
\begin{equation*}
    \begin{aligned}
        \Bigg|\frac{\partial}{\partial s}f(s_\text{orig}, s)\Bigg| & = \Bigg|\eta_\text{orig}(s_\text{orig})\,\frac{1}{Z(s_\text{orig}, h)} \frac{1}{h} \sech\paren*{\frac{s-s_\text{orig}}{h}} \tanh\paren*{\frac{s_\text{orig}-s}{h}} \Bigg| \\
         & \leq \frac{1}{h Z(s_\text{orig}, h)} =: g(s_\text{orig}) \ .
    \end{aligned}
\end{equation*}

Recall that $Z(s_\text{orig},h)$ is strictly positive and lower bounded $Z(s_\text{orig}, h) \geq h\,\arctan\!\Big(\sinh\!\left(\frac{1}{h}\right)\Big)$ from lemma \ref{lem:Z-closed-form}. Because of this and the fact that $h > 0$, we have
${\sup_{s_\text{orig}\in[0,1]}1/hZ(s_\text{orig},h)<\infty}$. Hence $g$ is bounded, so we can apply the dominated convergence theorem (Theorem 11.32 in \citep{Rudin1976}), and we may pass the derivative through the integral. An analogous reasoning easily applies to $D(s)$ too.
\end{proof}

By the quotient rule, the derivative of $\eta(s)$ is:
\begin{equation}\label{eq:eta-tilde-deriv-basic}
\eta'(s)
= \frac{N'(s)D(s) - N(s)D'(s)}{D(s)^2}.
\end{equation}

\paragraph{Alternative Representation via Covariance.}
The derivative has an elegant alternative representation. Let's define a new probability measure $w_{s}(ds_\text{orig})$ on $[0,1]$ that depends on $s$. This measure re-weights the original measure $p(s_\text{orig})$ based on the kernel:
\begin{equation}\label{eq-weights}
w_{s}(ds_\text{orig})\;=\;\frac{\frac{1}{Z(s_\text{orig}, h)}\sech\paren*{\frac{s-s_\text{orig}}{h}}}{D(s)}\,dp(s_\text{orig}).
\end{equation}
Note that $\int_0^1 w_{s}(ds_\text{orig}) = \frac{1}{D(s)} \int_0^1 \frac{1}{Z(s_\text{orig}, h)}\sech\paren*{\frac{s-s_\text{orig}}{h}} dp(s_\text{orig}) = \frac{D(s)}{D(s)} = 1$.

Now, we use this alternative representation to prove that the derivative of $\eta(s)$ can be expressed as a covariance under $w_{s}$ in the following proposition. 
\begin{lemma}\label{lemma:eta-tilde-deriv-cov}
The derivative of the perturbed calibration function can be expressed as a covariance under the measure $w_{s}(ds_\text{orig})$:
\[
\eta'(s)
= \frac{1}{h}\,\Cov_{S \sim w_{s}}\!\left(\eta_\text{orig}(S),\,\tanh\paren*{\frac{S-s}{h}}\right) \ ,
\quad s\in [0,1].
\]
\end{lemma}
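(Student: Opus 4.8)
The plan is to differentiate the representation $\eta(s) = N(s)/D(s)$ directly by the quotient rule \eqref{eq:eta-tilde-deriv-basic}. This is legitimate: Lemma~\ref{prop:D-positive} gives $D(s)>0$ for every $s\in[0,1]$, and Lemma~\ref{lem:diff-integral} (whose proof exhibits a dominating function built from the uniform lower bound on $Z$ in Lemma~\ref{lem:Z-closed-form}) lets us pass $\partial/\partial s$ under the integrals defining $N$ and $D$. Substituting the kernel derivative in the odd-$\tanh$ form $\frac{\partial}{\partial s}\sech\paren*{\frac{s-s_\text{orig}}{h}} = \frac{1}{h}\sech\paren*{\frac{s-s_\text{orig}}{h}}\tanh\paren*{\frac{s_\text{orig}-s}{h}}$ from Lemma~\ref{lem:k-derivative}, I would obtain
\[
N'(s) = \frac{1}{h}\int_{[0,1]} \eta_\text{orig}(s_\text{orig})\,\tanh\paren*{\frac{s_\text{orig}-s}{h}}\,\frac{\sech\paren*{\frac{s-s_\text{orig}}{h}}}{Z(s_\text{orig},h)}\,dp(s_\text{orig}),
\]
together with the same formula for $D'(s)$ after deleting the factor $\eta_\text{orig}(s_\text{orig})$.

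Next I would divide numerator and denominator of each ratio appearing in the quotient rule by $D(s)$, so that every remaining integral is taken against $\frac{1}{Z(s_\text{orig},h)}\sech\paren*{\frac{s-s_\text{orig}}{h}}\,dp(s_\text{orig})/D(s)$, which is precisely the probability measure $w_s$ from \eqref{eq-weights}. Writing $S$ for the integration variable, this identifies $\frac{N(s)}{D(s)} = \E_{S\sim w_s}\bracks*{\eta_\text{orig}(S)}$, $\frac{N'(s)}{D(s)} = \frac{1}{h}\,\E_{S\sim w_s}\bracks*{\eta_\text{orig}(S)\tanh\paren*{\frac{S-s}{h}}}$, and $\frac{D'(s)}{D(s)} = \frac{1}{h}\,\E_{S\sim w_s}\bracks*{\tanh\paren*{\frac{S-s}{h}}}$.

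Finally I would rewrite the quotient rule as $\eta'(s) = \frac{N'(s)}{D(s)} - \frac{N(s)}{D(s)}\cdot\frac{D'(s)}{D(s)}$; the three expectations above combine, using $\Cov(X,Y) = \E[XY]-\E[X]\E[Y]$ under $w_s$ together with the common factor $\tfrac1h$, to give exactly $\frac{1}{h}\,\Cov_{S\sim w_s}\paren*{\eta_\text{orig}(S),\,\tanh\paren*{\frac{S-s}{h}}}$, which is the claim. I do not expect any genuine analytic obstacle here: the technical content (strict positivity of $D$, and a dominating function legitimizing differentiation under the integral sign) has already been dispatched in the preceding lemmas, so the only care needed is bookkeeping — tracking the single sign flip in the argument of $\tanh$ that comes from oddness, and checking that the two $\tfrac1h$ factors produced by $N'/D$ and $D'/D$ collapse into one overall $\tfrac1h$.
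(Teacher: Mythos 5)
Your proposal is correct and follows essentially the same route as the paper's proof: quotient rule on $\eta = N/D$, justified by the positivity of $D$ and the differentiation-under-the-integral lemma, then normalizing by $D(s)$ to express $N'/D$, $D'/D$, and $N/D$ as expectations under $w_s$ and recognizing the covariance. The only cosmetic difference is that you absorb the sign via the oddness of $\tanh$ at the start, whereas the paper carries a $-\tfrac1h$ through and flips the sign at the end.
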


\begin{proof}
Starting from \eqref{eq:eta-tilde-deriv-basic} and writing
\(T(u):=\tanh\!\big(\frac{s-u}{h}\big)\),
lemma \ref{lem:diff-integral} together with lemma \ref{lem:k-derivative} gives
\begin{align*}
\frac{N'(s)}{D(s)}
&= \frac{1}{D(s)}\!\int_{[0,1]} \eta_\text{orig}(s_\text{orig})\,\frac{1}{Z(s_\text{orig}, h)}\frac{\partial}{\partial s}
\sech\!\paren*{\frac{s-s_\text{orig}}{h}}\,dp(s_\text{orig})
\\ &= -\frac{1}{h}\,\E_{w_{s}}\!\big[\eta_\text{orig}(S)\,T(S)\big],
\end{align*}
and
\[
\frac{D'(s)}{D(s)}
= \frac{1}{D(s)}\!\int_{[0,1]} \frac{1}{Z(s_\text{orig}, h)} \frac{\partial}{\partial s}
\sech\!\paren*{\frac{s-s_\text{orig}}{h}}\,dp(s_\text{orig})
= -\frac{1}{h}\,\E_{w_{s}}\!\big[T(S)\big].
\]
Since \(\eta(s)=\frac{N(s)}{D(s)}=\E_{w_{s}}[\eta_\text{orig}(S)]\), the quotient rule \eqref{eq:eta-tilde-deriv-basic} yields
\begin{align*}
\eta'(s)
&= \frac{N'(s)}{D(s)} - \frac{N(s)}{D(s)}\frac{D'(s)}{D(s)}\\
&= -\frac{1}{h}\,\E_{w_{s}}\!\big[\eta_\text{orig}(S)\,T(S)\big]
\;+\; \frac{1}{h}\,\E_{w_{s}}[\eta_\text{orig}(S)]\,\E_{w_{s}}\![T(S)]\\
&= -\frac{1}{h}\Big(\E_{w_{s}}\![\eta_\text{orig}(S)\,T(S)]
- \E_{w_{s}}[\eta_\text{orig}(S)]\,\E_{w_{s}}\![T(S)]\Big)\\
&= -\frac{1}{h}\,\Cov_{S\sim w_{s}}\!\Big(\eta_\text{orig}(S),\,\tanh\!\paren*{\frac{s-S}{h}}\Big).
\end{align*}
Because \(\tanh\) is odd, this is equivalently
\[
\eta'(s)
= \frac{1}{h}\,\Cov_{S \sim w_{s}}\!\Big(\eta_\text{orig}(S),\,\tanh\!\paren*{\frac{S-s}{h}}\Big).
\]
\end{proof}

Using this alternative representation as a covariance, we provide a uniform bound for the first derivative in the following corollary. 

\begin{corollary}[Uniform Lipschitz bound]\label{thm:uniform-bound}
For the \emph{sech} kernel and any bandwidth $h>0$, the perturbed calibration function $\eta$ is Lipschitz continuous with a constant independent of $p$ and $\eta$:
\[
\sup_{s\in[0,1]} \abs*{\eta'(s)}
\;\le\; \frac{\tanh(1/h)}{2h}
\;\le\; \frac{1}{2h}.
\]
\end{corollary}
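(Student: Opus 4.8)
The plan is to work directly from the covariance representation of the derivative established in Lemma~\ref{lemma:eta-tilde-deriv-cov}, namely $\eta'(s) = \tfrac1h\,\Cov_{S\sim w_s}\!\paren{\eta_\text{orig}(S),\,\tanh\paren{\tfrac{S-s}{h}}}$, and to bound this covariance by a product of a ``spread'' factor for each of the two random variables involved. Write $X = \eta_\text{orig}(S)$ and $Y = \tanh\paren{\tfrac{S-s}{h}}$. I would first use the identity $\Cov(X,Y) = \E_{w_s}\bracks{(X - \E_{w_s}X)\,Y}$ — the mean of $Y$ drops out since it multiplies a centered quantity — and then Hölder's inequality to obtain $\abs{\Cov(X,Y)} \le \E_{w_s}\!\abs{X - \E_{w_s}X}\cdot \sup\abs{Y}$.

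Next I would bound the two factors separately. For $\sup\abs{Y}$: recall that $w_s$ is a probability measure supported on $[0,1]$ (established when the reweighting measure \eqref{eq-weights} was introduced, since $p$ is supported on $[0,1]$) and $s\in[0,1]$, so the argument $(S-s)/h$ lies in $[-1/h,\,1/h]$; since $\tanh$ is odd and strictly increasing, $\abs{Y} = \abs{\tanh\paren{\tfrac{S-s}{h}}} \le \tanh(1/h)$ pointwise, hence $w_s$-a.s. For $\E_{w_s}\!\abs{X-\E_{w_s}X}$: since $X = \eta_\text{orig}(S)$ takes values in $[0,1]$, Jensen's inequality gives $\E\abs{X-\E X} \le \sqrt{\Var X}$, and a $[0,1]$-valued random variable satisfies $\Var X \le \mu(1-\mu) \le \tfrac14$ where $\mu = \E X$ (because $X^2 \le X$ on $[0,1]$). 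Therefore $\E_{w_s}\!\abs{X-\E_{w_s}X} \le \tfrac12$.

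Combining the three bounds yields $\abs{\eta'(s)} \le \tfrac1h\cdot\tanh(1/h)\cdot\tfrac12 = \tfrac{\tanh(1/h)}{2h}$, and $\tanh(1/h)\le 1$ gives the second inequality $\tfrac1{2h}$. Since this holds uniformly over $s\in[0,1]$ and $\eta$ is differentiable on $[0,1]$ (by Lemma~\ref{lem:diff-integral} and the quotient rule \eqref{eq:eta-tilde-deriv-basic}), the mean value theorem upgrades the uniform derivative bound to Lipschitz continuity with the same constant, completing the proof.

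None of these steps presents a genuine obstacle; the only points needing care are (i) centering $Y$ at $0$ rather than at its mean, which is what makes the cheap estimate $\abs{Y}\le\tanh(1/h)$ usable inside the covariance, and (ii) using the sharp variance bound $\Var X \le \mu(1-\mu)$ for bounded random variables to recover the constant $\tfrac12$ — a crude bound $\abs{X-\E X}\le 1$ would only give $\tfrac1h$ and lose the factor of two.
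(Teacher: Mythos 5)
Your proof is correct and follows essentially the same route as the paper: both start from the covariance representation of $\eta'(s)$ in Lemma~\ref{lemma:eta-tilde-deriv-cov} and exploit that $\eta_\text{orig}(S)\in[0,1]$ while $\tanh\!\left(\tfrac{S-s}{h}\right)\in[-\tanh(1/h),\tanh(1/h)]$ under $w_s$. The only (harmless) difference is the final estimate: you bound the covariance by $\E_{w_s}\left|X-\E_{w_s}X\right|\cdot\sup|Y|$ via centering and H\"older, whereas the paper applies Cauchy--Schwarz together with Popoviciu's variance bound to both factors; both yield the identical constant $\tanh(1/h)/(2h)$.
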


\begin{proof}
By lemma~\ref{lemma:eta-tilde-deriv-cov},
\(
\eta'(s)=\frac{1}{h}\,\Cov_{w_{s}}\!\Big(\eta_\text{orig}(S),\,T\Big)
\)
with \(T=\tanh\!\Big(\frac{S-s}{h}\Big)\).
By Cauchy--Schwarz, we know that 
\[
|\Cov(X,Y)| \leq \sqrt{\Var(X)\Var(Y)},
\]
where with $\Var(\cdot)$ we indicate the variance of a random variable. 

Also, Popoviciu's inequality says that, if $Z\in[a,b]$, then we have a bound for its variance $\Var(Z)$:
$$\Var(Z)\le\frac{(b-a)^2}{4}.$$
Since \(\eta_\text{orig}(S)\in[0,1]\) and \(T\in[-\tanh(1/h),\,\tanh(1/h)]\), Cauchy--Schwarz plus Popoviciu's inequality gives
\[
\frac{1}{h}\abs{\Cov(\eta_\text{orig}(S),T)} \;\le\; \frac{1}{h}\sqrt{\frac14 \cdot \frac{(2\tanh(1/h))^2}{4}}= \frac{1}{h}\sqrt{\frac{\tanh^2(1/h)}{4}}= \frac{\tanh(1/h)}{2h}.
\]
Noticing that $\tanh(1/h) \leq 1$ yields the claim.
\end{proof}

\subsubsection{Second derivative and a uniform bound}

Let us focus now on bounding the second derivative. First, we will find the second derivative of the kernel in the next lemma. 

\begin{lemma}[Second derivative of the kernel]\label{lem:k-second-derivative}
For every \(s_\text{orig}\in[0,1]\) and \(h>0\),
\[
\frac{\partial^2}{\partial s^2}\sech\paren*{\frac{s-s_\text{orig}}{h}}
= \frac{1}{h^2}\,\sech\paren*{\frac{s-s_\text{orig}}{h}}\left(2\tanh^2\!\paren*{\frac{s-s_\text{orig}}{h}}-1\right),\quad s\in[0,1].
\]
\end{lemma}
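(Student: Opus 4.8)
The plan is to obtain the second derivative by differentiating once more the first-derivative formula already established in Lemma~\ref{lem:k-derivative}, relying only on the elementary identities for hyperbolic functions. Writing $u := (s-s_\text{orig})/h$, Lemma~\ref{lem:k-derivative} gives $\frac{\partial}{\partial s}\sech(u) = -\frac{1}{h}\,\sech(u)\tanh(u)$, so it suffices to compute $\frac{\partial}{\partial s}\bigl[\sech(u)\tanh(u)\bigr]$ and multiply by $-\frac{1}{h}$.

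First I would apply the product rule to $\sech(u)\tanh(u)$, viewing each factor as a function of $s$ through $u$, using $\frac{d}{du}\sech(u) = -\sech(u)\tanh(u)$ and $\frac{d}{du}\tanh(u) = \sech^2(u)$ (both follow from $\sech = 1/\cosh$, $\tanh = \sinh/\cosh$ and $(\cosh)' = \sinh$, $(\sinh)' = \cosh$), together with the chain rule $\partial u/\partial s = 1/h$. This yields $\frac{\partial}{\partial s}\bigl[\sech(u)\tanh(u)\bigr] = \frac{1}{h}\bigl(\sech^3(u) - \sech(u)\tanh^2(u)\bigr)$.

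Next I would invoke the identity $\sech^2(u) = 1 - \tanh^2(u)$ to rewrite the bracket $\sech^3(u) - \sech(u)\tanh^2(u) = \sech(u)\bigl(\sech^2(u) - \tanh^2(u)\bigr)$ as $\sech(u)\bigl(1 - 2\tanh^2(u)\bigr)$, so that $\frac{\partial^2}{\partial s^2}\sech(u) = -\frac{1}{h}\cdot\frac{1}{h}\,\sech(u)\bigl(1 - 2\tanh^2(u)\bigr) = \frac{1}{h^2}\,\sech(u)\bigl(2\tanh^2(u) - 1\bigr)$. Restoring $u = (s-s_\text{orig})/h$ then gives exactly the claimed expression, valid for all $s \in [0,1]$ (indeed for all $s \in \mathbb{R}$), since $\sech$, $\tanh$ and the affine map $s \mapsto u$ are smooth everywhere.

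This is a routine calculus computation with no genuine obstacle; the only points requiring mild care are the sign bookkeeping---both the outer differentiation of $\sech$ and the chain rule contribute a factor $-\tfrac{1}{h}$, whose product gives the $+\tfrac{1}{h^2}$ in front---and the cosmetic rearrangement $1 - 2\tanh^2(u) = -(2\tanh^2(u) - 1)$ that matches the sign convention used in the statement.
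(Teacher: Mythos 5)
Your computation is correct and follows essentially the same route as the paper: both differentiate twice using $\frac{d}{du}\sech u=-\sech u\tanh u$, $\frac{d}{du}\tanh u=\sech^2 u$, the identity $\sech^2 u=1-\tanh^2 u$, and the chain rule factor $1/h$ per differentiation (the paper just computes $\frac{d^2}{du^2}\sech u$ first and applies the chain rule once at the end, whereas you differentiate the first-derivative formula in $s$ directly). The only blemish is the closing informal remark that ``both'' steps contribute a factor $-\tfrac1h$ --- the chain-rule factor is $+\tfrac1h$ and the final sign comes from the rearrangement $1-2\tanh^2 u=-(2\tanh^2 u-1)$, which you in fact carry out correctly in the displayed computation.
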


\begin{proof}
Let \(u=(s-s_\text{orig})/h\). Using \(\frac{d}{du}\sech u=-\sech u\,\tanh u\) and \(\frac{d}{du}\tanh u=\sech^2 u\),
\[
\frac{d^2}{du^2}\sech u
= -\frac{d}{du}(\sech u\,\tanh u)
= \sech u\,\tanh^2 u - \sech u\,\sech^2 u
= \sech u\,(2\tanh^2 u -1).
\]
Chain rule gives the claim.
\end{proof}

Now, we justify differentiation under the integral sign for second-order derivatives.
\begin{lemma}[Differentiation under the integral sign (second order)]\label{lem:diff-second}
For fixed \(h>0\) and all \(s\in[0,1]\),
\begin{align*}
N''(s) &= \int_{[0,1]} \eta_\text{orig}(s_\text{orig})\,\frac{1}{Z(s_\text{orig},h)}\,\frac{\partial^2}{\partial s^2}\sech\paren*{\frac{s-s_\text{orig}}{h}}\,dp(s_\text{orig}),\\
D''(s) &= \int_{[0,1]} \frac{1}{Z(s_\text{orig},h)}\,\frac{\partial^2}{\partial s^2}\sech\paren*{\frac{s-s_\text{orig}}{h}}\,dp(s_\text{orig}).
\end{align*}
\end{lemma}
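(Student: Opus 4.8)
The plan is to apply the standard theorem on differentiation under the integral sign a second time, mirroring the argument used for Lemma \ref{lem:diff-integral} but with the orders shifted up by one. I would begin from the first-order identity \eqref{eq:Nprime}, which writes $N'(s)=\int_{[0,1]} f_1(s_\text{orig},s)\,dp(s_\text{orig})$ with integrand $f_1(s_\text{orig},s)=\eta_\text{orig}(s_\text{orig})\,\frac{1}{Z(s_\text{orig},h)}\,\frac{\partial}{\partial s}\sech\paren*{\frac{s-s_\text{orig}}{h}}$, and ask to differentiate once more in $s$, passing the derivative through the integral; likewise for \eqref{eq:Dprime}.

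The key step is to exhibit a dominating function for $\partial_s f_1$ that is uniform in $s$ and $p$-integrable. By Lemma \ref{lem:k-second-derivative}, $\frac{\partial}{\partial s}f_1(s_\text{orig},s)=\eta_\text{orig}(s_\text{orig})\,\frac{1}{Z(s_\text{orig},h)}\,\frac{\partial^2}{\partial s^2}\sech\paren*{\frac{s-s_\text{orig}}{h}}$, and since $\sech(\cdot)\in(0,1]$ and $2\tanh^2(\cdot)-1\in[-1,1]$, we get $\left|\frac{\partial^2}{\partial s^2}\sech\paren*{\frac{s-s_\text{orig}}{h}}\right|\le\frac{1}{h^2}$ for every $s$. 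Combining this with $\eta_\text{orig}\le 1$ and the uniform lower bound $Z(s_\text{orig},h)\ge h\,\arctan(\sinh(1/h))>0$ from Lemma \ref{lem:Z-closed-form}, I obtain $\left|\frac{\partial}{\partial s}f_1(s_\text{orig},s)\right|\le\big(h^3\,\arctan(\sinh(1/h))\big)^{-1}=:g(s_\text{orig})$, a finite constant, hence integrable against the probability measure $p$ on $[0,1]$. Dropping the $\eta_\text{orig}$ factor yields the same dominating bound for the $s$-derivative of the integrand of $D'(s)$.

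With this dominating function in hand, the dominated-convergence form of Leibniz's rule (the same Theorem 11.32 of \citet{Rudin1976} invoked in the proof of Lemma \ref{lem:diff-integral}) applies: for each $s\in[0,1]$ the integrand of $N'$ is differentiable in $s$ with the pointwise derivative above, that derivative is dominated by $g\in L^1(p)$ independently of $s$, so differentiation and integration commute and produce the stated formula for $N''(s)$; the identical argument, verbatim, gives $D''(s)$. I do not expect a genuine obstacle here — the only delicate point is that the dominating bound must not depend on $s$, and this is exactly what the $\sech$-specific second-derivative estimate of Lemma \ref{lem:k-second-derivative} together with the endpoint lower bound on $Z$ from Lemma \ref{lem:Z-closed-form} supply; without the latter, the $1/Z(s_\text{orig},h)$ factor could fail to be bounded near $s_\text{orig}\in\{0,1\}$ and the domination would break down.
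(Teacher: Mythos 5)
Your proposal is correct and follows essentially the same route as the paper: both differentiate under the integral via the dominated convergence form of Leibniz's rule (Rudin, Theorem 11.32), dominating the second-order integrand by $\frac{1}{h^2 Z(s_\text{orig},h)}$, which is bounded thanks to the endpoint lower bound $Z(s_\text{orig},h)\ge h\arctan(\sinh(1/h))$ from Lemma \ref{lem:Z-closed-form}. Your write-up is in fact slightly more explicit than the paper's (starting from \eqref{eq:Nprime} and spelling out the constant dominator), but the argument is the same.
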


\begin{proof}
Similarly to the proof of lemma~\ref{lem:diff-integral}, we want to apply the dominated convergence theorem. 
By lemma~\ref{lem:k-second-derivative}, the integrands are bounded in absolute value by
\(\frac{1}{h^2 Z(s_\text{orig},h)}\).
Also, we know from lemma \ref{lem:Z-closed-form} that 
\( \min_{s_\text{orig}\in[0,1]} Z(s_\text{orig},h) = h\,\arctan\!\left(\sinh\!\left(\frac{1}{h}\right) \right) >0\). Thus
\(\frac{1}{h^2 Z(s_\text{orig},h)}\) is an integrable dominator,
so the dominated convergence theorem applies (theorem 11.32 from \citep{Rudin1976}), allowing differentiation under the integral.
\end{proof}

Similarly to what we shown for the first derivative, we show that the second derivative can be represented as a covariance under $w_{s}$ in the following lemma.
\begin{lemma}[Second derivative in covariance form]\label{lemma:eta-tilde-second-cov}
Let \(T(S)=\tanh\!\big(\frac{S-s}{h}\big)\) and expectation/covariance under \(S \sim w_{s}\), where $w_{s}$ is defined as in Eq. \ref{eq-weights}. We have that:
\[
\eta''(s)
= \frac{1}{h^2}\Big(
\Cov_{w_{s}}\!\big(\eta_\text{orig}(S),\,2T(S)^2-1\big)
- 2\,\E_{w_{s}}[T(S)]\;\Cov_{w_{s}}\!\big(\eta_\text{orig}(S),\,T(S)\big)
\Big).
\]
\end{lemma}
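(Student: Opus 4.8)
The plan is to differentiate the quotient $\eta=N/D$ a second time and then regroup, mirroring the proof of Lemma~\ref{lemma:eta-tilde-deriv-cov}. Throughout, write $T(S):=\tanh\!\paren*{\tfrac{S-s}{h}}$ and let $w_{s}$ be the probability measure from \eqref{eq-weights}, so that $\E_{w_{s}}[g(S)]=\tfrac{1}{D(s)}\int_{[0,1]} g(s_\text{orig})\,\tfrac{1}{Z(s_\text{orig},h)}\sech\!\paren*{\tfrac{s-s_\text{orig}}{h}}\,dp(s_\text{orig})$, which is well defined since $D(s)>0$ by Lemma~\ref{prop:D-positive}. From Lemma~\ref{lem:k-derivative} and Lemma~\ref{lem:k-second-derivative}, together with the oddness of $\tanh$ (and the insensitivity of $T(S)^2$ to the sign flip), the first- and second-order $s$-derivatives of $\sech\!\paren*{\tfrac{s-s_\text{orig}}{h}}$ equal $\tfrac1h\sech\!\paren*{\tfrac{s-s_\text{orig}}{h}}\tanh\!\paren*{\tfrac{s_\text{orig}-s}{h}}$ and $\tfrac{1}{h^2}\sech\!\paren*{\tfrac{s-s_\text{orig}}{h}}\paren*{2T(s_\text{orig})^2-1}$. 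Plugging these into the differentiation-under-the-integral identities of Lemma~\ref{lem:diff-integral} and Lemma~\ref{lem:diff-second} and dividing by the appropriate power of $D(s)$ gives the five building blocks $N/D=\E_{w_{s}}[\eta_\text{orig}(S)]$, $N'/D=\tfrac1h\E_{w_{s}}[\eta_\text{orig}(S)T(S)]$, $D'/D=\tfrac1h\E_{w_{s}}[T(S)]$, $N''/D=\tfrac1{h^2}\E_{w_{s}}[\eta_\text{orig}(S)(2T(S)^2-1)]$, and $D''/D=\tfrac1{h^2}\E_{w_{s}}[2T(S)^2-1]$.

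Next I apply the quotient rule twice. Differentiating $\eta'=N'/D-ND'/D^2$ (equation~\eqref{eq:eta-tilde-deriv-basic}) once more gives
\[
\eta''=\frac{N''}{D}-2\,\frac{N'D'}{D^2}-\frac{N D''}{D^2}+2\,\frac{N(D')^2}{D^3}.
\]
Substituting the five building blocks above, the right-hand side becomes $h^{-2}$ times $\E_{w_{s}}[\eta_\text{orig}(S)(2T(S)^2-1)]-2\E_{w_{s}}[\eta_\text{orig}(S)T(S)]\E_{w_{s}}[T(S)]-\E_{w_{s}}[\eta_\text{orig}(S)]\E_{w_{s}}[2T(S)^2-1]+2\E_{w_{s}}[\eta_\text{orig}(S)]\E_{w_{s}}[T(S)]^2$.

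Finally I regroup the four terms. The first and third combine into $\Cov_{w_{s}}(\eta_\text{orig}(S),\,2T(S)^2-1)$, and the second and fourth factor as $-2\E_{w_{s}}[T(S)]\big(\E_{w_{s}}[\eta_\text{orig}(S)T(S)]-\E_{w_{s}}[\eta_\text{orig}(S)]\E_{w_{s}}[T(S)]\big)=-2\E_{w_{s}}[T(S)]\Cov_{w_{s}}(\eta_\text{orig}(S),T(S))$, which is exactly the claimed expression for $\eta''(s)$.

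I do not expect a substantive obstacle: the analytic content — that one may differentiate twice under the integral sign and that $D(s)$ stays bounded away from $0$ — has already been discharged by Lemmas~\ref{lem:diff-integral}, \ref{lem:diff-second}, and \ref{prop:D-positive}, so what remains is bookkeeping. The two spots demanding care are (i) the sign convention, since the kernel derivatives naturally carry $\tanh\!\paren*{\tfrac{s-s_\text{orig}}{h}}$ and must be re-expressed consistently via $T(S)=\tanh\!\paren*{\tfrac{S-s}{h}}$, and (ii) correctly tracking the four terms produced when the quotient rule is applied to $ND'/D^2$.
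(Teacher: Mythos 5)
Your proposal is correct and follows essentially the same route as the paper: establish the five ratios $N/D$, $N'/D$, $D'/D$, $N''/D$, $D''/D$ as $w_s$-expectations via Lemmas \ref{lem:k-derivative}, \ref{lem:k-second-derivative}, \ref{lem:diff-integral}, \ref{lem:diff-second} and positivity of $D$, then apply the second-order quotient rule and regroup into the two covariance terms. Your expanded four-term quotient-rule expression is just an unbundled form of the paper's grouped identity, and your sign handling of $T(S)=\tanh\!\big(\tfrac{S-s}{h}\big)$ matches the paper's.
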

\begin{proof}
First, notice that, since $D(s) > 0$ (lemma \ref{prop:D-positive}), all ratios are well-defined.
By lemma \ref{lem:diff-second} and lemma \ref{lem:k-second-derivative}, using that $T(S)^2=\tanh^2\!\paren*{\tfrac{S-s_\text{orig}}{h}}=\tanh^2\!\paren*{\tfrac{s_\text{orig}-S}{h}}$,
$$
\frac{N''(s)}{D(s)}=\frac{1}{h^2}\,\E_{w_{s}}\!\big[\eta_\text{orig}(S)\,(2T(S)^2-1)\big],
\qquad
\frac{D''(s)}{D(s)}=\frac{1}{h^2}\,\E_{w_{s}}\!\big[2T(S)^2-1\big].
$$

From lemma \ref{lem:diff-integral} and lemma \ref{lem:k-derivative},  

$$
\frac{N'(s)}{D(s)}=\frac{1}{h}\,\E_{w_{s}}\![\eta_\text{orig}(S)T(S)],
\qquad
\frac{D'(s)}{D(s)}=\frac{1}{h}\,\E_{w_{s}}\![T(S)],
\qquad
\eta(s)=\frac{N(s)}{D(s)}=\E_{w_{s}}[\eta_\text{orig}(S)].
$$

We now use the quotient rule for the second derivative:

$$
\eta''=\frac{N''D-ND''}{D^2}-2\,\frac{(N'D-ND')D'}{D^3}
=\Big(\tfrac{N''}{D}-\eta\,\tfrac{D''}{D}\Big)
-2\Big(\tfrac{N'}{D}-\eta\,\tfrac{D'}{D}\Big)\tfrac{D'}{D}.
$$
Substituting the four identities above yields
\begin{equation*}
    \begin{aligned}
        \eta''(s)
 & =\frac{1}{h^2}\Big(\E_{w_{s}}[\eta_\text{orig}(S)(2T(S)^2\!-\!1)]-\E_{w_{s}}[\eta_\text{orig}(S)]\;\E_{w_{s}}[2T(S)^2\!-\!1]\Big) \\
 & -\frac{2}{h^2}\Big(\E_{w_{s}}[\eta_\text{orig}(S) T(S)]-\E_{w_{s}}[\eta_\text{orig}(S)]\;\E_{w_{s}}[T(S)]\Big)\E_{w_{s}}[T(S)].
    \end{aligned}
\end{equation*}

Recognizing covariances,

$$
\E[\eta(2T(S)^2\!-\!1)]-\E[\eta_\text{orig}(S)]\E[2T(S)^2\!-\!1]=\Cov(\eta_\text{orig}(S),\,2T(S)^2-1),
$$
$$
\E[\eta_\text{orig}(S) T(S)]-\E[\eta_\text{orig}(S)]\E[T(S)]=\Cov(\eta_\text{orig}(S),\,T(S)),
$$

we obtain

$$
\eta''(s)
=\frac{1}{h^2}\Big(
\Cov_{w_{s}}\!\big(\eta_\text{orig}(S),\,2T^2(S)-1\big)
-2\,\E_{w_{s}}[T(S)]\;\Cov_{w_{s}}\!\big(\eta_\text{orig}(S),\,T(S)\big)
\Big).
$$
\end{proof}

\begin{corollary}[Uniform second-derivative bound]\label{thm:second-deriv-bound}
For the \emph{sech} kernel and any bandwidth \(h>0\),
\[
\sup_{s\in[0,1]}\abs{\eta''(s)}
\;\le\; \frac{3}{2}\,\frac{\tanh^2(1/h)}{h^2}
\;\le\; \frac{3}{2}\,\frac{1}{h^2}.
\]
\end{corollary}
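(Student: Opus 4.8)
The plan is to bound $\abs{\eta''(s)}$ uniformly in $s$ by starting from the covariance representation in Lemma~\ref{lemma:eta-tilde-second-cov}, which already discharges all the analytic subtleties (differentiation under the integral sign, positivity of the denominator, well-definedness of the ratios). Write $T(S)=\tanh\paren*{\frac{S-s}{h}}$ and set $\theta:=\tanh(1/h)$ for brevity. The one elementary observation driving everything is that for $s,S\in[0,1]$ we have $\abs{S-s}\le 1$, so by the monotonicity and oddness of $\tanh$ we get $T(S)\in[-\theta,\theta]$, and hence $T(S)^2\in[0,\theta^2]$ and $2T(S)^2-1\in[-1,\,2\theta^2-1]$.

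Next I would bound the two covariances appearing in Lemma~\ref{lemma:eta-tilde-second-cov} exactly as in the proof of Corollary~\ref{thm:uniform-bound}: combine Cauchy--Schwarz, $\abs{\Cov(X,Y)}\le\sqrt{\Var(X)\Var(Y)}$, with Popoviciu's inequality $\Var(Z)\le (b-a)^2/4$ for $Z\in[a,b]$. Since $\eta_\text{orig}(S)\in[0,1]$ we get $\Var_{w_s}(\eta_\text{orig}(S))\le\tfrac14$. For the first covariance, $2T(S)^2-1$ lies in an interval of width $2\theta^2$, so $\Var_{w_s}(2T(S)^2-1)\le\theta^4$ and therefore $\abs{\Cov_{w_s}\paren*{\eta_\text{orig}(S),\,2T(S)^2-1}}\le\sqrt{\tfrac14\theta^4}=\tfrac{\theta^2}{2}$. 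For the second covariance, $T(S)\in[-\theta,\theta]$ gives $\Var_{w_s}(T(S))\le\theta^2$, hence $\abs{\Cov_{w_s}\paren*{\eta_\text{orig}(S),\,T(S)}}\le\tfrac{\theta}{2}$; and trivially $\abs{\E_{w_s}[T(S)]}\le\theta$.

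Plugging these three estimates into the identity of Lemma~\ref{lemma:eta-tilde-second-cov} and applying the triangle inequality yields $\abs{\eta''(s)}\le\tfrac{1}{h^2}\paren*{\tfrac{\theta^2}{2}+2\cdot\theta\cdot\tfrac{\theta}{2}}=\tfrac{3}{2}\tfrac{\theta^2}{h^2}=\tfrac{3}{2}\tfrac{\tanh^2(1/h)}{h^2}$, uniformly in $s\in[0,1]$; the final inequality in the statement follows from $\tanh(1/h)\le 1$.

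The only point requiring a little care --- and which I expect to be the lone subtlety rather than a real obstacle --- is tracking the range of $2T(S)^2-1$ so as to obtain the $\tanh^2(1/h)$ factor rather than a mere constant: although $2T^2-1$ a priori lies in $[-1,1]$, the restriction $\abs{S-s}\le 1$ shrinks its range to $[-1,\,2\theta^2-1]$, of width $2\theta^2$, and it is precisely this width that propagates through Popoviciu's inequality into the final constant. Everything else is a mechanical combination of Cauchy--Schwarz, Popoviciu, and the bound $\abs{T(S)}\le\tanh(1/h)$, so no concentration or distributional arguments are needed.
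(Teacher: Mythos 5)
Your proposal is correct and follows essentially the same route as the paper: it starts from the covariance identity of Lemma~\ref{lemma:eta-tilde-second-cov}, bounds both covariances via Cauchy--Schwarz combined with Popoviciu's inequality (using $\eta_\text{orig}(S)\in[0,1]$, $T(S)\in[-\tanh(1/h),\tanh(1/h)]$, and $2T(S)^2-1\in[-1,\,2\tanh^2(1/h)-1]$), bounds $\abs{\E_{w_s}[T(S)]}\le\tanh(1/h)$, and assembles the terms into $\tfrac32\tanh^2(1/h)/h^2$. The range-tracking of $2T^2-1$ that you flag as the one subtle point is exactly the step the paper also uses to retain the $\tanh^2(1/h)$ factor, so nothing is missing.
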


\begin{proof}
Let \(\tau:=\tanh(1/h)\). Then \(T(S) := \tanh\Big( \frac{S-s}{h} \Big)\in[-\tau,\tau]\).
Hence, by Popoviciu's inequality,
\[
\Var_{w_{s}}(T(S))\le \frac{(2\tau)^2}{4}=\tau^2,
\quad
2T(S)^2-1\in[-1,\,2\tau^2-1]\;\Rightarrow\;
\Var_{w_{s}}(2T(S)^2-1)\le \frac{(2\tau^2)^2}{4}=\tau^4.
\]
Since \(\eta_\text{orig}(S)\in[0,1]\), \(\Var(\eta_\text{orig}(S))\le 1/4\).
Using Cauchy--Schwarz,
\[
\abs{\Cov(\eta,2T^2-1)}\le \sqrt{\Var(\eta)\Var(2T^2-1)}\le \tfrac12\,\tau^2,
\qquad
\abs{\Cov(\eta,T)}\le \sqrt{\Var(\eta)\Var(T)}\le \tfrac12\,\tau,
\]
and \(\abs{\E[T]}\le \tau\). Therefore, lemma~\ref{lemma:eta-tilde-second-cov} gives
\[
\abs{\eta''(s)}
\;\le\; \frac{1}{h^2}\Big(\tfrac12\,\tau^2 + 2\,\tau\cdot \tfrac12\,\tau\Big)
\;=\; \frac{3}{2}\,\frac{\tau^2}{h^2}
\;\le\; \frac{3}{2}\,\frac{1}{h^2}.
\]
\end{proof}

\subsection{Why not a truncated Gaussian?}
\label{appdx-why-not-gaussian}

Another plausible choice for the kernel would have been the (truncated) Gaussian. Let $\phi(t)=\frac{1}{\sqrt{2\pi}}e^{-t^2/2}$ and $\Phi(t)=\int_{-\infty}^t \phi(u)\,du$ be the standard normal pdf and cdf. Fix a bandwidth $h>0$. For each $s_\text{orig}\in[0,1]$ define the \emph{truncated Gaussian} density on $[0,1]$ centered at $s_\text{orig}$,
\begin{align}
Z(s_\text{orig},h) 
&\;=\; \Phi\!\paren*{\frac{1-s_\text{orig}}{h}} - \Phi\!\paren*{\frac{-s_\text{orig}}{h}}
\;\in\;(0,1),\\
k(s \mid s_\text{orig})
&\;=\;\frac{1}{h\,Z(s_\text{orig},h)}\,\phi\!\paren*{\frac{s-s_\text{orig}}{h}}\,. \label{eq:trunc-gauss}
\end{align}

However, in this paper we decided not to use such a kernel because of the rate of its derivative. In particular, we have that:
\[
\frac{\partial}{\partial s} k(s \mid s_\text{orig})
\;=\; -\,\frac{s-s_\text{orig}}{h^2}\,k(s \mid s_\text{orig}) = \mathcal{O}
\left(\frac{1}{h^2}\right).
\]

By following a similar reasoning to the one done for the sech kernel (using the dominated convergence theorem, representing the derivative of $\eta$ as a covariance, and bounding the covariance), we would have got a bound in the order of $\mathcal{O}
\left(\frac{1}{h^2}\right)$ for the first derivative, which is worse than the one we obtained for the sech kernel ($\mathcal{O}
\left(\frac{1}{h}\right)$). For this theoretical reason, we opted for the more suitable sech kernel.

\section{Implementation Details}
\paragraph{Plug-in bandwidth.}
We pick $h'$ (the bandwidth of $k'$) by minimizing a one-dimensional proxy built from the $h'$-dependent terms in the smoothing-error bound (Lemma \ref{lemma-smoothing-error}): the bias terms are
$b_1 \sum_{i\in T} w_i(s')\,|s'-s_i|$ and $\tfrac{b_2}{2}\sum_{i\in T} w_i(s')\,(s'-s_i)^2$, and the stochastic term is $\tfrac12\sqrt{\sum_{i\in T} w_i(s')^2}$.
Approximating these sums by their continuous Epanechnikov-kernel moments for locally uniform score density (ignoring boundary effects), with
$K(u)=\tfrac34(1-u^2)\mathbf{1}\{|u|\le 1\}$ and $u=(s'-s_i)/h'$, yields
$\sum_i w_i|s'-s_i| \approx h'\!\int |u|K(u)\,du = \tfrac{3}{8}h'$ and
$\sum_i w_i(s'-s_i)^2 \approx h'^2\!\int u^2 K(u)\,du = \tfrac{1}{5}h'^2$, hence
$a=\tfrac{3}{8}b_1$ and $b=\tfrac{1}{10}b_2$ in the proxy
\[
f(h') = a\,h' + b\,{h'}^{2} + \frac{c}{\sqrt{h'}}.
\]
For the stochastic term we use the standard approximation $\sqrt{\sum_i w_i^2}\approx \sqrt{\|K\|_2^2/(|T|h')}$ (with $\|K\|_2^2=\int K(u)^2\,du=3/5$ for Epanechnikov) and take a mildly conservative constant in implementation, giving $c=\tfrac{1.15}{2\sqrt{2|T|}}$.
Setting $f'(h')=0$ gives $a+2b h' - \tfrac12 c\,{h'}^{-3/2}=0$; multiplying by ${h'}^{3/2}$ and substituting $t=\sqrt{h'}$ yields the quintic
$2b\,t^{5}+a\,t^{3}-\tfrac12 c=0$, and we recover the bandwidth as $h'=t^{2}$. We solve by a few Newton steps and clip to $[10^{-4},1/4]$. This choice is an implementation detail; the theory only requires convex weights summing to one.

\paragraph{Validation subsampling}
For speed we evaluate the held-out quantities on a uniformly sampled subset $V_k'\subseteq V_k$ per fold, with size $\min\{15000,\max(5000,\,0.05\,n)\}$. Each validation point is still scored by a model trained without it, so the conditional independence and i.i.d.\ assumptions required by the empirical-Bernstein step remain valid; subsampling only increases variance (slightly loosening the bound) and does not alter $\delta$–allocation or correctness.

\paragraph{Shift aggregation in bucketing.}
Here, ``bucketing'' means uniform bins in score space: for a given bucket count $B$ we use an (approximately) equal-width grid over $[0,1]$ with bin width $\approx 1/B$, and the ``shift'' moves the internal bin boundaries by a small fraction of one bin width (while keeping the endpoints at $0$ and $1$). This is not quantile/equal-count binning. We then evaluate the Lipschitz+bucketing bound for every candidate partition defined by a bucket count $B$ and a fractional shift \(r\in\{0,\dots,R\!-\!1\}\) (offset \(r/B\)); using a per-candidate confidence level \(\delta/(BR)\), we take the \emph{minimum} bound over all \((B,r)\). By a union bound, this aggregate is valid at overall level \(\delta\).

\section{Implementation of the Perturbation}
\label{appdx-pert-impl}
One advantage of our perturbation method is that, unlike bucketing, it is compatible with training by back-prop. This means we can perturb during classifier training, increasing resilience to perturbation at test time. To do that we change the standard cross entropy loss to.

\begin{equation}
\label{modified-cross-entropy}
L = - \frac{1}{N} \left[\sum_{i=1}^N y_i \mathbb{E} \log{s_{1, i}} + (1 - y_i) \mathbb{E} \log s_{0,i}\right] + L_m,
\end{equation}
Where $y_i$ are the ground-truth binary labels, $\mathbb{E} \log s_{1,i}$ and $\mathbb{E} \log s_{0,i}$ are the expectation of the two log perturbed probabilities, computed using Gauss Legendre quadrature in practice. 

The loss $L_m$ directly compares the logits of pairs of samples across classes, ensuring class separation. Specifically, for each example $i$ in a batch, we first consider the logit difference 
\[
z_{i} = \hat{l}_1 - \hat{l}_0,
\]
where $\hat{l}_1$ and $\hat{l}_0$ denote the logits for the positive and negative classes, respectively. We now restrict attention to near pairs $(i,j)$, defined by the condition $| s^i - s^j | < 2h$ i.e. where the probabilities are close. For each such pair, the margin gap is given by
\[
g_{ij} = z^{i} - z^{j}.
\]
The loss penalizes cases where this margin is smaller than the target $2h$, using a smooth softplus penalty. Formally, the loss is defined as
\[
L_m = \frac{1}{|C|} \sum_{(i,j) \in \mathcal{N}} \log\!\left( 1 + \exp\!\big(2h - g_{ij}\big)\right),
\]
where $C$ is the set of near positive–negative pairs. This formulation encourages the logit difference of positive samples relative to negatives to be at least $2h$, thereby pushing the classifier towards maintaining a larger decision margin.

\section{Datasets}
\label{appdx-datasets}

We perform perturbation experiments on three binary classification tasks:
\begin{itemize}
    \item \textbf{IMDb:} A text data set of movie reviews with associated binary labels for sentiment (positive and negative) \citep{imdb}. For this task we fined-tuned a pre-trained BERT back-bone with a sequence classification head. We used $1,000$ training examples, $250$ validation examples and $500$ test examples to compute reported metrics.
    \item \textbf{Spam Detection:} A text dataset of emails labeled as either `spam' or `not-spam' \cite{spam_detection}. As for the IMDb data set, we fined-tuned a pre-trained BERT back-bone with a sequence classification head, using $1,000$ training examples, $250$ validation examples and $500$ test examples.
    \item \textbf{CIFAR-10:} An image data set containing $10$ distinct classes: airplane, automobile, bird, cat, deer, dog, frog, horse, ship and truck \citep{cifar-10}. We make a binary task on this dataset by grouping means of transport (airplane, automobile, ship, truck) and animals (bird, cat, deer, dog, frog, horse). We fine-tune a pre-trained visual transformer (ViT) with a classification head. We use $2500$ training examples, $250$ validation examples and $500$ test examples.
    \item \textbf{Amazon Polarity:} A large-scale text dataset of Amazon product reviews labeled with binary sentiment (positive and negative) \citep{mcauley2013amateurs}.
    \item \textbf{Phishing:} A dataset of websites' URLs with binary phishing and not-phishing labels \citep{phishing}. We fine-tuned a BERT-based classifier model with $10000$ training examples and $250$ validation examples. We then use $500000$ examples as test.
    \item \textbf{Civil Comments:} A large data set of text comments from an archive of the Civil Comments platform; a commenting plugin for independent news sites \citep{comments}. In the original set, the comments have a series of scores assigned to them, delving into different aspects of harmfulness and toxicity. We make a binary data set based on the parent toxicity score; if this score is greater than $0$, the label is $1$ (toxicity present) and $0$ otherwise. As for the phishing data set, we use $10,000$ training examples and $250$ validation examples to fine-tune a BERT-based text classifier. We use on $3.8$M as test set.
    \item \textbf{Yelp Polarity:} A data set of Yelp reviews, where the star ratings have been binarised ($\geq 3$ stars is considered as positive) \citep{yelp_polarity}. We fine-tuned a BERT-based classifier model with $10000$ training examples and $250$ validation examples. We then use $560000$ examples as test.
\end{itemize}

\section{Details on the Approximation of The Calibration Function}
\begin{figure}
    \centering
    \includegraphics[width=0.32\linewidth]{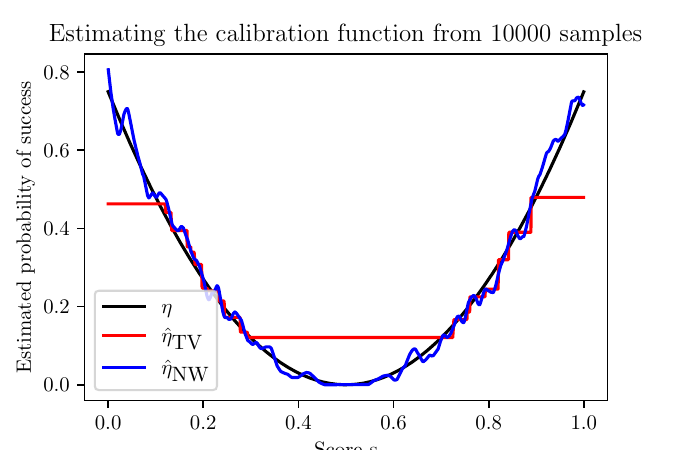}
    \includegraphics[width=0.32\linewidth]{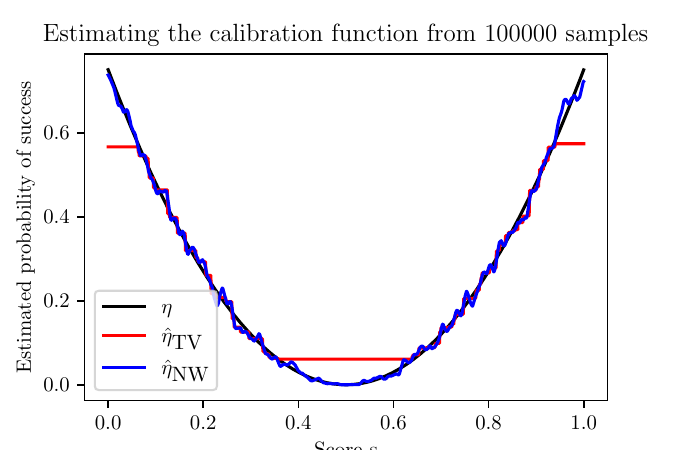}
    \includegraphics[width=0.32\linewidth]{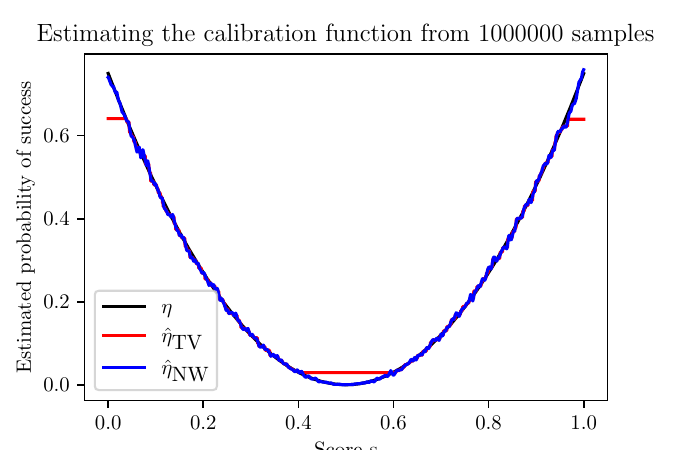}
    \caption{Reconstruction of the calibration error for various dataset sizes.}
    \label{fig:reconstruction}
\end{figure}
Figure \ref{fig:reconstruction} shows the reconstruction of the true calibration function $\eta$ (black) obtained using the NW (blue) and TV (red) techniques, for various sizes of the dataset. Note that increasing the dataset size increases reconstruction fidelity. Also note that the TV denoiser is regularized towards a constant function, meaning it achieves limited fidelity near the boundary regions and the bottom of the parabola.\footnote{Note that the effect diminishes with increased dataset size, as can be expected given the estimator is consistent.}

\section{LLM Usage}
\label{appdx-llm}
In accordance with the ICLR LLM usage policy, we disclose that large language models (LLMs) were used as tools during this project. First, LLMs were employed to draft concise \emph{code prototypes from mathematical descriptions} (e.g., turning a specification of the bounds into runnable Python). All such code was subsequently scrutinized by the authors; the final implementations and experimental results are the product of human verification. Second, LLMs were used in an \emph{iterative ideation loop for conceptualizing proofs}. The final version of formal statements, proofs, as well as the entirety of the paper’s exposition were written by humans. Importantly, the paper’s \emph{central idea}—that injecting noise yields a \emph{smooth} calibration function enabling tighter bounds—was conceived by a human author; LLMs did not originate this insight. LLMs were also used to polish the writing and as tools to find relevant related work.

\end{document}